\newtheorem{thm}{Theorem}
\newtheorem{lemma}[thm]{Lemma}
\newtheorem{pro}[thm]{Proposition}
\newtheorem{corollary}[thm]{Corollary}
\newtheorem{rem}{Remark}
\newtheorem{Exa}{Example}[section]
\newtheorem{as}{Assumption}
\newtheorem{alg}{Algorithm}
\newcommand{\be}{\begin{equation}}
\newcommand{\ee}{\end{equation}}
\newcommand{\bea}{\begin{eqnarray*}}
\newcommand{\eea}{\end{eqnarray*}}
\newcommand{\bflalign}{\begin{flalign*}}
\newcommand{\eflalign}{\end{flalign*}}
\newcommand{\mR}{\mathbb{R}}
\newcommand{\mN}{\mathbb{N}}
\newcommand{\mE}{\mathbb{E}}
\newcommand{\mcE}{{\mathcal{E}}}
\newcommand{\mcS}{\mathcal{S}}
\newcommand{\mcN}{\mathcal{N}}
\newcommand{\mcX}{\mathcal{X}}
\newcommand{\mcV}{\mathcal{V}}
\newcommand{\bz}{{\bf z}} 
\newcommand{\bx}{{\bf x}} 
\newcommand{\by}{{\bf y}}
\newcommand{\bK}{{\bf K}}
\newcommand{\tr}{\operatorname{tr}}
\newcommand{\la}{\langle}
\newcommand{\ra}{\rangle}
\newcommand{\eref}[1] {(\ref{#1})}
\newcommand{\TK}{\mathcal{T}} 
\newcommand{\TKL}{\mathcal{T}_{\PRegPar}} %
\newcommand{\TXL}{\mathcal{T}_{{\bf x}\PRegPar}}
\newcommand{\TKLL}{\mathcal{T}_{\lambda}}
\newcommand{\LK}{\mathcal{L}}
\newcommand{\IK}{\mathcal{S}_{\rho}}
\newcommand{\TX}{\mathcal{T}_{\bf x}}
\newcommand{\SX}{\mathcal{S}_{\bf x}}
\newcommand{\SXS}{\mathcal{S}_{\bf \tilde{x}}} 
\newcommand{\TXS}{\mathcal{T}_{\bf \tilde{x}}}
\newcommand{\HK}{H}
\newcommand{\HR}{H_{\rho}}
\newcommand{\LR}{L^2_{\rho_X}}
\newcommand{\HKS}{S}
\newcommand{\st}{P}
\newcommand{\proj}{P}
\newcommand{\GL}{\mathcal{G}_{\lambda}} 
\newcommand{\RL}{\mathcal{R}_{\lambda}} 
\newcommand{\FR}{f_{\rho}}
\newcommand{\FH}{f_{\HK}}
\newcommand{\DZF}{\Delta_1} 
\newcommand{\DZS}{\Delta_2} %
\newcommand{\DZT}{\Delta_3} 
\newcommand{\DZN}{\Delta_4} 
\newcommand{\DZI}{\Delta_5}
\newcommand{\RegPar}{\lambda}
\newcommand{\PRegPar}{{\lambda}}
\newcommand{\ESRA}{\omega_{\lambda}^{{\bf z}}} 
\newcommand{\EPSRA}{\omega_{\lambda}} 
\newcommand{\EESRA}{f_{\lambda}^{\bz}}
\newcommand{\Outputs}{{\bf y}}
\newcommand{\mcEE}{\widetilde{\mathcal{E}}}
\newcommand{\ao}{\mathcal{U}} 
\newcommand{\aoB}{\mathcal{V}} 
\newcommand{\TA}{{\bf Term.A}} 
\newcommand{\TB}{{\bf Term.B}} 
\newcommand{\skt}{{\bf G}} 
\newcommand{\ba}{{\bf a}}
\newcommand{\mcNx}{\mathcal{N}_{\bx}(\lambda)} 
\newcommand{\aol}{\ao_{\lambda}}
\newcommand{\aoBl}{\aoB_{\lambda}}
\newcommand{\bby}{\bar{\by}}
\newcommand{\NOutputs}{\bby}
\newcommand{\mP}{\mathbb{P}}
\icmltitlerunning{Optimal Rates of Sketched-regularized Algorithms for Least-Squares Regression over Hilbert Spaces}
\begin{document}
	
	\twocolumn[
	\icmltitle{Optimal Rates of Sketched-regularized Algorithms for Least-Squares Regression over Hilbert Spaces}
	
	
	
	\icmlsetsymbol{equal}{*}
	
	\begin{icmlauthorlist}
		\icmlauthor{Junhong Lin}{to}
		\icmlauthor{Volkan Cevher}{to}
	\end{icmlauthorlist}
	
	\icmlaffiliation{to}{ 
		Laboratory for Information and Inference Systems,
		\'{E}cole Polytechnique F\'{e}d\'{e}rale de Lausanne,
		Lausanne, Switzerland}
	\icmlcorrespondingauthor{Junhong Lin}{junhong.lin@epfl.ch}
	\icmlcorrespondingauthor{Volkan Cevher}{volkan.cevher@epfl.ch}
	
	\icmlkeywords{Machine Learning, ICML}
	
	\vskip 0.3in
	]
	
	
	
	\printAffiliationsAndNotice{}  
	
	\begin{abstract}
		We investigate regularized algorithms combining with projection for least-squares regression problem over a Hilbert space, covering nonparametric regression over a reproducing kernel Hilbert space. We prove convergence results with respect to variants of norms, under a capacity assumption on the hypothesis space and a regularity condition on the target function.  
		As a result, we obtain optimal rates for regularized algorithms with randomized sketches, provided that the sketch dimension  is proportional to the
		effective dimension up to a logarithmic factor. As a byproduct, we obtain similar results for Nystr\"{o}m regularized algorithms. Our results provide optimal, distribution-dependent rates  that  do not have any saturation effect for sketched/Nystr\"{o}m regularized algorithms, considering both the attainable and  non-attainable cases. 
	\end{abstract}

  \section{Introduction}\label{sec:introduction}

Let the input space $\HK$ be a separable Hilbert space with inner product denoted by
$\la \cdot, \cdot \ra_{\HK}$, and the output space $\mR$. 
Let $\rho$ be an unknown probability measure on $\HK \times \mR$. 
In this paper, we study the following expected risk minimization,
\be\label{expectedRiskA} 
\inf_{\omega \in \HK} \mcEE(\omega), \quad \mcEE(\omega) = \int_{\HK\times \mR} ( \la \omega, x \ra_{\HK} - y)^2 d\rho(x,y),
\ee
where the measure  $\rho$ is  known only through
a sample $\bz =\{z_i=(x_i, y_i)\}_{i=1}^n$ of size $n\in\mN$, independently and identically distributed (i.i.d.) according to $\rho$.

The above regression setting covers nonparametric regression over a reproducing kernel Hilbert space \cite{cucker2007learning,steinwart2008support}, and  it is close to functional regression \cite{ramsay2006functional} and linear inverse problems \cite{engl1996regularization}.  A basic algorithm for the problem is ridge regression, and its generalization, spectral algorithm. Such algorithms can be viewed as solving an empirical, linear equation with the empirical covariance operator replaced by a regularized one, see \cite{caponnetto2006adaptation,bauer2007regularization,gerfo2008spectral,lin2018optimal} and references therein.  Here, the regularization is used to control the complexity of the solution to against over-fitting and to achieve best generalization ability. 

The function/estimator generated by classic regularized algorithm is in the subspace $\overline{span\{\bx\}}$ of $\HK$, where $\bx=\{x_1,\cdots, x_n \}.$ 
More often, the search of an estimator for some specific algorithms
is restricted to a different (and possibly smaller) subspace $\HKS$, which leads to regularized algorithms with projection.  Such approaches have computational advantages in nonparametric regression with kernel methods \cite{williams2000using,smola2000sparse}. Typically, with a subsample/sketch dimension $m<n$, $\HKS = \overline{span\{ \tilde{x}_j : 1\leq j\leq m\}}$ where $\tilde{x}_j $ is chosen randomly from the input set $\bx$, or 
$\HKS = \overline{span\{ \sum_{j=1}^m G_{ij} x_j : 1\leq i\leq m\}}$ where $\skt = [G_{ij}]_{1\leq i\leq m, 1\leq j\leq n}$ is a general randomized matrix whose rows are drawn according to a distribution. 
The resulted algorithms are called Nystr\"{o}m regularized algorithm and sketched-regularized algorithm, respectively.

Our starting points of this paper are recent papers \cite{bach2013sharp,el2014fast,yang2015randomized,rudi2015less,myleiko2017regularized} where convergence results on Nystr\"{o}m/sketched regularized algorithms for learning with kernel methods are given.
Particularly, within the fixed design setting, i.e., the input set $\bx$ are deterministic while the output set $\by=\{y_1,\cdots,y_n\}$ treated randomly, convergence results have been derived, in \cite{bach2013sharp,el2014fast} for Nystr\"{o}m ridge regression and in \cite{yang2015randomized} for sketched ridge regression.  Within the random design setting (which is more meaningful \cite{hsu2014random} in statistical learning theory) and involving a regularity/smoothness condition on the target function \cite{smale2007learning}, optimal statistical results on generalization error bounds (excess risks) have been obtained in \cite{rudi2015less} for Nystr\"{o}m ridge regression.  The latter results were further generalized in \cite{myleiko2017regularized} to a general Nystr\"{o}m regularized algorithm. \\
Although results have been developed for sketched ridge regression in the fixed design setting, it is still unclear if one can get statistical results for a general sketched-regularized algorithms in the random design setting. Besides, all the derived results, either for sketched or Nystr\"{o}m regularized algorithms,  are only for the attainable case, i.e., the case that the expected risk minimization \eqref{expectedRiskA} has at least one solution in $\HK$. Moreover, they saturate \cite{bauer2007regularization} at a critical value, meaning that they can not lead to better convergence rates even with a smoother target function.
Motivated by these, in this paper, we study statistical results of projected-regularized algorithms for least-squares regression over a separable Hilbert space within the random design setting.  

We first extend the analysis in \cite{lin2018optimal} for classic-regularized algorithms to projected-regularized algorithms, and prove statistical results with respect to a broader class of norms.  We then show that optimal rates can be retained for sketched-regularized algorithms, provided that the sketch dimension is proportional to the effective dimension \cite{zhang2006learning} up to a logarithmic factor. As a byproduct, we obtain similar results for Nystr\"{o}m regularized algorithms. 

Interestingly, our results are the first ones with optimal, distribution-dependent rates that do not have any saturation effect for sketched/Nystr\"{o}m regularized algorithms, considering both the attainable and non-attainable cases.  In our proof, we naturally integrate  proof techniques from \cite{smale2007learning,caponnetto2007optimal,rudi2015less,myleiko2017regularized,lin2018optimal}. 
Our novelties lie in a new estimates on the projection error for sketched-regularized algorithms, a novel analysis to conquer the saturation effect, and a refined analysis for Nystr\"{o}m regularized algorithms, see Section \ref{sec:proof} for details.

The rest of the paper is organized as follows. Section \ref{sec:learning} introduces some auxiliary notations and projected-regularized algorithms. Section \ref{sec:conve} present assumptions and our main results, followed with simple discussions.
Finally, Section \ref{sec:proof} gives the proofs of our main results.

\section{Learning with Projected-regularized Algorithms} \label{sec:learning}
In this section, we introduce some notations as well as auxiliary operators, and present projected-regularized algorithms.
\subsection{Notations and Auxiliary Operators}
Let  $Z = \HK \times \mR$, $\rho_X(\cdot)$  the induced marginal measure on $\HK$ of $\rho$, and $\rho(\cdot | x)$ the conditional probability measure on $\mR$ with respect to $x \in \HK$ and $\rho$. 
For simplicity, we assume that the support of $\rho_X$ is compact and that there exists a constant $\kappa \in [1,\infty[$, such that
\be\label{boundedKernel} \la x,x' \ra_{\HK} \leq \kappa^2, \quad \forall x,x'\in \HK,   \rho_X\mbox{-almost every}.
\ee

Define the hypothesis space $\HR = \{f: \HK \to  \mR| \exists \omega \in \HK \mbox{ with } f(x) = \la \omega, x \ra_{\HK}, \rho_X \mbox{-almost surely}\}.$  Denote $\LR$ the Hilbert space of square integral functions from $\HK$ to $\mR$ with respect to $\rho_X$, with its norm given by $\|f\|_{\rho} = \left(\int_{\HK} |f(x)|^2 d \rho_X\right)^{1\over 2}.$ 

For a given bounded operator $L: \LR \to \HK, $ $\|L\|$ denotes the operator norm of $L$, i.e., $\|L\| = \sup_{f\in \LR, \|f\|_{\rho}=1} \|Lf\|_{\HK}$.  Let $r \in \mN_+,$ the set $\{1,\cdots,r\}$ is denoted by $[r].$
For any real number $a$, $a_+ = \max(a,0)$, $a_- = \min(0,a)$.

Let $\IK: \HK \to \LR$ be the linear map $\omega \to \la \omega, \cdot \ra_{\HK}$, which is bounded by $\kappa$ under Assumption \eref{boundedKernel}. Furthermore, we consider the adjoint operator $\IK^*: \LR \to \HK$, the covariance operator $\TK: \HK \to \HK$ given by $\TK = \IK^* \IK$, and the integeral operator $\LK : \LR \to \LR$ given by $\IK \IK^*.$ It can be easily proved that $ \IK^*g = \int_{\HK} x g(x) d\rho_X(x), $ $\LK f = \int_{\HK} f(x) \la x, \cdot \ra_{\HK} d \rho_X(x)$
and $\TK = \int_{\HK} \la \cdot , x \ra_{\HK} x d \rho_X(x).$  
Under Assumption \eqref{boundedKernel},
the operators $\TK$ and $\LK$ can be proved to be positive trace class operators (and hence compact):
\be\label{eq:TKBound}
\begin{split}
\|\LK\| = \|\TK\| \leq \tr(\TK) = \int_{\HK} \tr(x \otimes x)d\rho_X(x) \\=  \int_{\HK} \|x\|_{\HK}^2 d\rho_{X}(x) \leq \kappa^2.
\end{split}
\ee
For any $\omega \in \HK$,
it is easy to prove the following isometry property \cite{bauer2007regularization},
\be\label{isometry}
\|\IK \omega \|_{\rho} = \|\sqrt{\TK} \omega\|_{\HK}.
\ee
Moreover, according to the singular value decomposition of a compact operator, one can prove that
\be\label{eq:rho2hk}
\|\LK^{-{1\over2 }}\IK \omega\|_{\rho} \leq \|\omega\|_{\HK}.
\ee
We define the (modified) sampling operator $\SX: \HK \to \mR^n$ by $(\SX \omega)_i = {1\over \sqrt{n}}\la \omega, x_i \ra_{\HK},$ $i \in [n]$, where the norm $\|\cdot\|_{\mR^n}$ in $\mR^n$ is the usual Euclidean norm.
Its adjoint operator $\SX^*: \mR^n \to \HK,$ defined by $\la \SX^*{\bf y}, \omega \ra_{\HK} = \la {\bf y}, \SX \omega\ra_{\mR^n}$ for ${\bf y} \in \mR^n$ is thus given by $\SX^*{\bf y} = {1 \over \sqrt{n}} \sum_{i=1}^n y_i x_i.$
For notational simplicity, we let
$\bby = {1\over \sqrt{|\by|}} \by.$
 Moreover, we can define the empirical covariance operator $\TX: \HK \to \HK$ such that $\TX = \SX^* \SX$. Obviously,
$
\TX = {1 \over n} \sum_{i=1}^n \la \cdot, x_i \ra_{\HK} x_i.
$
By Assumption \eqref{boundedKernel}, similar to \eref{eq:TKBound}, we have
\be\label{eq:TXbound}
\|\TX\| \leq \tr(\TX) \leq \kappa^2.
\ee

It is easy to see that Problem \eqref{expectedRiskA} is equivalent to 
\be\label{expectedRisk}
\inf_{f \in \HR} \mcE(f), \quad \mcE(f)= \int_{\HK \times \mR} ( f(x) - y)^2 d\rho(x,y),
\ee
The function that minimizes the expected risk over all measurable functions is the
regression function \cite{cucker2007learning,steinwart2008support}, defined as,
\be\label{regressionfunc}
f_{\rho}(x) = \int_{\mR} y d \rho(y | x),\qquad x \in \HK,  \rho_X\mbox{-almost every}.
\ee
A simple calculation shows that the following  well-known fact holds \cite{cucker2007learning,steinwart2008support}, for all  $f \in \LR,$
$
\mcE(f) - \mcE(\FR) = \|f - \FR\|_{\rho}^2.
$
Then it is easy to see that \eqref{expectedRisk} is equivalent to 
$
\inf_{f\in \HR} \|f - \FR\|_{\rho}^2.
$
Under Assumption \eqref{boundedKernel},  $\HR$ is a subspace of $\LR.$
Using the projection theorem, one can prove that
a  solution $\FH$ for the problem   \eqref{expectedRisk} is the projection of the regression function $f_{\rho}$ onto the closure of $\HR$ in $\LR$, and moreover,  for all $f\in \HR$ \cite{lin2017optimal},
\be\label{frFH}
\IK^* f_{\rho} = \IK^* \FH,
\ee
and
\be\label{eq:exceRisk}
\mcE(f) -  \mcE(\FH) = \|f - \FH\|_{\rho}^2.
\ee
{\it Note that  $\FH$ does not necessarily be in $\HR$}.

Throughput this paper, $\HKS$ is a closed, finite-dimensional subspace of $\HK$, and $\proj$ is the projection operator onto $S$ or $\proj = I$. 
\subsection{Projected-regularized Algorithms} \label{sec:spec}
In this subsection, we demonstrate and introduce projected-regularized algorithms. 

The expected risk $\mcEE(\omega)$ in \eref{expectedRiskA} can not be computed exactly.  It can be only approximated through the empirical risk $\mcEE_{\bz}(\omega),$ 
$\mcEE_{\bz}(\omega) = {1\over n} \sum_{i=1}^n ( \la \omega, x_i \ra_{\HK} - y_i)^2.
$  
A first idea to deal with the problem is to replace the objective function in \eqref{expectedRiskA} with the empirical risk. Moreover,  we restrict the solution to  the subspace $\HKS$. This leads to 
the projected empirical risk minimization,
$
\inf_{\omega \in \HKS} \mcEE_{\bz}(\omega).
$
Using $\proj^2 = \proj,$ a simple calculation shows that a solution for the above is given by
$\hat{\omega} = \st \hat{\alpha}$, with $\hat{\alpha}$ satisfying
$
\st \TX \st \hat{\alpha} = \st \SX^* \NOutputs.
$
Motivated by the classic (iterated) ridge regression, we replace $\st \TX \st $ with a regularized one, and thus leads to the following projected (iterated) ridge regression.

\begin{alg}
	\label{alg:dSpe}
	The projected (iterated) ridge regression algorithm of order $\tau$ over the samples $\bz$ and the subspace $\HKS$ is given by $\EESRA = \IK \ESRA$, where \footnote{Let $L$ be a self-adjoint, compact operator over a separable Hilbert space $\HK$. $\GL(L)$ is an operator on $L$ defined by spectral calculus: suppose that $\{(\sigma_i, \psi_i)\}_i$ is a set of
		normalized eigenpairs of $L$ with the eigenfunctions $\{\psi_i\}_i$ forming an orthonormal basis
		of $\HK$, then $\GL(\LK) = \sum_i \GL(\sigma_i) \psi_i \otimes \psi_i.$}
	\be\label{eq:ESRA}
	\ESRA = \st\GL(\st \TX \st)\st \SX^* \bby,\ \  \GL(u) = \sum_{i=1}^{\tau} \lambda^{i-1} (\lambda+u)^{-i} .
	\ee
\end{alg}

\begin{rem}
	1)  Our results not only hold for projected ridge regression, but also hold for a general projected-regularized algorithm, in which  $\GL$ is a general filter function. Given $\Lambda \subset \mR_+,$
 a class of functions $\{\GL: [0, \kappa^2] \to [0,\infty[, \lambda \in \Lambda \}$ are called filter functions with qualification $\tau $ ($\tau\geq 1$) if there exist some positive constants $E,F<\infty$ such that
	\be
	\label{eq:GLproper1}
	 \sup_{\lambda\in \Lambda}\sup_{u \in ]0,\kappa^2] } |\GL(u) (u+\lambda) |\leq E.
	\ee
	and 
	\be\label{eq:GLproper4}
	\sup_{\alpha\in [0, \tau]} \sup_{\lambda \in \Lambda}	\sup_{u\in]0, \kappa^2]} |1 - \GL(u)u|(u+\lambda)^{\alpha}\lambda^{-\alpha} \leq F .\ee
	2) A simple calculation shows that
	\be\label{eq:regfucB}
		\GL(u) = {1 - q^{\tau} \over u} = {\sum_{i=0}^{\tau-1}q^{i} \over u+\lambda },\quad q ={\lambda \over \lambda+u}.
	\ee
	Thus, $\GL(u)$ is a filter function with qualification $\tau$, $E=\tau$ and $F= 1.$
	When $\tau=1$, it is a filter function for classic ridge regression and
		the algorithm is projected ridge regression. \\
	3) Another typical filter function studied in the literature is
	$
	\GL(u) = u^{-1} \mathrm{1}_{\{u\geq \lambda\}},
	$
	which corresponds  to principal component (spectral cut-off) regularization. Here, $\mathrm{1}_{\{\cdot\}}$ denotes the indication function.
	In this case, $E=2$, $F = 2^{\tau}$ and $\tau$ could be any positive number.
\end{rem}
In the above,  $\lambda$ is a regularization parameter which needs to be well chosen in order to achieve best performance. Throughout this paper, we assume that ${1/n}\leq \lambda\leq 1.$

The performance of an estimator $\EESRA$ can be measured in terms of {\it excess risk} ({\it generalization error}), 
$
\mcE(\EESRA)  - \inf_{\HR} \mcE = \mcEE(\ESRA) - \inf_{\HK}\mcEE,
$
which is exactly $ \|\EESRA - \FH\|_{\rho}^2$ according to \eqref{eq:exceRisk}.
Assuming that $\FH \in \HR$, i.e., $\FH  = \IK \omega_*$ for some $\omega_*\in \HK$ (in this case, the solution with minimal $\HK$-norm for $\FH = \IK \omega$ is denoted by $\omega_{\HK}$), it can be measured in terms of $\HK$-norm,
$
\|\ESRA - \omega_{\HK}\|_{\HK},
$ 
which is closely related to $\|\LK^{-{1\over 2 }}\IK(\ESRA - \omega_{\HK})\|_{\HK} = \|\LK^{-{1\over 2 }}(\EESRA - \FH)\|_{\rho} $, according to \eqref{eq:rho2hk}.
In what follows, we will measure the performance of an estimator $\EESRA$ in terms of a broader class of norms,
$
\|\LK^{-a}(\EESRA - \FH)\|_{\rho},
$
where $a\in [0,{1\over 2}]$ is such that  $\LK^{-a} \FH$ is well defined. 
But one should keep in mind that all the derived results also hold if we replace $\|\LK^{-a}(\EESRA - \FH)\|_{\rho}$ with $\|\TK^{{1\over 2} - a}(\ESRA - \omega_{\HK})\|_{\HK}$ in the attainable case, i.e., $\FH \in \HR.$
We will report these results in a longer version of this paper.
Convergence with respect to different norms has its strong backgrounds in convex optimization, inverse problems, and statistical learning theory.  Particularly, convergence with respect to target function values and $\HK$-norm has been studied in convex optimization. Interestingly, convergence in $\HK$-norm can imply convergence in target function values (although the derived rate is not optimal), while the opposite is not true.  
\section{Convergence Results}\label{sec:conve}
In this section, we first introduce some basic assumptions and then present convergence results  for projected-regularized algorithms. Finally, we give results for sketched/Nystr\"{o}m regularized algorithms.

\subsection{Assumptions}
In this subsection, we introduce three standard assumptions made in statistical learning theory \cite{steinwart2008support,cucker2007learning,lin2018optimal}. The first assumption relates to a moment condition on the output value $y$.
\begin{as}\label{as:noiseExp}
	There exist positive constants $Q$ and $M$ such that for all $l \geq 2$ with $l \in \mN,$
	\be\label{noiseExp}
	\int_{\mR} |y| ^{l} d\rho(y|x) \leq {1 \over 2} l! M^{l-2} Q^2, 
	\ee
	$\rho_{ X}$-almost surely. 
\end{as}
Typically, the above assumption is satisfied if 
$y$ is bounded almost surely, or if $y = \la\omega_*,x\ra_{\HK} + \epsilon$, where  $\epsilon$ is a Gaussian random variable with zero mean and it is independent from $x$. Condition \eref{noiseExp} implies that the regression function is bounded almost surely, using the Cauchy-Schwarz inequality.

The next assumption relates to the regularity/smoothness of the target function $\FH.$
\begin{as}\label{as:regularity}
	$\FH$ satisfies  \be\label{eq:FHFR}
	\int_{\HK} (\FH(x) - \FR(x))^2 x \otimes x d \rho_X(x) \preceq B^2 \TK,
	\ee 
	and the following H\"{o}lder source condition
	\be\label{eq:socCon}
	\FH = \LK^{\zeta} g_0, \quad \mbox{with}\quad  \|g_0\|_{\rho} \leq R.
	\ee
	Here, $B,R,\zeta$ are non-negative numbers.
\end{as}

Condition \eqref{eq:FHFR} is trivially satisfied if $\FH - \FR$ is bounded almost surely. Moreover, when making a consistency assumption, i.e., $\inf_{\HR} \mcE = \mcE(\FR)$, as that in \cite{smale2007learning,caponnetto2006,caponnetto2007optimal,steinwart2009optimal}, for kernel-based non-parametric regression, it is satisfied with $B=0.$ Condition \eqref{eq:socCon} characterizes the regularity of the target function $\FH$ \cite{smale2007learning}. A bigger $\zeta$ corresponds to a higher regularity and a stronger assumption, and it can lead to a faster convergence rate. Particularly, when $\zeta\geq 1/2$, 
$\FH \in \HR$ \cite{steinwart2008support}. This means that the expected risk minimization \eref{expectedRiskA} has at least one solution  in $\HK$, which is referred to  as the attainable case.
\\
Finally, the last assumption relates to the capacity of the space $\HK$ ($\HR$).
\begin{as}\label{as:eigenvalues}
	For some $\gamma \in [0,1]$ and $c_{\gamma}>0$, $\TK$ satisfies
	\be\label{eigenvalue_decay}
	\tr(\TK(\TK+\lambda I)^{-1})\leq c_{\gamma} \lambda^{-\gamma}, \quad \mbox{for all } \lambda>0.
	\ee
\end{as}

The left hand-side of \eref{eigenvalue_decay} is called degrees of freedom \cite{zhang2006learning}, or effective dimension \cite{caponnetto2007optimal}.
Assumption \ref{as:eigenvalues} is always true for $\gamma=1$ and $c_{\gamma} =\kappa^2$, since
$\TK$ is a trace class operator.
This is referred to as the capacity independent setting.
Assumption \ref{as:eigenvalues} with $\gamma \in[0,1]$ allows to derive better  rates. It is satisfied, e.g.,
if the eigenvalues of $\TK$ satisfy a polynomial decaying condition $\sigma_i \sim i^{-1/\gamma}$, or with $\gamma=0$ if $\TK$ is finite rank.
\subsection{Results for Projected-regularized Algorithms}
We are now ready to state our first result as follows. Throughout this paper,  $C$  denotes a positive constant that depends only on $\kappa^2,c_{\gamma},\gamma,\zeta$ $B,M,Q,R,\tau$ and $\|\TK\|$, and it could be different at its each appearance. Moreover, we write $a_1 \lesssim a_2$ to mean
$a_1\leq C a_2$.

\begin{thm}\label{thm:projerr}
	Under Assumptions \ref{as:noiseExp}, \ref{as:regularity} and \ref{as:eigenvalues},
	let $\lambda = n^{\theta-1}$ for some $\theta\in[0,1]$, $\tau \geq \zeta$, and  $a\in [0,{1\over 2}\wedge \zeta]$.
	Then the following holds with probability at least $1-\delta$ ($0<\delta<1$). \\	
	1) If $\zeta \in [0, 1]$, 
	\begin{align}
	&\|\LK^{-a} ( \EESRA  - \FH) \|_{\rho}
	\lesssim   \lambda^{-a} \log^{2} {3 \over \delta} t_{\theta,n}^{1-a} \nonumber\\
	& \times  \  \left( \lambda^{\zeta}  +
	{1 \over \sqrt{n \lambda^{\gamma}} } +   \lambda^{\zeta-1} \big(\DZI + \DZI^{1-a}\lambda^{a}\big) \right). \label{eq:mnErrPI}
	\end{align}
	2) If $\zeta \geq 1$ and $\lambda\geq n^{-1/2},$
	\begin{align}
	&\|\LK^{-a}(\EESRA - \FH)\|_{\rho} 
	\lesssim  \lambda^{-a} \log^{2} {3 \over \delta} \nonumber \\ & \times   \left( 
	\lambda^{\zeta} + 
	{1 \over \sqrt{n \lambda^{\gamma}} }+ (\DZI + \lambda \DZI^{(\zeta-1)\wedge 1}+ \DZI^{1-a} \lambda^{a})  \right). \label{eq:mnErrPII} 
	\end{align}
	Here, $\DZI$ is the projection error $\|(I - \proj) \TK^{1\over 2}\|^2$ and 
	\be\label{eq:tthetan}
	t_{\theta,n} = [1 \vee (\theta^{-1}\wedge \log n^{\gamma})].
	\ee
\end{thm}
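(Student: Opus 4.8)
The plan is to compare $\EESRA = \IK\ESRA$ with a deterministic, sample-free surrogate and then bound separately the resulting \emph{sample error}, \emph{bias}, and \emph{projection error}. Set $\widetilde\omega_\lambda = \proj\,\GL(\proj\TK\proj)\,\proj\,\IK^*\FH$ and write
\begin{equation*}
\EESRA - \FH \;=\; \IK\bigl(\ESRA - \widetilde\omega_\lambda\bigr) \;+\; \bigl(\IK\widetilde\omega_\lambda - \FH\bigr) ,
\end{equation*}
the first summand being the sample error (driven by $\TX - \TK$ and by $\SX^*\bby - \IK^*\FH$), the second being deterministic. Two identities are used throughout to move between $\HK$- and $\LR$-norms: the intertwining relation $\IK\,h(\TK) = h(\LK)\,\IK$, so that $\|\LK^{-a}\IK\omega\|_\rho = \|\TK^{1/2-a}\omega\|_\HK$ by \eqref{isometry}, and $\IK^*\FR = \IK^*\FH$ from \eqref{frFH}; the analytic tools are the qualification/filter bounds \eqref{eq:GLproper1}--\eqref{eq:GLproper4}, a Cordes-type operator-monotonicity inequality, and a high-probability comparison of $(\TK+\lambda I)$ with $(\TX+\lambda I)$.

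I would first record the two random estimates, each costing a $\log(3/\delta)$ factor. A Bernstein inequality for self-adjoint operators together with Assumption~\ref{as:eigenvalues} gives, on an event of probability $\ge 1-\tfrac\delta3$, that $\|(\TK+\lambda I)^{1/2}(\TX+\lambda I)^{-1/2}\|^2 \lesssim t_{\theta,n}$, where $t_{\theta,n}$ of \eqref{eq:tthetan} is precisely the logarithmic factor governing concentration at scale $n\lambda^\gamma = n^\theta$; when $\lambda\ge n^{-1/2}$ this factor is an absolute constant, which is why part~2) carries no $t_{\theta,n}$. A vector Bernstein inequality applied to $\SX^*\bby - \IK^*\FH = \tfrac1n\sum_{i=1}^n\bigl(y_ix_i - \mE[yx]\bigr)$, using the moment bound of Assumption~\ref{as:noiseExp} and again Assumption~\ref{as:eigenvalues}, yields $\|(\TK+\lambda I)^{-1/2}(\SX^*\bby - \IK^*\FH)\|_\HK \lesssim \log(\tfrac3\delta)\,(n\lambda^\gamma)^{-1/2}$ (using $\lambda\le 1$), with a companion bound on the residual term through the variance inequality \eqref{eq:FHFR}. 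These account for the $\log^2(\tfrac3\delta)$ prefactor and the $1/\sqrt{n\lambda^\gamma}$ summand.

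Next I would treat the deterministic term. Peeling off the no-projection surrogate $\GL(\TK)\IK^*\FH$, intertwining plus \eqref{eq:GLproper4} with $\alpha = \zeta-a\in[0,\tau]$ (legitimate since $0\le a\le\zeta\le\tau$) gives the genuine bias $\|\LK^{-a}(\IK\GL(\TK)\IK^*\FH - \FH)\|_\rho = \|(I-\LK\GL(\LK))\LK^{\zeta-a}g_0\|_\rho \lesssim \lambda^{\zeta-a}$, i.e.\ the term $\lambda^{-a}\lambda^\zeta$. The difference between the projected and unprojected regularized solutions is then estimated by resolvent identities arranged so that every occurrence of $I-\proj$ stays adjacent to a factor $\TK^{1/2}$, after which everything is re-expressed through $(\TX+\lambda I)^{\pm1/2}$; this bounds it by products of powers of $\DZI = \|(I-\proj)\TK^{1/2}\|^2$ and of $\lambda$, and, because only $\min(\zeta,\tau)$ powers of $\TK$ from \eqref{eq:socCon} are available to cancel inverse regularizers, for $\zeta<1$ an extra $\lambda^{\zeta-1}$ is unavoidable --- precisely the $\lambda^{\zeta-1}(\DZI + \DZI^{1-a}\lambda^a)$ in \eqref{eq:mnErrPI}. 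For the sample error, a telescoping/resolvent expansion of $\ESRA - \widetilde\omega_\lambda$ writes it as a sum of terms each pairing one of the two random estimates above against an operator factor bounded via \eqref{eq:GLproper1}--\eqref{eq:GLproper4}; multiplying by $\LK^{-a}$ and using the operator inequality to convert $\TK^{1/2-a}$-weights into $(\TX+\lambda I)$-weights produces the prefactor $\lambda^{-a}$ and the power $t_{\theta,n}^{1-a}$ (the exponent $1-a$ being what the comparison of $\TK$- and $\TX$-weights costs: $1$ for the excess-risk norm $a=0$ and $\tfrac12$ for the $\HK$-norm $a=\tfrac12$). For $\zeta\ge1$ one reruns the bias, projection, and sample estimates after factoring a \emph{full} power of $\TK$ out of the source condition, writing $\FH = \LK(\LK^{\zeta-1}g_0)$ with $\|\LK^{\zeta-1}g_0\|_\rho\le\|\TK\|^{\zeta-1}R$ bounded, so that no negative power of $\lambda$ survives; the cross terms thereby created are lower order exactly when $\lambda\ge n^{-1/2}$, and the residual $\TK^{(\zeta-1)\wedge1}$-dependence yields the term $\lambda\DZI^{(\zeta-1)\wedge1}$. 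Collecting the bias, variance and projection contributions and simplifying with $1/n\le\lambda\le1$ and $\theta\in[0,1]$ gives \eqref{eq:mnErrPI}--\eqref{eq:mnErrPII}.

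I expect the crux to be the projection-error estimate: controlling $\proj\,\GL(\proj\TX\proj)\,\proj$ --- a filter function of the \emph{compressed} operator --- against $\GL(\TK)$ while paying only $\DZI = \|(I-\proj)\TK^{1/2}\|^2$ rather than $\|I-\proj\|$, which forces one to rearrange resolvents so that each $I-\proj$ stays next to a $\TK^{1/2}$ and then to track every $(\TX+\lambda I)^{\pm1/2}$ versus $(\TK+\lambda I)^{\pm1/2}$ conversion so that no spurious $\lambda^{-1/2}$ accumulates; the additional re-bracketing of the source condition needed to defeat the saturation when $\zeta\ge1$ compounds this.
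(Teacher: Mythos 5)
Your architecture is sound and lands on the right bound, but your decomposition is genuinely different from the paper's, and the difference matters at the hardest step. You compare $\ESRA$ with the surrogate $\widetilde\omega_\lambda = \proj\,\GL(\proj\TK\proj)\,\proj\,\IK^*\FH$, so your sample-error term contains the difference $\GL(\proj\TX\proj)-\GL(\proj\TK\proj)$ applied to $\proj\IK^*\FH$: a perturbation bound for the filter function between two \emph{compressed} operators. You dispatch this with ``a telescoping/resolvent expansion,'' but this is precisely the step the paper is engineered to avoid. The paper's surrogate is the \emph{unprojected} population function $\EPSRA=\GL(\TK)\IK^*\FH$; it splits off $(I-\proj)\EPSRA$ (paying $\DZI^{1-a}$ directly via Lemma~\ref{lem:operProd} and Lemma~\ref{lem:popFun}), and then bounds $\|\TXL^{1/2}(\ESRA-\proj\EPSRA)\|_{\HK}$ by two terms: one carrying the centered noise $\TXL^{-1/2}(\SX^*\bby-\TX\EPSRA)$ (packaged as the single concentration quantity $\DZS$ of Lemma~\ref{lem:samAppErr}, whose bound is already $\lambda^\zeta+(n\lambda^\gamma)^{-1/2}$), the other carrying the residual $\RL(\proj\TX\proj)$ acting on $\proj\EPSRA$. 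No functional calculus of $\proj\TK\proj$ is ever formed. Your route also generates cross terms (concentration of $\proj(\TX-\TK)\proj$ in $(\proj\TK\proj+\lambda)$-weighted norms against source-condition factors of $(\proj\TK\proj)$) whose combination into exactly $\lambda^\zeta+(n\lambda^\gamma)^{-1/2}+\lambda^{\zeta-1}(\DZI+\DZI^{1-a}\lambda^a)$ you assert rather than verify; I believe it can be pushed through for the iterated ridge filter, but it is substantially more work than you allot to it, and it is where I would expect your write-up to stall.

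Two smaller points. First, your surrogate is not ``deterministic, sample-free'': in every intended application $\proj$ depends on $\bx$ (and on the sketch), so $\proj\GL(\proj\TK\proj)\proj\IK^*\FH$ is random. This is reparable because your bound on that term should depend on $\proj$ only through $\DZI$ and needs no concentration, but you should say so; the paper's structure (a fully deterministic Proposition~\ref{pro:anaRes} in the five quantities $\DZF,\dots,\DZI$, followed by concentration for the four projection-free ones) makes this bookkeeping automatic. Second, your accounting of $t_{\theta,n}^{1-a}$ and of why part~2) drops it ($\lambda\ge n^{-1/2}$ forces $\theta\ge 1/2$, so $t_{\theta,n}\le 2$), your use of the qualification with $\alpha=\zeta-a$ for the true bias, and your re-bracketing of the source condition for $\zeta\ge 1$ to defeat saturation all match the paper's mechanism; the paper's $\zeta\ge1$ case additionally compares $\TX^{\zeta-1}$ with $(\TX^{1/2}\proj\TX^{1/2})^{\zeta-1}$ and $\TK^{\zeta-1/2}$ with $\TX^{\zeta-1/2}$ via Lemma~\ref{lem:operDiff}, which is where the terms $\lambda(\DZT+\DZI)^{(\zeta-1)\wedge1}$ and the requirement $\DZT\lesssim n^{-1/2}\le\lambda$ actually enter.
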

The above result provides high-probability error bounds with respect to variants of norms for projected-regularized algorithms. The upper bound consists of three terms. The first term depends on the regularity parameter $\zeta$, and it arises from estimating bias. The second term depends on the sample size, and it arises from estimating variance. The third term depends on the projection error. Note that there is a trade-off between the bias and variance terms.
Ignoring the projection error, solving this trade-off leads to the best choice on $\lambda$ and the following results.

\begin{corollary}\label{cor:1}
	Under the assumptions and notations of Theorem \ref{thm:projerr}, let $\lambda = n^{-{1 \over 1 \vee (2\zeta+\gamma)}}.$ Then the following holds with probability at least $1-\delta$. \\
	1) If $2\zeta+\gamma \leq 1,$ 
	\begin{align}
	& \|\LK^{-a} (\EESRA  - \FH) \|_{\rho}  \nonumber\\
	& \ \lesssim    n^{-(\zeta-a)} \left(1 + (\gamma \log n)^{1-a})(1 +   \lambda^{-1} \DZI \right) \log^{2} {3 \over \delta} .  \label{eq:terrP1}
	\end{align}	
	2) If $\zeta \in [0, 1]$ and $2\zeta+\gamma >1$, 
	\begin{align}
	\|\LK^{-a} (\EESRA  - \FH) \|_{\rho}  
	\lesssim    n^{-{\zeta-a\over 2\zeta+\gamma}} \left(1 +   \lambda^{-1}\DZI  \right) \log^{2} {3 \over \delta} . \label{eq:terrP2}
	\end{align}
	3) If $\zeta \geq 1,$
	\begin{align}
	& \|\LK^{-a}(\EESRA - \FH)\|_{\rho}  
	\lesssim  \lambda^{-a}  \log^{2} {3 \over \delta} \nonumber \\
	&  \times   \left( n^{-{\zeta \over 2\zeta+\gamma}} + \DZI  \left(1 + \Big( {\lambda \over \DZI} \Big)\DZI^{(\zeta-1)\wedge 1}+ \Big( {\lambda \over \DZI} \Big)^{a} \right)  \right). \label{eq:terrP3} 
	\end{align}
\end{corollary}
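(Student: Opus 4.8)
The plan is to obtain Corollary~\ref{cor:1} as a direct consequence of Theorem~\ref{thm:projerr}: I would substitute the prescribed value of $\lambda$ into the bounds \eqref{eq:mnErrPI}--\eqref{eq:mnErrPII} and simplify, keeping the high-probability statement (and the $\log^2(3/\delta)$ factor) verbatim. Setting $\beta=1\vee(2\zeta+\gamma)$, the choice $\lambda=n^{-1/\beta}$ is of the form $\lambda=n^{\theta-1}$ with $\theta=1-1/\beta\in[0,1)$, and one checks at once that $1/n\le\lambda\le1$ always, and that $\lambda\ge n^{-1/2}$ when $\zeta\ge1$ (then $2\zeta+\gamma\ge2$), so part~2) of the theorem is available in case~3). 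The three cases of the corollary correspond to: case~1) $2\zeta+\gamma\le1$ (hence $\beta=1$, $\theta=0$, $\zeta\le 1/2$), invoke part~1); case~2) $\zeta\le1$, $2\zeta+\gamma>1$ (hence $\theta\in(0,1)$), invoke part~1); case~3) $\zeta\ge1$, invoke part~2).

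The remaining work is elementary algebra. First, the bias and variance terms balance at the chosen $\lambda$: one has $\lambda^\zeta=n^{-\zeta/\beta}$ while $(n\lambda^\gamma)^{-1/2}=n^{-(\beta-\gamma)/(2\beta)}$, and these agree when $\beta=2\zeta+\gamma$, while $(n\lambda^\gamma)^{-1/2}\le n^{-\zeta}=\lambda^\zeta$ when $\beta=1$ (using $2\zeta+\gamma\le1$); so the first two summands collapse to a single power of $n$, which after the $\lambda^{-a}$ prefactor gives $n^{-(\zeta-a)}$ (case~1) or $n^{-(\zeta-a)/(2\zeta+\gamma)}$ (case~2), and $\lambda^{-a}n^{-\zeta/(2\zeta+\gamma)}$ (case~3). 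Second, the factor $t_{\theta,n}=1\vee(\theta^{-1}\wedge\log n^{\gamma})$ from \eqref{eq:tthetan} is bounded by the constant $1\vee\theta^{-1}$ whenever $\theta>0$, hence absorbed into $C$ (cases~2 and~3); for $\theta=0$ it equals $1\vee\gamma\log n$ and $t_{\theta,n}^{1-a}\le1+(\gamma\log n)^{1-a}$, which is exactly the logarithmic factor appearing in \eqref{eq:terrP1}. Third, the projection-error contributions are reorganized with the trivial bounds $\DZI\le\kappa^2$ and $x^{1-a}\le1+x$ for $x\ge0$: in cases~1 and~2 one writes $\lambda^{\zeta-1}\DZI=\lambda^{\zeta-a}(\lambda^{-1}\DZI)$ and $\lambda^{\zeta-1}\DZI^{1-a}\lambda^{a}=\lambda^{\zeta-a}(\lambda^{-1}\DZI)^{1-a}\le\lambda^{\zeta-a}(1+\lambda^{-1}\DZI)$, so both fold into the factor $(1+\lambda^{-1}\DZI)$ multiplying the leading power, yielding \eqref{eq:terrP1} and \eqref{eq:terrP2}; in case~3 one simply factors $\DZI$ out of $\DZI+\lambda\DZI^{(\zeta-1)\wedge1}+\DZI^{1-a}\lambda^{a}$ to reproduce the bracket of \eqref{eq:terrP3}.

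I do not expect a genuine obstacle: Corollary~\ref{cor:1} is a corollary in the literal sense, and the only thing requiring care is bookkeeping --- checking in each regime that the hypotheses of the invoked part of Theorem~\ref{thm:projerr} hold ($a\in[0,{1\over2}\wedge\zeta]$, $\tau\ge\zeta$, the admissible range of $\lambda$), tracking which terms dominate which, and handling the degenerate endpoint $\theta=0$ (where $\theta^{-1}=\infty$ and the logarithm truly survives) separately from the interior case $\theta\in(0,1)$ where it does not. All constants produced depend only on the quantities permitted in the definition of $C$, so the $\lesssim$ notation is respected throughout.
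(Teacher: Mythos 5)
Your proposal is correct and is exactly how the paper obtains Corollary \ref{cor:1}: substitute $\lambda=n^{-1/(1\vee(2\zeta+\gamma))}$ into Theorem \ref{thm:projerr}, note that the bias and variance terms balance (with the variance dominated by the bias when $2\zeta+\gamma\le 1$), absorb $t_{\theta,n}$ into the constant when $\theta>0$ while keeping $1\vee\gamma\log n$ at $\theta=0$, and fold the projection terms via $x^{1-a}\le 1+x$ (cases 1--2) or factor out $\DZI$ (case 3). The regime checks ($\zeta\le 1/2$ when $2\zeta+\gamma\le1$, and $\lambda\ge n^{-1/2}$ when $\zeta\ge1$) are also handled correctly, so no issues.
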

Comparing the derived upper bound for projected-regularized algorithms with that for classic regularized algorithms in \cite{lin2018optimal}, we see that the former has an extra term, which is caused by projection. The above result asserts that projected-regularized algorithms perform similarly as classic regularized algorithms if the projection operator is well chosen such that the projection error is small enough.

In the special case that $\st = I$, we get the follow result.
\begin{corollary}\label{cor:2}
	Under the assumptions and notations of Theorem \ref{thm:projerr}, let $\lambda = n^{-{1 \over 1 \vee (2\zeta+\gamma)}}$ and $\st = I$.  Then with probability at least $1-\delta$, 
	\be\label{eq:err}
	\begin{split}
			&\|\LK^{-a} (\EESRA  - \FH) \|_{\rho}\\
			& \ \lesssim  \log^{2} {3 \over \delta}		 \begin{cases}
				n^{-(\zeta-a)} \left(1 + (\gamma \log n)^{1-a} \right), & \mbox{if } 2\zeta+\gamma \leq 1, \\
				n^{-{\zeta-a\over 2\zeta+\gamma}}, & \mbox{if } 2\zeta+\gamma > 1.
			\end{cases}
		\end{split}
	\ee
\end{corollary}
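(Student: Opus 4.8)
The plan is to obtain Corollary \ref{cor:2} as the immediate specialization of Corollary \ref{cor:1} (equivalently, of Theorem \ref{thm:projerr}) to the case $\proj = I$. The only substantive observation is that when $\HKS = \HK$ and $\proj = I$ we have $I - \proj = 0$, so the projection error vanishes identically: $\DZI = \|(I-\proj)\TK^{1/2}\|^2 = 0$. Everything else is bookkeeping: substitute $\DZI = 0$ into the three bounds of Corollary \ref{cor:1} with the prescribed choice $\lambda = n^{-1/(1\vee(2\zeta+\gamma))}$ and check that each reduces to the claimed branch of \eqref{eq:err}.

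Concretely, I would split into the same three regimes as Corollary \ref{cor:1}. If $2\zeta+\gamma\le 1$ — which forces $\zeta\le 1/2$, so in particular $a\le\zeta\le 1/2$ — I invoke part~1: with $\DZI = 0$ the factor $(1+\lambda^{-1}\DZI)$ becomes $1$, and \eqref{eq:terrP1} collapses to $n^{-(\zeta-a)}\big(1 + (\gamma\log n)^{1-a}\big)\log^{2}(3/\delta)$, which is the first branch of \eqref{eq:err}. If $2\zeta+\gamma>1$ and $\zeta\le 1$, I invoke part~2: again $(1+\lambda^{-1}\DZI)=1$, and \eqref{eq:terrP2} gives $n^{-(\zeta-a)/(2\zeta+\gamma)}\log^{2}(3/\delta)$, the second branch. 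Finally, if $\zeta\ge 1$ (hence automatically $2\zeta+\gamma>1$), I invoke part~3: every summand of \eqref{eq:terrP3} other than $n^{-\zeta/(2\zeta+\gamma)}$ carries a positive power of $\DZI$ (using $(\zeta-1)\wedge 1\ge 0$ and $1-a\ge 0$, and noting that in the borderline $\zeta=1$ the surviving $\lambda$-term $\lambda^{-a}\lambda$ is already of the same order $\lambda^{\zeta-a}=n^{-(\zeta-a)/(2\zeta+\gamma)}$ as the main term), so the bound reduces to $\lambda^{-a}n^{-\zeta/(2\zeta+\gamma)}\log^{2}(3/\delta) = n^{-(\zeta-a)/(2\zeta+\gamma)}\log^{2}(3/\delta)$, again the second branch. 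As an alternative that avoids the ratios $\lambda/\DZI$ in \eqref{eq:terrP3}, one can set $\DZI = 0$ directly in \eqref{eq:mnErrPI} and \eqref{eq:mnErrPII} of Theorem \ref{thm:projerr} and redo the bias–variance balancing; along the way one checks $\lambda = n^{-1/(2\zeta+\gamma)}\ge n^{-1/2}$ whenever $\zeta\ge 1$, so the hypothesis of part~2 of the theorem is met.

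There is essentially no hard step: the content of Corollary \ref{cor:2} is already contained in Corollary \ref{cor:1}, and the work is purely in aligning the case boundaries ($2\zeta+\gamma$ versus $1$, and $\zeta$ versus $1$) with the correct branch of the source result, checking that the admissible range $a\in[0,\tfrac12\wedge\zeta]$ is inherited, and confirming that $\lambda\ge n^{-1/2}$ in the $\zeta\ge 1$ regime. If anything merits a sentence of care it is the $\zeta\ge 1$ case, where one wants to make explicit that all $\DZI$-dependent correction terms have nonnegative exponents in $\DZI$ before declaring them negligible.
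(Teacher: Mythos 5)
Your proposal is correct and follows exactly the route the paper intends: Corollary \ref{cor:2} is the specialization of Corollary \ref{cor:1} (equivalently Theorem \ref{thm:projerr}) to $\proj=I$, where $\DZI=\|(I-\proj)\TK^{1/2}\|^2=0$ kills all projection-error terms, and the remaining bookkeeping — matching the case splits $2\zeta+\gamma\lessgtr 1$ and $\zeta\lessgtr 1$, verifying $\lambda\ge n^{-1/2}$ when $\zeta\ge 1$, and handling the borderline $\zeta=1$ term of order $\lambda^{1-a}$ — is carried out as you describe. Your care in reverting to \eqref{eq:mnErrPII} to avoid the formally undefined ratios $\lambda/\DZI$ in \eqref{eq:terrP3} when $\DZI=0$ is the right touch.
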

The above result recovers the result derived in \cite{lin2018optimal}. The  convergence rates are optimal as they match the mini-max rates with $\zeta \geq 1/2$ derived in \cite{caponnetto2007optimal,blanchard2016optimal}.

\subsection{Results for Sketched-regularized Algorithms}

 In this subsection, we state results for sketched-regularized algorithms. 

In sketched-regularized algorithms, the range of the projection operator $\proj$ is the subspace  $\overline{range\{\SX^* \skt^*\}},$ where  $\skt \in \mR^{m \times n}$ is a sketch matrix satisfying the following concentration inequality:
For any finite subset $E$ in  $\mR^n$ and for any $t>0,$ 
\be \label{eq:isoPro}
\mP (|\|\skt\ba \|_2^2 - \|\ba\|_2^2 \geq t \|\ba\|_2^2) \leq 2  |E|\mathrm{e}^{-t^2m \over  c_0'\log^{\beta} n}.
\ee
Here, $c_0'$ and $\beta$ are universal non-negative  constants.
 Many matrices satisfy the concentration property.  
 \begin{itemize}
	\item {\bf Subgaussian sketches.}
	Matrices with i.i.d. subgaussian (such as Gaussian or Bernoulli) entries satisfy \eqref{eq:isoPro} with some universal constant $c_0'$ and $\beta = 0$.   More general, if the rows of $\skt$ are
	independent (scaled) copies of an isotropic $\psi_2$ vector, then $\skt$ also satisfies \eqref{eq:isoPro} \cite{mendelson2008uniform}. 
	\item {\bf Randomized orthogonal system (ROS) sketches.}  As noted in \cite{krahmer2011new}, 
	matrix that satisfies restricted isometric property  from compressed sensing with randomized column signs satisfies \eqref{eq:isoPro}. Particularly,  random partial Fourier matrix, or random partial Hadamard matrix with randomized column signs satisfies \eqref{eq:isoPro} with $\beta = 4$ for some universal constant $c_0'$. Using OS sketches has an advantage in computation, as that for suitably chosen orthonormal
	matrices such as the DFT and Hadamard matrices, a matrix-vector product can be executed in $O(n \log m)$ time, in contrast to $O(nm)$ time required for the same operation with generic dense sketches. 
\end{itemize}

The following corollary shows that sketched-regularized algorithms have optimal rates  provided the sketch dimension $m$ is not too small.

\begin{corollary}\label{cor:3}
	Under the assumptions of Theorem \ref{thm:projerr}, let 
	$\HKS =  \overline{range\{\SX^* \skt^*\}},$ where  $\skt \in \mR^{m \times n}$ is a randomized matrix satisfying \eqref{eq:isoPro}. Let
	$\lambda = n^{-{1 \over 1\vee (2\zeta+\gamma)}}$ and 
	\be\label{eq:mnum}
	m \gtrsim  \log^{\beta}n \begin{cases}
		n^{\gamma}\log{3n^{\gamma} \over \delta} \log^2{3\over \delta} & \mbox{if } 2\zeta+\gamma \leq 1,\\
		n^{\gamma(\zeta-a) \over (1-a)(2\zeta+\gamma)} \log^3{3\over \delta}  & \mbox{if } \zeta \geq 1, \\
		n^{\gamma \over 2\zeta+\gamma}  \log^3{3\over \delta} & \mbox{otherwise}.
	\end{cases}
	\ee
	Then with confidence at least $1-\delta,$  the following holds
		\be\label{eq:errP}
	\begin{split}
		&\|\LK^{-a} (\EESRA  - \FH) \|_{\rho}\\
		& \lesssim  \log^{3} {3 \over \delta}	 
			 \begin{cases}
			n^{-(\zeta-a)} \left(1 + (\gamma \log n)^{2-a} \right), & \mbox{if } 2\zeta+\gamma \leq 1, \\
			n^{-{\zeta-a\over 2\zeta+\gamma}}, & \mbox{if } 2\zeta+\gamma > 1.
		\end{cases}
	\end{split}
	\ee
	
\end{corollary}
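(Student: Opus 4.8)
The plan is to derive Corollary~\ref{cor:3} from Corollary~\ref{cor:1} by supplying a dedicated bound on the projection error $\DZI=\|(I-\proj)\TK^{1/2}\|^{2}$ for the data/sketch-dependent projection $\proj$ onto $\HKS=\overline{range\{\SX^*\skt^*\}}$. The argument is conditional-then-unconditional: one conditions on $(\bx,\skt)$, so that the output noise enters only through the $1-\delta/2$ event of Corollary~\ref{cor:1}, where $\DZI$ is a fixed number, and then separately exhibits, on a $1-\delta/2$ event over $(\bx,\skt)$, a bound on $\DZI$ small enough that the projection-dependent factors in \eqref{eq:terrP1}--\eqref{eq:terrP3} collapse. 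Concretely: in the regimes $2\zeta+\gamma\le1$ (where $\lambda=n^{-1}$) and $\zeta\in[0,1]$, $2\zeta+\gamma>1$, it suffices to have $\DZI\lesssim\lambda\log(3/\delta)$, so that $1+\lambda^{-1}\DZI\lesssim\log(3/\delta)$ turns \eqref{eq:terrP1}--\eqref{eq:terrP2} into \eqref{eq:errP} (the extra $\log(3/\delta)$ over Corollary~\ref{cor:2} being exactly the one carried by $\DZI$); in the saturated regime $\zeta\ge1$ one instead needs the sharper $\DZI\lesssim\lambda^{(\zeta-a)/(1-a)}\log(3/\delta)$, because $\lambda\le1$ and $\zeta\ge1$ force $(\zeta-a)/(1-a)\ge\max\{\zeta,1\}$, and then a short exponent check shows each of $\DZI$, $\lambda\DZI^{(\zeta-1)\wedge1}$, $\DZI^{1-a}\lambda^{a}$ in \eqref{eq:mnErrPII}/\eqref{eq:terrP3} is $\lesssim\lambda^{\zeta}\log(3/\delta)$, so the bracket in \eqref{eq:terrP3} is $\lesssim(n^{-\zeta/(2\zeta+\gamma)}+\lambda^{\zeta})\log(3/\delta)\lesssim n^{-\zeta/(2\zeta+\gamma)}\log(3/\delta)$ and, after multiplying by $\lambda^{-a}=n^{a/(2\zeta+\gamma)}$ and the $\log^{2}(3/\delta)$ already present, yields \eqref{eq:errP}. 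A union bound over the two $1-\delta/2$ events then gives the corollary.

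So the content is a \emph{projection-error lemma}: with probability $\ge1-\delta/2$ over $(\bx,\skt)$, if $m$ obeys \eqref{eq:mnum} then $\DZI$ satisfies the bounds above. I would prove it in two steps. \emph{Step 1 (from $\TK$ to $\TX$).} Using $\proj\SX^*\skt^*=\SX^*\skt^*$, factor $(I-\proj)\TK^{1/2}=(I-\proj)(\TX+\lambda)^{1/2}(\TX+\lambda)^{-1/2}(\TK+\lambda)^{1/2}(\TK+\lambda)^{-1/2}\TK^{1/2}$; since $\|(\TK+\lambda)^{-1/2}\TK^{1/2}\|\le1$ and $\|(I-\proj)(\TX+\lambda)^{1/2}\|^{2}\le\|(I-\proj)\TX^{1/2}\|^{2}+\lambda$,
\[
\DZI\le\big(\|(I-\proj)\TX^{1/2}\|^{2}+\lambda\big)\,\|(\TX+\lambda)^{-1/2}(\TK+\lambda)^{1/2}\|^{2}.
\]
The last operator norm is $\le2$ on an event of probability $\ge1-\delta/4$, by the standard Bernstein bound for $\|(\TK+\lambda I)^{-1/2}(\TK-\TX)(\TK+\lambda I)^{-1/2}\|$ (cf.\ \cite{rudi2015less,lin2018optimal}), applicable because $\lambda=n^{-1/(1\vee(2\zeta+\gamma))}$ together with $\lambda\ge1/n$ keeps $n\lambda^{\gamma}$ large; on the same event the empirical effective dimension obeys $\mcNx=\tr(\TX(\TX+\lambda I)^{-1})\le2c_{\gamma}\lambda^{-\gamma}$. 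It remains to bound $\|(I-\proj)\TX^{1/2}\|^{2}$ conditionally on $\bx$.

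\emph{Step 2 (the sketch).} Since $(I-\proj)u\perp\SX^*\skt^*v$ for all $v\in\mR^{m}$, the residual $\|(I-\proj)\TX^{1/2}\|^{2}$ is $\lesssim\mu$ once $\skt$ acts, via \eqref{eq:isoPro}, as an approximate isometry on the image under $\SX$ of the eigenspace of $\TX$ above threshold $\mu$; by Step~1 at level $\mu$ and Assumption~\ref{as:eigenvalues}, that eigenspace has dimension at most $\tr(\TX(\TX+\mu I)^{-1})\le C\mu^{-\gamma}$. Invoking \eqref{eq:isoPro} on a $1/4$-net of the unit ball of this $\lceil C\mu^{-\gamma}\rceil$-dimensional subspace (cardinality $\le9^{\lceil C\mu^{-\gamma}\rceil}$) and a union bound give, with probability $\ge1-\delta/4$, $\|(I-\proj)\TX^{1/2}\|^{2}\lesssim\mu$ provided $m\gtrsim\log^{\beta}n\,\mu^{-\gamma}\log(\mu^{-\gamma}/\delta)\log^{2}(3/\delta)$ --- $\beta$ being the exponent of \eqref{eq:isoPro} (e.g.\ $0$ for subgaussian, $4$ for ROS sketches), and the last two logarithmic factors being the conservative bookkeeping that also absorbs the $\log(3/\delta)$ propagated from Step~1. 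Taking $\mu=\lambda$ in the first two regimes and $\mu=\lambda^{(\zeta-a)/(1-a)}$ in the saturated one, and inserting $\lambda=n^{-1/(1\vee(2\zeta+\gamma))}$ --- giving $\mu^{-\gamma}$ of order $n^{\gamma}$, $n^{\gamma/(2\zeta+\gamma)}$, $n^{\gamma(\zeta-a)/((1-a)(2\zeta+\gamma))}$ in the three cases --- converts this abstract requirement into precisely \eqref{eq:mnum}; one also checks $n\mu^{\gamma}\gtrsim1$ (using $a\le1/2$, $\zeta\ge1$) so that Step~1 at level $\mu$ is legitimate. A union bound over the three $1-\delta/4$ events of Steps~1--2 proves the lemma.

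\emph{Main obstacle.} The crux is Step~2, and within it the sharp bound $\DZI\lesssim\lambda^{(\zeta-a)/(1-a)}$ needed to defeat saturation: one must single out the correct low-dimensional surrogate subspace (threshold $\mu$, not the ambient $\mR^{n}$) on which to invoke \eqref{eq:isoPro}, control the spectral tail of $\TX$ cut off at level $\mu$, and exploit the fact that a larger sketch dimension $m\asymp\lambda^{-\gamma(\zeta-a)/(1-a)}$ buys a correspondingly smaller projection error. This is exactly the ``new estimate on the projection error for sketched-regularized algorithms'' and the device ``to conquer the saturation effect'' announced in Section~\ref{sec:proof}; the reduction in the first paragraph and Step~1 are routine by comparison, and the analogue for Nystr\"om regularized algorithms needs only \eqref{eq:isoPro} replaced by an approximate-isometry property of (leverage-score or uniform) subsampling, a refinement of \cite{rudi2015less,myleiko2017regularized}.
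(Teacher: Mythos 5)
Your overall architecture coincides with the paper's proof: reduce to Corollary \ref{cor:1}; bound the empirical projection error $\|(I-\proj)\TX^{1/2}\|^2$ by a threshold level by applying \eqref{eq:isoPro}, through a net and a union bound, to the image under $\SX$ of the eigenspace of $\TX$ above that threshold, whose dimension is controlled by the empirical effective dimension; transfer to $\DZI=\|(I-\proj)\TK^{1/2}\|^2$ via $\DZI\le\big(\|(I-\proj)\TX^{1/2}\|^2+\eta\big)\,\|(\TX+\eta I)^{-1/2}(\TK+\eta I)^{1/2}\|^2$; and, crucially, run everything at the level $\lambda'=\lambda$ in the unsaturated regimes but $\lambda'=\lambda^{(\zeta-a)/(1-a)}$ when $\zeta\ge1$, which is exactly how the paper defeats saturation (its Lemmas \ref{lem:OperDiffProd}, \ref{lem:proje2te}, \ref{lem:empEffdim} and the choice of $\theta'$ in the proof of Corollary \ref{cor:3}).

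The genuine gap is in your Step 1: the claim that $\|(\TX+\lambda I)^{-1/2}(\TK+\lambda I)^{1/2}\|^2\le 2$ (and likewise $\mcNx\le 2c_\gamma\lambda^{-\gamma}$) holds with probability $1-\delta/4$ ``by the standard Bernstein bound \dots because $n\lambda^\gamma$ is large.'' The Bernstein argument for $\|(\TK+\lambda I)^{-1/2}(\TK-\TX)(\TK+\lambda I)^{-1/2}\|$ needs $n\lambda\gtrsim\kappa^2\log(\mcN(\lambda)/\delta)$, not $n\lambda^{\gamma}$ large. In the regime $2\zeta+\gamma\le1$ you take $\lambda=n^{-1}$, so $n\lambda=1$ and no constant-level bound is possible; the best available estimates (the paper's Lemma \ref{lem:operDifRes}, and Lemma \ref{lem:empEffdim} for the empirical effective dimension) necessarily carry an extra factor of order $(1\vee\gamma\log n)\log(1/\delta)$. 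This is not cosmetic: that factor is precisely the source of the exponent $2-a$ (rather than $1-a$) on $\gamma\log n$ in \eqref{eq:errP} and of the $\log(3n^\gamma/\delta)$ term in \eqref{eq:mnum}; if your ``$\le2$'' were true you would be proving a strictly stronger bound than \eqref{eq:errP}, which is a signal the step cannot stand. The same issue appears, more mildly, in the non-attainable case $\zeta<1/2$, $2\zeta+\gamma>1$, where $n\lambda\gtrsim\log(\mcN(\lambda)/\delta)$ only holds for $n$ large relative to $\delta$, whereas Lemma \ref{lem:operDifRes} is unconditional. The repair is to replace your two ``$\le2$''-type claims by Lemmas \ref{lem:operDifRes} and \ref{lem:empEffdim} at level $\lambda'$ and propagate the resulting $(1\vee\log n^\gamma)$ factors; with that substitution (and with the Neumann-series step $I-\proj_1\preceq\lambda'(W^*W+\lambda' I)^{-1}$, $W=\skt\mcS_1$, spelled out in your Step 2, which is the content of Lemma \ref{lem:OperDiffProd}), your argument becomes the paper's.
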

The above results assert that
sketched-regularized algorithms converge optimally, provided the sketch dimension is not too small, or in another words the error caused by projection is negligible when the sketch dimension is large enough. Note
that the minimal sketch dimension from the above is proportional to the effective dimension $\lambda^{-\gamma}$ up to a logarithmic factor for the case $\zeta \leq 1.$  
\begin{rem}
 Considering only the case $\zeta=1/2$ and $a=0$, \cite{yang2015randomized} provides optimal error bounds for sketched ridge regression within the fixed design setting.  \\
\end{rem}

\subsection{Results for Nystr\"{o}m Regularized Algorithms}
As a byproduct of the paper, using Corollary \ref{cor:1}, we derive the following results for Nystr\"{o}m regularized algorithms.

\begin{corollary}\label{cor:nyReg}
	Under the assumptions of Theorem \ref{thm:projerr}, let $\HKS = \overline{span\{x_1,\cdots,x_m\}}$, $2\zeta+\gamma>1$, and $\lambda = {n^{-{1\over 2\zeta+\gamma}}}$.
	Then  with probability at least $1-\delta$,
	$$\|\LK^{-a} (\EESRA - \FH)\|_{\rho} \lesssim n^{-{\zeta-a \over 2\zeta+\gamma}} \log^3{3\over \delta},$$
	provided that
	\[
	m \gtrsim   (1+ \log n^{\gamma})\begin{cases}
	n^{\zeta-a \over (1-a)(2\zeta+\gamma)}   & \mbox{if } \zeta \geq 1, \\
	n^{1 \over 2\zeta+\gamma}  & \mbox{if }  \zeta \leq 1.
	\end{cases}
	\]
\end{corollary}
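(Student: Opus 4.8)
The plan is to obtain the statement as a consequence of Corollary~\ref{cor:1}, applied to the data-dependent subspace $\HKS=\overline{span\{x_1,\dots,x_m\}}$, the whole point being to control the projection error $\DZI=\|(I-\proj)\TK^{1/2}\|^2$. Since $2\zeta+\gamma>1$, only parts~2) and~3) of Corollary~\ref{cor:1} are in play, according to whether $\zeta\le 1$ or $\zeta\ge 1$, and with $\lambda=n^{-1/(2\zeta+\gamma)}$ the variance-type contribution $\lambda^{-a}n^{-\zeta/(2\zeta+\gamma)}$ already equals the target rate $n^{-(\zeta-a)/(2\zeta+\gamma)}$ exactly; so the entire task reduces to showing that the projection-error contribution is, with high probability, of the same order.

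First I would record a standard bound on the Nyström projection error. Since $x_1,\dots,x_m$ are i.i.d.\ from $\rho_X$, an argument of the type used in \cite{bach2013sharp,el2014fast,rudi2015less} gives, writing $\mathcal{N}_{\infty}(\mu)=\sup_{x}\|(\TK+\mu I)^{-1/2}x\|_{\HK}^2\le \kappa^2/\mu$, that for any $\mu\in(0,1]$ with $m\gtrsim (1+\log n^{\gamma})\,\mathcal{N}_\infty(\mu)$ one has, with probability at least $1-\delta$, $\|(I-\proj)(\TK+\mu I)^{1/2}\|^2\lesssim \mu\log\tfrac{3}{\delta}$, and hence $\DZI\lesssim \mu\log\tfrac{3}{\delta}$ since $\TK\preceq \TK+\mu I$. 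I would then take $\mu=\lambda=n^{-1/(2\zeta+\gamma)}$ when $\zeta\le 1$, and $\mu=n^{-(\zeta-a)/((1-a)(2\zeta+\gamma))}$ when $\zeta\ge 1$ — the latter satisfying $\mu\le\lambda$ because $\zeta\ge 1$ forces $\tfrac{\zeta-a}{1-a}\ge 1$. With these choices, $\mathcal{N}_\infty(\mu)\le\kappa^2/\mu$ shows that the hypothesis on $m$ in the statement is precisely what is required, up to constants depending on $\gamma,\zeta$, to guarantee $\DZI\lesssim \mu\log\tfrac{3}{\delta}$ on the relevant event.

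It then remains to substitute into Corollary~\ref{cor:1}. For $\zeta\le 1$ (part~2), $\lambda^{-1}\DZI\lesssim\log\tfrac{3}{\delta}$, so the bound collapses to $\lesssim n^{-(\zeta-a)/(2\zeta+\gamma)}\log^{3}\tfrac{3}{\delta}$. For $\zeta\ge 1$ (part~3) I would check the three sub-terms $\lambda^{-a}\DZI$, $\lambda^{1-a}\DZI^{(\zeta-1)\wedge 1}$ and $\lambda^{-a}\DZI^{1-a}\lambda^{a}=\DZI^{1-a}$: using $\DZI\lesssim\mu\le\lambda$ throughout, a short exponent computation shows each is $\lesssim n^{-(\zeta-a)/(2\zeta+\gamma)}$ up to $\log$ factors, the binding one being $\DZI^{1-a}\lesssim\mu^{1-a}=n^{-(\zeta-a)/(2\zeta+\gamma)}$, which is exactly what dictates the stated exponent of $m$. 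A union bound over the event of Corollary~\ref{cor:1} and the event of the projection bound, splitting $\delta$, then yields the claim, with the $\log$-power collected into $\log^{3}\tfrac{3}{\delta}$. I expect the main obstacles to be (i) the exponent bookkeeping in the $\zeta\ge 1$ case — in particular verifying that $\lambda^{-a}\DZI$ is dominated, which genuinely uses $\DZI\lesssim\mu$ and \emph{not} merely $\DZI\lesssim\lambda$ (since $\lambda^{1-a}$ is too large once $\zeta>1$) — and (ii) the measure-theoretic subtlety that $\proj$ is built from part of the sample that also feeds the estimator; this is handled, as in \cite{rudi2015less,myleiko2017regularized}, by noting that the concentration estimates behind Corollary~\ref{cor:1} do not involve $\proj$ while the projection bound constrains $\DZI$ on an event depending only on $x_1,\dots,x_m$, so the two events may be intersected freely.
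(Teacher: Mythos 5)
Your overall route is the same as the paper's: apply Corollary \ref{cor:1} (parts 2) and 3), since $2\zeta+\gamma>1$), control the projection error $\DZI=\|(I-\proj)\TK^{1/2}\|^2$ with high probability on an event depending only on the subsample, and intersect the two events. Your exponent bookkeeping for $\zeta\geq 1$ is correct: the observation that the term $\lambda^{-a}\DZI$ genuinely requires $\DZI\lesssim \mu$ with $\mu=n^{-(\zeta-a)/((1-a)(2\zeta+\gamma))}$ (and not merely $\DZI\lesssim\lambda$) is exactly right, and $\DZI^{1-a}$ is indeed the binding term that dictates the stated exponent of $m$; the measure-theoretic remark about intersecting the events is also how the paper implicitly proceeds, since Proposition \ref{pro:anaRes} is deterministic in $\proj$ and the probabilistic lemmas do not involve $\proj$.

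The one step to tighten is the projection-error lemma you assert. The Neumann-series arguments in the references you cite give a statement of the form: if $m\gtrsim \mathcal{N}_\infty(\mu)\log\frac{\kappa^2}{\mu\delta}$, then with probability at least $1-\delta$ one has $\|(I-\proj)(\TK+\mu I)^{1/2}\|^2\leq 3\mu$; that is, the $\log(1/\delta)$ (and a $\log(1/\mu)$) sit inside the condition on $m$, not as a multiplicative factor in the bound. The $\delta$-free condition $m\gtrsim (1+\log n^{\gamma})/\mu$ in the statement does not imply this for small $\delta$ (nor when $\gamma=0$, where $1+\log n^{\gamma}=1$ while $\log(1/\mu)\sim\log n$), so as justified your step only covers $\delta$ that is not too small, which would force $\delta$ into the condition on $m$ and change the statement. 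The paper sidesteps this with a purely deterministic bound: since $I-\proj\preceq \eta(\TXS+\eta I)^{-1}$ for every $\eta>0$ (the same computation as for \eqref{eq:s8} with $W=\SXS$), one gets $\DZI\leq \eta\,\|(\TXS+\eta I)^{-1/2}(\TK+\eta I)^{1/2}\|^2$, and Lemma \ref{lem:operDifRes} applied to the $m$ i.i.d.\ subsampled points with $\eta=1/m$ bounds the norm factor multiplicatively, with no smallness requirement on $m$ at all, yielding $\DZI\lesssim \frac{1}{m}\log\frac{3m^{\gamma}}{\delta}$. The factor $(1+\log n^{\gamma})$ in the condition on $m$ then absorbs the $\log m^{\gamma}$, the $\log\frac{3}{\delta}$ is swallowed by the final $\log^{3}\frac{3}{\delta}$, and this even spares you the case-by-case choice of $\mu$: the single bound $\DZI\lesssim \frac{1}{m}\log\frac{3m^{\gamma}}{\delta}$ together with the stated $m$ gives $\DZI\lesssim\mu\log\frac{3}{\delta}$ in both cases, after which the rest of your computation goes through as written.
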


\begin{rem}
	1) Considering only the case $1/2 \leq \zeta\leq 1$ and $a=0$, \cite{rudi2015less} provides optimal generalization error bounds for Nystr\"{o}m ridge regression. This result was further extended in \cite{myleiko2017regularized} to a general Nystr\"{o}m regularized algorithm with a general source assumption indexed with an operator monotone function (but only in the attainable cases). Note that as in classic ridge regression, Nystr\"{o}m ridge regression saturates over $\zeta\geq1,$ i.e., it does not have a better rate even for a bigger $\zeta\geq 1$.
	\\
	2) For the case
	$\zeta\geq 1$ and $a=0$, \cite{myleiko2017regularized} provides certain  generalization error bounds for plain Nystr\"{o}m regularized algorithms, but the rates are capacity-independent, and the minimal
	projection dimension $O(n^{2\zeta-1 \over 2\zeta+1})$  is larger than ours (considering the case $\gamma=1$ for the sake of fairness).
\end{rem}

In the above lemma, we consider the plain Nystr\"{o}m subsampling. Using the ALS Nystr\"{o}m subsampling \cite{drineas2012fast,gittens2013revisiting,el2014fast}, we can improve the projection dimension condition to \eqref{eq:mnum}.
\paragraph{ALS Nystr\"{o}m Subsampling}
Let $\bK = \SX\SX^*.$
For $\lambda>0,$ the leveraging scores of $\bK(\bK + \lambda I)$ is the set $\{l_i(\lambda)\}_{i=1}^n$ with
$$l_i (\lambda) =  \left( \bK(\bK + \lambda I)^{-1}\right)_{ii}, \quad  \forall i \in [n].$$
The $L$-approximated leveraging scores (ALS) of $\bK(\bK + \lambda I)$ is a set $\{\hat{l}_i(\lambda)\}_{i=1}^n$ satisfying
$
L^{-1} {l}_{i} (\lambda) \leq \hat{l}_{i}(\lambda) \leq L {l}_{i} (\lambda),
$ 
for some $L \geq 1$.
In ALS Nystr\"{o}m subsampling regime, $\HKS =  \overline{range\{\tilde{x}_1,\cdots, \tilde{x}_m\}},$ where each $\tilde{x}_j$  is i.i.d. drawn according to
$$
\mP \left(\tilde{x} = x_i \right) \sim \hat{l}_{i}(\lambda). 
$$

\begin{corollary}\label{cor:ALSnyReg}
	Under the assumptions of Theorem \ref{thm:projerr}, let $\lambda = {n^{-{1\over (2\zeta+\gamma) \vee 1}}}$ and $\HKS = \overline{range\{\tilde{x}_1,\cdots, \tilde{x}_m\}}$ with $\tilde{x}_j$ drawn following an $L$-ALS Nystr\"{o}m subsampling scheme.
	Then  with probability at least $1-\delta$,
	\eqref{eq:errP} holds 
	provided that
		\be\label{eq:mnumALS}
	m \gtrsim  L^2 \log^2{3\over \delta}  \log{3n^{\gamma} \over \delta} \begin{cases}
		n^{\gamma} [1 \vee \log n^{\gamma}]& \mbox{if } 2\zeta+\gamma \leq 1,\\
		n^{\gamma(\zeta-a) \over (1-a)(2\zeta+\gamma)}  & \mbox{if } \zeta \geq 1, \\
		n^{\gamma \over 2\zeta+\gamma}  & \mbox{otherwise}.
	\end{cases}
	\ee
\end{corollary}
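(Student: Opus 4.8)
The plan is to derive Corollary~\ref{cor:ALSnyReg} from Corollary~\ref{cor:1}: I will show that under the prescribed $\lambda$ and a sketch-dimension condition of the form \eqref{eq:mnumALS}, the projection error $\DZI=\|(I-\proj)\TK^{1/2}\|^2$ is so small that every $\DZI$-dependent term in \eqref{eq:terrP1}--\eqref{eq:terrP3} gets absorbed into the leading bias-plus-variance term, leaving precisely the rate \eqref{eq:errP}. Inspecting the three regimes of Corollary~\ref{cor:1} shows what ``small enough'' means. When $\zeta\le1$ it suffices that $\DZI\lesssim\lambda$ up to logarithmic factors, where for $2\zeta+\gamma\le1$ a mild $\log n^{\gamma}$ slack is permitted and is exactly what produces the $(\gamma\log n)^{2-a}$ factor in \eqref{eq:errP}. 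When $\zeta\ge1$ it suffices that $\DZI\lesssim\lambda^{(\zeta-a)/(1-a)}$: using $\lambda\le1$ and $(\zeta-a)/(1-a)\ge1$ one checks $\DZI+\lambda\DZI^{(\zeta-1)\wedge1}+\DZI^{1-a}\lambda^{a}\lesssim\lambda^{\zeta}$, so $\lambda^{-a}$ times the bracket of \eqref{eq:terrP3} is $\lesssim\lambda^{\zeta-a}=n^{-(\zeta-a)/(2\zeta+\gamma)}$. Write $\bar\lambda:=\lambda$ if $\zeta\le1$ and $\bar\lambda:=\lambda^{(\zeta-a)/(1-a)}$ if $\zeta\ge1$; in either case $\bar\lambda\ge1/n$, which will be used below.

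It then remains to bound $\DZI$ for the ALS-Nystr\"{o}m projection $\proj$, which I would do in two steps. First, pass from $\DZI$ to an in-sample quantity: from $(I-\proj)\TK^{1/2}=(I-\proj)(\TX+\bar\lambda I)^{1/2}\cdot(\TX+\bar\lambda I)^{-1/2}(\TK+\bar\lambda I)^{1/2}\cdot(\TK+\bar\lambda I)^{-1/2}\TK^{1/2}$, the last factor is $\le1$ and the middle factor is $O(1)$ with high probability by the standard operator-Bernstein control of $\|(\TK+\bar\lambda I)^{-1/2}(\TK-\TX)(\TK+\bar\lambda I)^{-1/2}\|$, which is available precisely because $\bar\lambda\gtrsim\log(n/\delta)/n$; the same estimate gives $\mathcal{N}_{\bx}(\bar\lambda)\lesssim\mathcal{N}(\bar\lambda)=\bar\lambda^{-\gamma}$ via Assumption~\ref{as:eigenvalues}. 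Second, bound the in-sample Nystr\"{o}m error $\|(I-\proj)(\TX+\bar\lambda I)^{1/2}\|^{2}$: since each $\tilde x_j$ is drawn i.i.d.\ proportionally to the $L$-approximate leverage scores, a matrix-concentration argument (an intrinsic-dimension matrix Bernstein/Chernoff bound applied to $\tfrac1m\sum_j w_j\,(\TX+\bar\lambda I)^{-1/2}\tilde x_j\otimes\tilde x_j(\TX+\bar\lambda I)^{-1/2}$ with the importance weights $w_j$ from the sampling distribution) yields $\|(I-\proj)(\TX+\bar\lambda I)^{1/2}\|^{2}\lesssim\bar\lambda$ with high probability once $m\gtrsim L^{2}\,\mathcal{N}_{\bx}(\bar\lambda)\,\log\frac{3\mathcal{N}_{\bx}(\bar\lambda)}{\delta}$, the $L^{2}$ coming from the two-sided leverage-score distortion and the $\log\frac{3n^{\gamma}}{\delta}$ from a union bound over the $n$ points. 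Combining the two steps gives $\DZI\lesssim\bar\lambda$, and substituting $\mathcal{N}(\bar\lambda)=\bar\lambda^{-\gamma}$ with $\bar\lambda$ as above converts $m\gtrsim L^{2}\mathcal{N}(\bar\lambda)\log(\cdots)$ into a condition of the form \eqref{eq:mnumALS}.

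Finally I would take a union bound over the projection-error event and the event of Corollary~\ref{cor:1} (each invoked with confidence $1-\delta/2$, the extra $\log(3/\delta)$ accounting for the $\log^{3}(3/\delta)$ rather than $\log^{2}(3/\delta)$), and combine with the case analysis of the first paragraph to obtain \eqref{eq:errP}. The main obstacle is the second step together with its calibration: for $\zeta\ge1$ one must control the projection error at the scale $\bar\lambda=\lambda^{(\zeta-a)/(1-a)}$, strictly finer than the algorithm's regularization $\lambda$ --- this is the ``refined analysis for Nystr\"{o}m regularized algorithms'' the introduction promises, and it is also where one checks $\bar\lambda\ge1/n$ so that the $\TK\!\leftrightarrow\!\TX$ comparison in step one stays valid; that Theorem~\ref{thm:projerr}/Corollary~\ref{cor:1} already accommodate $\zeta\ge1$ without saturation is what ultimately delivers the $n^{-(\zeta-a)/(2\zeta+\gamma)}$ rate. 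The remaining logarithmic bookkeeping (the $L^{2}$, the $\log\frac{3n^{\gamma}}{\delta}$, and the extra $\log n^{\gamma}$ in the $2\zeta+\gamma\le1$ branch of \eqref{eq:mnumALS}) is routine.
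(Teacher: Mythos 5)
Your proposal is correct and follows essentially the same route as the paper: the paper proves Lemma \ref{lem:OperDiffProdALS} (an operator-Bernstein bound with leverage-score importance weights giving $\|(I-\proj)\TX^{1/2}\|^2\lesssim\lambda'$ once $m\gtrsim L^2\mcNx(\lambda')\log(\mcNx(\lambda')/\delta)$, which is where the $L^2$ and the $\log\frac{3n^\gamma}{\delta}$ arise) and then reuses the Corollary \ref{cor:3} argument — Lemma \ref{lem:empEffdim} for the empirical effective dimension, the $\TX\!\leftrightarrow\!\TK$ transfer via Lemma \ref{lem:operDifRes}, and the same calibration $\lambda'=\lambda^{(\zeta-a)/(1-a)}$ for $\zeta\ge1$ absorbed through \eqref{eq:terrP1}--\eqref{eq:terrP3} — exactly as you outline, with only cosmetic differences (the paper works on the kernel-matrix side $\bK(\bK+\lambda I)^{-1}$, and its logarithm comes from the intrinsic-dimension term of the Bernstein bound rather than a union bound over points).
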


All the results stated in this section will be proved in the next section.

\section{Proof}\label{sec:proof}
In this section, we prove the results stated in  Section \ref{sec:conve}.  We first give some 
deterministic estimates and an analytics result. 
We then give some probabilistic estimates. Applying the probabilistic estimates into the analytics result, we prove the results for projected-regularized algorithms. We finally estimate the projection errors and  present the proof  for sketched-regularized algorithms.
  
\subsection{Deterministic Estimates} 
In this subsection, we introduce some deterministic estimates.  
For notational simplicity, throughout this paper, we denote
$$
\TKL = \TK + \lambda I, \quad \TXL = \TX +  \lambda I.
$$
We define a deterministic vector $\EPSRA$ as follows, 
\be\label{eq:popFunc}
\EPSRA = \GL(\TK) \IK^* \FH.
\ee
The vector $\EPSRA$ is often called population function. 
We introduce the following lemma. The proof is essentially the same as that for Lemma 26 from \cite{lin2018optimalconve}. We thus omit it.
\begin{lemma}\label{lem:popFun}
	Under Assumption \ref{as:regularity}, the following holds. \\
	1) For any $\zeta - \tau\leq a \leq \zeta,$ 	\be\label{eq:trueBias}
	\|\LK^{-a}(\IK \EPSRA - \FH)\|_{\rho} \leq R  \lambda^{\zeta-a} .
	\ee
	2) 
	\be\label{eq:popSeqNorm}
	\|\TK^{a-1/2}\EPSRA\|_{\HK} \leq \tau R\cdot  \begin{cases}
		\RegPar^{\zeta+a -1}, & \text{if } -\zeta \leq a\leq 1-\zeta, \\
		\kappa^{2(\zeta +a-1)},& \text{if } \ a\geq 1 -\zeta.
	\end{cases}
	\ee
\end{lemma}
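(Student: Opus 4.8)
The plan is to reduce both inequalities to elementary scalar spectral estimates by exploiting the commutation relations between $\IK$, $\TK=\IK^*\IK$ and $\LK=\IK\IK^*$, and then to invoke the filter-function properties \eqref{eq:GLproper1} and \eqref{eq:GLproper4}. As preliminaries I would record the intertwining identities $\IK\, h(\TK) = h(\LK)\,\IK$ and $h(\TK)\,\IK^* = \IK^*\, h(\LK)$, valid for any bounded Borel $h$ on $[0,\kappa^2]$ (by polynomial approximation, or directly from the singular value decomposition of $\IK$), together with the factorization $\IK = V\sqrt{\TK}$ coming from the isometry \eqref{isometry}, where $V$ is a partial isometry; this yields the contraction bound $\|\TK^{-1/2}\IK^* g\|_{\HK} \leq \|g\|_{\rho}$ for all $g\in\LR$.

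For part 1, using the first intertwining identity and \eqref{eq:socCon}, $\IK\EPSRA = \IK\GL(\TK)\IK^*\FH = \GL(\LK)\LK\FH$, so that $\IK\EPSRA - \FH = -(I-\GL(\LK)\LK)\LK^{\zeta}g_0$. Since $a\leq\zeta$ we may apply $\LK^{-a}$ to get $\LK^{-a}(\IK\EPSRA-\FH) = -(I-\GL(\LK)\LK)\LK^{\zeta-a}g_0$, and by spectral calculus on $\LK$ (spectrum in $[0,\kappa^2]$),
\[
\|\LK^{-a}(\IK\EPSRA-\FH)\|_{\rho} \leq \Big(\sup_{u\in]0,\kappa^2]}|1-\GL(u)u|\,u^{\zeta-a}\Big)\|g_0\|_{\rho}.
\]
Because $\zeta-a\geq 0$ we have $u^{\zeta-a}\leq(u+\lambda)^{\zeta-a}$, and because $0\leq\zeta-a\leq\tau$, property \eqref{eq:GLproper4} with $\alpha=\zeta-a$ bounds $|1-\GL(u)u|(u+\lambda)^{\zeta-a}$ by $F\lambda^{\zeta-a}$; with $\|g_0\|_{\rho}\leq R$ and $F=1$ for iterated ridge regression, this gives the claimed bound $R\lambda^{\zeta-a}$ (for a general filter one obtains the constant $RF$).

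For part 2, the second intertwining identity and \eqref{eq:socCon} give $\IK^*\FH = \TK^{\zeta}\IK^* g_0$, hence $\EPSRA = \GL(\TK)\TK^{a+\zeta}\cdot\TK^{-1/2}\IK^* g_0$, which is legitimate since $a+\zeta\geq 0$. Using the contraction bound on $\TK^{-1/2}\IK^*$,
\[
\|\TK^{a-1/2}\EPSRA\|_{\HK} \leq \|\GL(\TK)\TK^{a+\zeta}\|\,\|g_0\|_{\rho} \leq R\sup_{u\in]0,\kappa^2]}\GL(u)\,u^{a+\zeta}.
\]
Writing $\GL(u)u^{a+\zeta}=\GL(u)(u+\lambda)\cdot\frac{u^{a+\zeta}}{u+\lambda}$ and using $\GL(u)(u+\lambda)\leq E=\tau$ from \eqref{eq:GLproper1}, it remains to bound the second factor: when $a\leq 1-\zeta$, weighted AM--GM gives $u^{a+\zeta}\lambda^{1-\zeta-a}\leq (a+\zeta)u+(1-\zeta-a)\lambda\leq u+\lambda$, so $\frac{u^{a+\zeta}}{u+\lambda}\leq\lambda^{a+\zeta-1}$; when $a\geq 1-\zeta$, $\frac{u^{a+\zeta}}{u+\lambda}=u^{a+\zeta-1}\frac{u}{u+\lambda}\leq u^{a+\zeta-1}\leq\kappa^{2(a+\zeta-1)}$ since $u\leq\kappa^2$. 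This produces the two-branch bound $\tau R$ times $\lambda^{\zeta+a-1}$ or $\kappa^{2(\zeta+a-1)}$.

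The only real subtlety, and the step I would be most careful about, is the bookkeeping of unbounded operators: justifying the factorization $\TK^{-1/2}\IK^* = V^*$ and verifying that every fractional power that appears (in particular $\LK^{\zeta-a}$ and $\TK^{a+\zeta}$) carries a nonnegative exponent on the range it acts upon, so that no unbounded operator is ever evaluated outside its domain. Once the intertwining identities and this factorization are established, the remainder is exactly the two scalar estimates above.
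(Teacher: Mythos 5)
Your proof is correct and is essentially the argument the paper intends: the paper omits its own proof, deferring to Lemma 26 of \cite{lin2018optimalconve}, and that proof proceeds exactly as you do — intertwining $\IK h(\TK)=h(\LK)\IK$, the contraction $\|\TK^{-1/2}\IK^* g\|_{\HK}\le\|g\|_\rho$, and the scalar estimates from \eqref{eq:GLproper1}--\eqref{eq:GLproper4} (with $E=\tau$, $F=1$ for the iterated ridge filter). One cosmetic slip: in part 2 the line ``hence $\EPSRA=\GL(\TK)\TK^{a+\zeta}\TK^{-1/2}\IK^*g_0$'' should read $\TK^{a-1/2}\EPSRA=\GL(\TK)\TK^{a+\zeta}\TK^{-1/2}\IK^*g_0$; the subsequent display is consistent with the corrected identity.
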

The above lemma provides some basic properties for the population function. It will be useful for the proof of our main results. The left hand-side of 
\eqref{eq:trueBias} is often called true bias.
\\
Using the above lemma and some basic operator inequalities, we can prove the following analytic, deterministic result.
\begin{pro}\label{pro:anaRes}
	Under Assumption \ref{as:regularity},
	let $$
	1\vee	\|\TKL^{1\over 2} \TXL^{-{1\over 2}}\|^2 \vee \|\TKL^{-{1\over 2}} \TXL^{{1\over 2}}\|^2 \leq \DZF,
	$$
	$$
	\quad \|\TKLL^{-1/2} [(\TX \EPSRA - \SX^* \NOutputs) - (\TK \EPSRA - \IK^* \FH)] \|_{\HK} \leq \DZS ,
	$$
	$$
	\|\TK - \TX\|	\leq \DZT, 
	$$
	$$
	\|\TKL^{-{1\over 2}} (\TK - \TX)\| \leq \DZN,
	$$
	$$
	\|(I-\proj) \TK^{1\over 2}\|^2 = \DZI.
	$$
	Then, for any $0 \leq a\leq \zeta \wedge {1\over 2},$ the following holds.\\
	1) If $\zeta \in [0, 1]$, 
	\begin{align}
	& \|\LK^{-a} (\IK \ESRA  - \FH) \|_{\rho}
	\leq   \tau\lambda^{-a} \DZF^{1-a}\nonumber\\ 
	& \ \ \times \Big( \DZS +   2(\tau+1)R \lambda^{\zeta}+  \tau R \lambda^{\zeta-1}   (\DZI + \DZI^{1-a}\lambda^{a})  \Big). \label{eq:totErrBI1}
	\end{align}
	2) If $\zeta \geq 1,$
	\begin{align}
	&\|\LK^{-a}(\IK\ESRA - \FH)\|_{\rho} 
	\leq   \tau \lambda^{-a} \DZF^{1-a}\nonumber \\
	& \ \ \times \left( \DZS +   3R \lambda^{\zeta}+ \kappa^{2(\zeta-1)} R \big(\kappa \tau  \DZN + \tau \DZI  \right.\nonumber \\ 
	& \ \ \ \left. + \lambda (\DZT + \DZI)^{(\zeta-1)\wedge 1} + \lambda^{{1\over 2}}  \DZT^{(\zeta-{1\over 2})\wedge 1} + \DZI^{1-a} \lambda^{a}\big)  \right). \label{eq:totErrBII1}
	\end{align}
\end{pro}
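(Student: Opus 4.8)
The plan is to reduce the $\LK^{-a}$-norm bound to an $\HK$-norm estimate via the intertwining identity $\|\LK^{-a}\IK\omega\|_{\rho}=\|\TK^{{1\over2}-a}\omega\|_{\HK}$ (valid for $a\le{1\over2}$, since $\IK^{*}\LK^{-2a}\IK=\TK^{1-2a}$), and then to expand $\ESRA$ around the population function $\EPSRA$. Writing $A=\proj\TX\proj$, so that $\ESRA=\proj\GL(A)\proj\SX^{*}\NOutputs$, and using $\proj^{2}=\proj$, $A\proj=\proj A=A$ (hence $\GL(A)$ commutes with $\proj$), a short computation gives
\begin{align}
\ESRA-\EPSRA
&=\proj\GL(A)\proj\big(\SX^{*}\NOutputs-\TX\proj\EPSRA\big)\nonumber\\
&\quad +\big(\GL(A)A-I\big)\proj\EPSRA-(I-\proj)\EPSRA ,\nonumber
\end{align}
so $\IK\ESRA-\FH$ is $\IK$ applied to these three vectors, plus the true bias $\IK\EPSRA-\FH$ whose $\LK^{-a}$-norm is at most $R\lambda^{\zeta-a}$ by Lemma~\ref{lem:popFun}(1) (applicable since $0\le a\le\zeta\le\tau$). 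I would bound each of the three pieces by inserting the regularized operators $\TKL^{\pm s},\TXL^{\pm s},(A+\lambda I)^{\pm s}$ and pushing every power-swap through the handful of inequalities genuinely available: the relative bounds $\|\TKL^{1/2}\TXL^{-1/2}\|,\|\TKL^{-1/2}\TXL^{1/2}\|\le\DZF^{1/2}$ (from the hypothesis on $\DZF$); the projection identity $\|(A+\lambda I)^{-1/2}\proj\,\TXL^{1/2}\|\le1$, which follows from $\proj\TXL\proj=A+\lambda\proj\preceq A+\lambda I$; the filter bounds \eqref{eq:GLproper1}, \eqref{eq:GLproper4} at the operator $A$, e.g.\ $\|(A+\lambda)\GL(A)\|\le E$ and $\|(I-\GL(A)A)(A+\lambda)^{\alpha}\|\le F\lambda^{\alpha}$ for $0\le\alpha\le\tau$; and the interpolation inequality $\|(I-\proj)\TK^{s}\|\le\DZI^{s}$ for $s\in[0,{1\over2}]$, from $(I-\proj)\TK^{2s}(I-\proj)\preceq\big((I-\proj)\TK(I-\proj)\big)^{2s}$ (operator concavity of $t\mapsto t^{2s}$).

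For the first piece I would split $\SX^{*}\NOutputs-\TX\proj\EPSRA=-\big[(\TX\EPSRA-\SX^{*}\NOutputs)-(\TK\EPSRA-\IK^{*}\FH)\big]+(\IK^{*}\FH-\TK\EPSRA)+\TX(I-\proj)\EPSRA$. The bracket has $\TKL^{-1/2}$-norm at most $\DZS$ and, after $\proj\GL(A)\proj$ and the weight $\TK^{1/2-a}$, contributes the $\DZF^{1-a}\lambda^{-a}\DZS$ term. For $\IK^{*}\FH-\TK\EPSRA=(I-\TK\GL(\TK))\IK^{*}\FH$ the key observation is that, since $\IK^{*}=\TK^{1/2}U$ for a partial isometry $U$, the source condition in Assumption~\ref{as:regularity} places $\IK^{*}\FH$ in $\mathrm{range}(\TK^{\zeta+1/2})$ with a norm-$\le R$ preimage, so $\|\TKL^{-1/2}(I-\TK\GL(\TK))\TK^{\zeta+1/2}\|\le F\lambda^{\zeta}$ by \eqref{eq:GLproper4} with $\alpha=\zeta$; this extra half power is exactly what yields a genuine $\lambda^{\zeta-a}$ bias term rather than a spurious $\lambda^{\zeta-1/2-a}$. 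The term $\TX(I-\proj)\EPSRA$, together with the third piece $(I-\proj)\EPSRA$, are the delicate part: writing $\EPSRA=\GL(\TK)\TK^{\zeta+1/2}h$ as above, one brings out a factor $\TK^{1/2}$ to sit next to $(I-\proj)$, splits each $I-\proj$ into $(I-\proj)^{2}$ so as to extract \emph{two} powers of $\DZI^{1/2}$, and uses $\|\GL(\TK)\TK^{\zeta}\|\lesssim\lambda^{(\zeta-1)\wedge0}$ together with $\|(I-\proj)\TX(I-\proj)\|\le\DZF(\DZI+\lambda)$; the resulting $\DZI^{1/2}\lambda^{1/2}$-type cross terms are absorbed into the bias term $\lambda^{\zeta-a}$ and the $\lambda^{\zeta-1-a}\DZI$ term, producing the stated projection contribution $\lambda^{\zeta-1}(\DZI+\DZI^{1-a}\lambda^{a})$. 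Finally the middle piece $(\GL(A)A-I)\proj\EPSRA$ is the filter residual at the empirical projected operator; bounding its weighted $\HK$-norm by passing $\TKL\to\TXL\to(A+\lambda)$ and then invoking \eqref{eq:GLproper4} at $A$ against a weighted norm of $\proj\EPSRA$ controlled by Lemma~\ref{lem:popFun}(2) gives another $\lambda^{\zeta}$-type term. This closes case~(1), $\zeta\in[0,1]$.

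For case~(2), $\zeta\ge1$, the source condition now places $\EPSRA$ and $\IK^{*}\FH$ in the range of $\TK^{\zeta-1/2}$ with exponent $>\tfrac12$, so the estimates above --- which only transfer operator powers up to $\tfrac12$ through $\DZF$ --- would saturate at $\zeta=1$. The refinement is to write the offending $\TK^{\zeta}$ factors inside $\EPSRA$ as $\TK$ times $\TK^{\zeta-1}$ and to transfer $\TK^{\zeta-1}\to\TX^{\zeta-1}$ and $\TK^{1/2}\to\TX^{1/2}$ using operator H\"older bounds $\|\TX^{\beta}-\TK^{\beta}\|\lesssim\|\TK-\TX\|^{\beta\wedge1}$ together with the sharper quantities $\DZT=\|\TK-\TX\|$ and $\DZN=\|\TKL^{-1/2}(\TK-\TX)\|$; the cap $\beta\wedge1$ is precisely the source of the exponents $(\zeta-1)\wedge1$ and $(\zeta-{1\over2})\wedge1$ in \eqref{eq:totErrBII1}, while the clean $\lambda^{\zeta}$ bias term absorbs the higher regularity and thereby removes the saturation. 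The main obstacle throughout is that $\proj$ commutes with none of $\TK,\TX,\TKL$, so every rearrangement has to be forced through the short list of inequalities above (plus $\|(I-\proj)\TK^{s}\|\le\DZI^{s}$ and its projection-splitting use); and in the regime $\zeta\ge1$ one must additionally avoid any single crude estimate of the filter residual, since that is exactly what would reintroduce saturation.
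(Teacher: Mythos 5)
Your proposal is correct and follows essentially the same route as the paper's proof: the same reduction to the $\HK$-norm via $\|\LK^{-a}\IK\TK^{a-1/2}\|\le 1$, the same three-way decomposition around the population function $\EPSRA$ (sample/projection term, filter-residual term, and $(I-\proj)\EPSRA$, plus the true bias from Lemma~\ref{lem:popFun}), the same use of Cordes/Hansen-type inequalities to extract powers of $\DZF$ and $\DZI$, and the same operator-H\"older transfer $\|\TX^{\beta}-\TK^{\beta}\|\lesssim\|\TK-\TX\|^{\beta\wedge 1}$ to handle $\zeta\ge 1$ without saturation. The only cosmetic difference is that you exploit the commutation of $\proj$ with $\proj\TX\proj$ explicitly, while the paper keeps the projections in place and uses $\proj^{2}=\proj$ term by term.
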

The above proposition is key to our proof. 
The proof of the above proposition for the case $\zeta\leq 1$ borrows ideas from  \cite{smale2007learning,caponnetto2007optimal,rudi2015less,myleiko2017regularized,lin2018optimal}, whereas the key step is an error decomposition from \cite{lin2018optimalconve}.
 Our novelty lies in the proof for the case $\zeta\geq1$, see the appendix for further details.

\subsection{Proof for Projected-regularized Algorithms}
To derive total error bounds from Proposition \ref{pro:anaRes}, it is necessary to develop probabilistic estimates for the random quantities
$\DZF$, $\DZS$, $\DZT$ and $\DZN$. We thus introduce the following four lemmas.
\begin{lemma}[\cite{lin2018optimalconve}] \label{lem:operDifRes}
	Under Assumption \ref{as:eigenvalues}, 
	let $\delta\in(0,1)$, $\lambda= n^{-\theta}$ for some $\theta\geq 0$, and
	\be\label{eq:aa}\begin{split}
	&a_{n,\delta,\gamma}(\theta) \\
	& =  8\kappa^2 \left(\log{ {4\kappa^2(c_{\gamma}+1) }\over \delta \|\TK\|} + \theta \gamma \min\left({1 \over \mathrm{e}(1-\theta)_+},\log n\right)\right).
	\end{split}
	\ee
	We have with probability at least $1-{\delta},$
	\bea
	\| (\TK+\lambda)^{1/2}(\TX+\lambda)^{-1/2}\|^2 \leq 3 a_{n,\delta,\gamma}(\theta) (1 \vee n^{\theta-1}),
	\eea
	and 
	\bea
	\| (\TK+\lambda)^{-1/2}(\TX+\lambda)^{1/2}\|^2 \leq {4\over 3} a_{n,\delta,\gamma}(\theta) (1 \vee n^{\theta-1}).
	\eea
\end{lemma}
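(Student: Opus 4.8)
The plan is to reduce both estimates to a single concentration bound for the whitened difference $A := \TKL^{-1/2}(\TK-\TX)\TKL^{-1/2}$, and then to convert that bound into the two claimed inequalities by elementary operator algebra. The key identity is $\TKL^{-1/2}\TXL\TKL^{-1/2} = I-A$. It gives at once $\|\TKL^{-1/2}\TXL^{1/2}\|^2 = \|I-A\| \le 1+\|A\|$, and, through $\|\TKL^{1/2}\TXL^{-1/2}\|^2 = \|(I-A)^{-1}\| = (1-\lambda_{\max}(A))^{-1} = \sup_{w\neq 0}\frac{\la w,(\TK+\lambda I)w\ra_{\HK}}{\la w,(\TX+\lambda I)w\ra_{\HK}}$, it reduces the first inequality to controlling how far $\lambda_{\max}(A)$ stays below $1$. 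Since $\TX\succeq 0$ one has the deterministic cushion $\lambda_{\max}(A)\le\|\TKL^{-1/2}\TK\TKL^{-1/2}\| = \|\TK\|/(\|\TK\|+\lambda)<1$, so $(I-A)^{-1}$ is always well defined; the content is to quantify the gap by a poly-logarithmic quantity.

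First I would apply a Bernstein inequality for sums of i.i.d.\ self-adjoint random operators on $\HK$ — in the form governed by the effective rank rather than the ambient dimension — to $A = \frac1n\sum_{i=1}^n\big(\TKL^{-1/2}\TK\TKL^{-1/2} - \TKL^{-1/2}(x_i\otimes x_i)\TKL^{-1/2}\big)$. By \eqref{boundedKernel} each summand has $\HK$-operator norm at most $1+\kappa^2/\lambda\le 2\kappa^2/\lambda$; its second moment is $\preceq\frac{\kappa^2}{\lambda}\TKL^{-1/2}\TK\TKL^{-1/2}$, whose operator norm is at most $\kappa^2/\lambda$ and whose effective rank is at most $\mcN(\lambda)(\|\TK\|+\lambda)/\|\TK\| \lesssim \kappa^2 c_\gamma\lambda^{-\gamma}/\|\TK\|$ by Assumption \ref{as:eigenvalues}, where $\mcN(\lambda):=\tr(\TK(\TK+\lambda I)^{-1})$ is the effective dimension of \eqref{eigenvalue_decay}. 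The inequality then yields, with probability at least $1-\delta$, a bound of the form $\|A\| \lesssim \frac{\kappa^2 b}{n\lambda} + \kappa\sqrt{\frac{b}{n\lambda}}$ with $b = \log\frac{4\kappa^2(c_\gamma+1)}{\|\TK\|\delta} + \gamma\log\frac1\lambda$.

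For the second inequality this already suffices: with $\lambda = n^{-\theta}$ one has $n\lambda = n^{1-\theta}$ and $\log\frac1\lambda=\theta\log n$, and since the elementary bound $t\mathrm{e}^{-t}\le 1/\mathrm{e}$ applied at $t=(1-\theta)\log n$ gives $(\log n)/n^{1-\theta}\le 1/(\mathrm{e}(1-\theta)_+)$ while trivially $(\log n)/n^{1-\theta}\le\log n$ whenever $n^{1-\theta}\ge1$, one checks (crucially $b$ enters only through $b/(n\lambda)$) that $\|A\|\lesssim a_{n,\delta,\gamma}(\theta)(1\vee n^{\theta-1})$, hence $\|\TKL^{-1/2}\TXL^{1/2}\|^2\le 1+\|A\|\le\frac43 a_{n,\delta,\gamma}(\theta)(1\vee n^{\theta-1})$ (the right side is at least $8\kappa^2\log\frac{4\kappa^2(c_\gamma+1)}{\|\TK\|\delta}$, absorbing the additive $1$ and all numerical constants). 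For the first inequality, when the Bernstein bound already forces $\|A\|\le\frac12$ one concludes $\|\TKL^{1/2}\TXL^{-1/2}\|^2\le 2$ and is done; in the remaining regime (small $\lambda$, i.e.\ $\theta$ near $1$) I would run the concentration at a larger surrogate scale $s>\lambda$ and pay the mismatch: from $\la w,(\TX+\lambda I)w\ra_{\HK}\ge\frac\lambda s\la w,(\TX+sI)w\ra_{\HK}$ one gets $\|\TKL^{1/2}\TXL^{-1/2}\|^2\le\frac s\lambda(1-\lambda_{\max}(A_s))^{-1}$ with $A_s := (\TK+sI)^{-1/2}(\TK-\TX)(\TK+sI)^{-1/2}$, and choosing $s$ self-consistently of the order $\kappa^2 b_s/n$ (so that the Bernstein bound at scale $s$ forces $\|A_s\|\le\frac12$, $b_s$ now involving only $\log\mcN(s)\lesssim\gamma\log n$) yields $\|\TKL^{1/2}\TXL^{-1/2}\|^2\lesssim\frac s\lambda\lesssim\kappa^2 b_s/(n\lambda)$, which the same bookkeeping turns into $\le 3\,a_{n,\delta,\gamma}(\theta)(1\vee n^{\theta-1})$.

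The step I expect to be the main obstacle is this surrogate-scale argument: $s$ must be chosen self-consistently (it appears on both sides through $b_s=\log(c\,\mcN(s)/\delta)$), the bound $\|A_s\|\le\frac12$ must be read off the effective-rank Bernstein inequality on that scale, and the constants must be tracked tightly enough to land exactly on $a_{n,\delta,\gamma}(\theta)(1\vee n^{\theta-1})$ — in particular on the factor $\min(1/(\mathrm{e}(1-\theta)_+),\log n)$, which is precisely where the two estimates for $(\log n)/n^{1-\theta}$ are merged, and on the precise numerical prefactor $3$, which requires the sharp form of the good-regime bound. Everything else is routine operator algebra; this is the computation carried out in \cite{lin2018optimalconve}, from which the lemma is quoted.
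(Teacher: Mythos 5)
The paper offers no proof of this lemma --- it is imported verbatim from \cite{lin2018optimalconve} --- so the comparison can only be against the argument of that reference, and your reconstruction follows exactly that route: whitening via $A=\TKL^{-1/2}(\TK-\TX)\TKL^{-1/2}$ with the identities $\|\TKL^{-1/2}\TXL^{1/2}\|^2=\|I-A\|$ and $\|\TKL^{1/2}\TXL^{-1/2}\|^2=\|(I-A)^{-1}\|=(1-\lambda_{\max}(A))^{-1}$, an intrinsic-dimension operator Bernstein bound (the paper's own Lemma~\ref{lem:concentrSelfAdjoint}) whose $\log\bigl(4\tr\mathcal{V}/(\|\mathcal{V}\|\delta)\bigr)$ term is precisely the source of $\log\bigl(4\kappa^2(c_{\gamma}+1)/(\delta\|\TK\|)\bigr)+\gamma\theta\log n$ in $a_{n,\delta,\gamma}(\theta)$, and the elementary estimate $(\log n)/n^{1-\theta}\le\min\bigl(1/(\mathrm{e}(1-\theta)_+),\log n\bigr)$ that produces the stated form. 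You have also correctly isolated the one genuinely delicate point: for the first inequality the scale-$\lambda$ Bernstein bound only forces $\lambda_{\max}(A)\le 2/3$ when $n\lambda\gtrsim\kappa^2\beta$, and the naive fallback $\|(I-A)^{-1}\|\le(\|\TK\|+\lambda)/\lambda$ is off by a factor of order $n$ outside that regime, so the surrogate-scale (equivalently, projection-type) argument you describe is indeed necessary; its constant bookkeeping is the only step you defer to the reference rather than execute.
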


\begin{lemma}\label{lem:statEstiOper}
	Let $0<\delta<1/2.$ It holds with probability at least $1-\delta:$
	\bea
	\	\|\TK - \TX\|_{HS} \leq { 6 \kappa^2 \over \sqrt{{n}}} \log {2\over \delta}.
	\eea
	Here, $\|\cdot\|_{HS}$ denotes the Hilbert-Schmidt norm.
\end{lemma}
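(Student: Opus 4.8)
The plan is to view $\TK-\TX$ as an empirical average of i.i.d., zero-mean random operators and invoke a Bernstein-type concentration inequality in the (separable) Hilbert space $HS$ of Hilbert–Schmidt operators on $\HK$. Concretely, since $\TK=\int_{\HK}(x\otimes x)\,d\rho_X(x)=\mathbb{E}[x\otimes x]$ and $\TX=\frac1n\sum_{i=1}^n x_i\otimes x_i$, I would write $\TK-\TX=\frac1n\sum_{i=1}^n \xi_i$ with $\xi_i:=\TK-x_i\otimes x_i$, so that $\mathbb{E}[\xi_i]=0$ and the $\xi_i$ are i.i.d. elements of $HS$.

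Next I would record the two ingredients needed for a vector Bernstein inequality. By Assumption \eqref{boundedKernel}, $\|x\otimes x\|_{HS}=\|x\|_{\HK}^2\le\kappa^2$ $\rho_X$-almost surely, and by \eqref{eq:TKBound}, $\|\TK\|_{HS}\le\tr(\TK)\le\kappa^2$; hence the triangle inequality gives $\|\xi_i\|_{HS}\le 2\kappa^2$ almost surely. For the second moment, $\mathbb{E}\|\xi_i\|_{HS}^2=\mathbb{E}\|x\otimes x\|_{HS}^2-\|\TK\|_{HS}^2\le\mathbb{E}\|x\|_{\HK}^4\le\kappa^4$. (The almost-sure bound also yields the full Bernstein moment condition $\mathbb{E}\|\xi_i\|_{HS}^m\le\frac12 m!\,\sigma^2 B^{m-2}$, so essentially any standard version of the inequality applies.)

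Then I would apply the Bernstein inequality for Hilbert-space-valued random variables (the Pinelis–Sakhanenko inequality, in the form used in \cite{caponnetto2007optimal}) with $B=2\kappa^2$ and variance proxy $\sigma^2=\kappa^4$: with probability at least $1-\delta$,
\[
\|\TK-\TX\|_{HS}\ \le\ \frac{4\kappa^2\log(2/\delta)}{n}+\kappa^2\sqrt{\frac{2\log(2/\delta)}{n}}.
\]
Finally, since $0<\delta<1/2$ forces $\log(2/\delta)>\log 4>1$, I would use $1/n\le 1/\sqrt n$ and $\sqrt{\log(2/\delta)}\le\log(2/\delta)$ to collapse the right-hand side into $(4+\sqrt2)\,\kappa^2 n^{-1/2}\log(2/\delta)\le 6\kappa^2 n^{-1/2}\log(2/\delta)$, which is the stated bound. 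I do not expect any genuine obstacle here: the argument is a routine application of operator concentration, and the only point requiring care is selecting a version of the vector Bernstein inequality whose explicit constants, after the final simplification, still fit under the factor $6$.
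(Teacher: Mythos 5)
Your proposal is correct and follows essentially the same route as the paper, which proves this lemma by viewing $\TK-\TX$ as a centered empirical mean in the Hilbert space of Hilbert--Schmidt operators and applying a Bernstein-type concentration inequality for Hilbert-space-valued random variables (the inequality restated as Lemma~\ref{lem:Bernstein} in the appendix, with $B=2\kappa^2$ and $\sigma=\kappa^2$). Your constants, after using $\log(2/\delta)\ge 1$ and $1/n\le 1/\sqrt{n}$, indeed land under the stated factor $6$, so no gap remains.
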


\begin{lemma}\label{lem:effDifOper}
Under Assumption \ref{as:eigenvalues},	let $0<\delta<1/2.$ It holds with probability at least $1-\delta:$
\bea
\|\TKL^{-{1\over 2}} (\TK - \TX)\|_{HS} \leq 2\kappa \left( {2\kappa \over n\sqrt{\lambda}} + \sqrt{c_{\gamma} \over n \lambda^{\gamma}}
\right)\log{2\over \delta}.
\eea
\end{lemma}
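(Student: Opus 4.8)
The plan is to control the Hilbert--Schmidt norm $\|\TKL^{-1/2}(\TK - \TX)\|_{HS}$ by writing $\TK - \TX$ as a centered empirical average of i.i.d. operator-valued random variables and applying a Bernstein-type concentration inequality in the Hilbert space of Hilbert--Schmidt operators. First I would recall that $\TX = \frac1n\sum_{i=1}^n x_i\otimes x_i$ and $\TK = \int_{\HK} x\otimes x\, d\rho_X(x) = \mathbb{E}[x_i\otimes x_i]$, so that $\TKL^{-1/2}(\TK - \TX) = \frac1n\sum_{i=1}^n \xi_i$, where $\xi_i = \TKL^{-1/2}(\TK - x_i\otimes x_i)$ is a zero-mean random element of the Hilbert space of Hilbert--Schmidt operators equipped with $\|\cdot\|_{HS}$.

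Next I would verify the two moment hypotheses needed for the Pinelis--Bernstein inequality (as in \cite{caponnetto2007optimal} or the version used in \cite{rudi2015less,lin2018optimalconve}). For the almost-sure bound, using \eqref{boundedKernel} one has $\|\TKL^{-1/2}(x_i\otimes x_i)\|_{HS} = \|x_i\|_{\HK}\,\|\TKL^{-1/2}x_i\|_{\HK} \le \kappa\cdot\|\TKL^{-1/2}x_i\|_{\HK}$, and crucially $\|\TKL^{-1/2}x_i\|_{\HK}^2 \le \|x_i\|_{\HK}^2/\lambda \le \kappa^2/\lambda$, while also $\|\TKL^{-1/2}\TK\|_{HS} \le \|\TKL^{-1/2}\TK^{1/2}\|\cdot\|\TK^{1/2}\|_{HS} \le \sqrt{\tr(\TK)} \le \kappa$; combining gives $\|\xi_i\|_{HS} \le 2\kappa^2/\sqrt{\lambda}$ (perhaps after a harmless adjustment of constants). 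For the second-moment bound, $\mathbb{E}\|\xi_i\|_{HS}^2 \le \mathbb{E}\|\TKL^{-1/2}(x_i\otimes x_i)\|_{HS}^2 = \mathbb{E}[\|x_i\|_{\HK}^2\,\|\TKL^{-1/2}x_i\|_{\HK}^2] \le \kappa^2\,\mathbb{E}\|\TKL^{-1/2}x_i\|_{\HK}^2 = \kappa^2\,\tr(\TKL^{-1}\TK) \le \kappa^2 c_\gamma \lambda^{-\gamma}$ by Assumption~\ref{as:eigenvalues}. Feeding $L = 2\kappa^2/\sqrt\lambda$ and $\sigma^2 = \kappa^2 c_\gamma\lambda^{-\gamma}$ into the Bernstein inequality yields, with probability at least $1-\delta$, a bound of the form $\frac{2L}{n}\log\frac2\delta + \sqrt{\frac{2\sigma^2}{n}\log\frac2\delta}$, and after bounding $\log\frac2\delta \le (\log\frac2\delta)^2$ (or absorbing a $\sqrt{\log}$ into $\log$) and collecting the $2\kappa$ factor one arrives at the stated estimate $2\kappa\big(\frac{2\kappa}{n\sqrt\lambda} + \sqrt{\frac{c_\gamma}{n\lambda^\gamma}}\big)\log\frac2\delta$.

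I do not expect any serious obstacle here: this is a standard operator Bernstein argument, and the only mild care required is (i) to make sure the constants are tracked so that the final bound has exactly the advertised shape, in particular handling the interaction between the linear-in-$1/n$ deviation term and the square-root variance term so that both appear additively, and (ii) to recall that the $\TK$-contribution to $\xi_i$ is harmless because $\|\TKL^{-1/2}\TK\|_{HS}$ is bounded by a constant independent of $\lambda$, so it does not spoil either moment bound. The result is then immediate from the chosen concentration inequality.
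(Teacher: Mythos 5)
Your proposal is correct and follows essentially the same route the paper intends: the paper gives no explicit proof, simply invoking a Pinelis--Bernstein inequality for sums of Hilbert-space-valued random variables (its Lemma~\ref{lem:Bernstein}, deferring details to \cite{lin2017optimal}), applied to the rank-one Hilbert--Schmidt variables $\TKL^{-1/2}(x_i\otimes x_i)$ with the same moment bounds $B\sim\kappa^2/\sqrt{\lambda}$ and $\sigma^2\le\kappa^2\,\tr(\TKL^{-1}\TK)\le\kappa^2 c_\gamma\lambda^{-\gamma}$ that you identify. Your constant bookkeeping (including absorbing $\sqrt{\log(2/\delta)}\le\log(2/\delta)$ for $\delta<1/2$ and using $\kappa\ge1$, $\lambda\le1$ to handle the $\|\TKL^{-1/2}\TK\|_{HS}\le\kappa$ contribution) indeed yields the stated bound.
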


The proof of the above lemmas can be done simply applying concentration inequalities for sums of Hilbert-space-valued random variables. We refer to \cite{lin2017optimal} for the proofs.

\begin{lemma} \cite{lin2018optimal}\label{lem:samAppErr}
	Under Assumptions \ref{as:noiseExp}, \ref{as:regularity} and \ref{as:eigenvalues}, let  $\EPSRA$ be given by \eref{eq:popFunc}. For all $\delta \in ]0,1/2[,$ the following holds with probability at least $1-{\delta}:$
	\be
	\begin{split}
	&\|\TKLL^{-1/2} [(\TX \EPSRA - \SX^* \NOutputs) - (\TK \EPSRA - \IK^* \FR)] \|_{\HK}  \\
	&\leq    \left({C_1 \over n \lambda^{{1\over 2} \vee (1-\zeta)}} + \sqrt{ {C_2 \lambda^{2\zeta}\over n\lambda} + {C_3 \over n \lambda^{\gamma}} }\right) 
	\log{2\over \delta}.
	\end{split}
	\ee
	Here, $C_1 = 4(M + R \kappa^{(2\zeta-1)_+}), C_2 = 96 R^2 \kappa^2$ and $C_3 = 32 (3B^2+ 4Q^2)c_{\gamma}.$
\end{lemma}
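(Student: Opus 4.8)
The plan is to derive Lemma~\ref{lem:samAppErr} from standard Bernstein-type concentration for sums of Hilbert-space-valued random variables, after first rewriting the quantity of interest as a centered empirical mean. First I would note that, by definition of the sampling operators, $\TX\EPSRA - \SX^*\NOutputs = \frac1n\sum_{i=1}^n \big(\la \EPSRA, x_i\ra_{\HK}\, x_i - y_i x_i\big)$ and $\TK\EPSRA - \IK^*\FR = \int_{\HK}\big(\la\EPSRA,x\ra_{\HK}\,x - \FR(x)x\big)d\rho_X(x)$, using \eqref{frFH} to identify $\IK^*\FR$ with $\IK^*\FH$ if needed. Hence the bracketed difference equals $\frac1n\sum_{i=1}^n \xi_i$, where $\xi_i = \big(\la\EPSRA,x_i\ra_{\HK} - y_i\big)x_i$ is an i.i.d.\ sequence with mean $\TK\EPSRA - \IK^*\FR$. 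Pre-multiplying by $\TKLL^{-1/2}$, I would apply the Pinelis--Bernstein inequality (as in \cite{caponnetto2007optimal,lin2017optimal}) to the centered variables $\eta_i = \TKLL^{-1/2}\xi_i - \mE[\TKLL^{-1/2}\xi_i]$ in $\HK$.

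Next I would bound the two ingredients Bernstein's inequality requires: an almost-sure bound on $\|\eta_i\|_{\HK}$ and a bound on $\mE\|\eta_i\|_{\HK}^2$ (or the operator-variance). For the almost-sure bound, I would split $\la\EPSRA,x_i\ra_{\HK} - y_i = (\la\EPSRA,x_i\ra_{\HK} - \FH(x_i)) + (\FH(x_i) - \FR(x_i)) + (\FR(x_i) - y_i)$. The first piece is controlled by \eqref{eq:trueBias} of Lemma~\ref{lem:popFun} with $a=0$, giving $\|\IK\EPSRA - \FH\|_\rho \le R\lambda^\zeta$ together with a pointwise bound using the $\TK^{a-1/2}$-norm estimate \eqref{eq:popSeqNorm} and $\|x_i\|_{\HK}\le\kappa$; more precisely $|\la\EPSRA,x_i\ra_{\HK}| = |\la \TK^{1/2}\TKLL^{-1/2}\cdot, \cdot\ra|$-type manipulations yield the $\kappa^{(2\zeta-1)_+}$ factor and the $\lambda^{-(1/2)\vee(1-\zeta)}$ behaviour. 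The second piece uses that $\FH-\FR$ is bounded (a consequence of Assumption~\ref{as:regularity}, or handled via \eqref{eq:FHFR} directly in the variance term). The third piece, $\FR(x_i)-y_i$, is handled by the moment condition \eqref{noiseExp}, which is exactly the Bernstein-tailored form $\int|y|^l d\rho(y|x)\le \frac12 l!M^{l-2}Q^2$; this controls all moments of $\eta_i$ simultaneously and gives the $M$ contribution to $C_1$ and the $Q^2$ contribution to $C_3$. For the variance, I would use $\mE\|\TKLL^{-1/2}\xi\|_{\HK}^2 \le \mE\big[(\la\EPSRA,x\ra_{\HK}-y)^2 \la \TKLL^{-1}x,x\ra_{\HK}\big]$, then bound $(\la\EPSRA,x\ra_{\HK}-y)^2 \lesssim (\FH(x)-\FR(x))^2 + (\FR(x)-y)^2 + (\IK\EPSRA - \FH)(x)^2$; the first term combined with \eqref{eq:FHFR} contributes $B^2\tr(\TK\TKLL^{-1}) \lesssim B^2 c_\gamma\lambda^{-\gamma}$, the noise term contributes $Q^2 c_\gamma \lambda^{-\gamma}$ via \eqref{noiseExp}, and the bias term contributes $R^2\lambda^{2\zeta}\cdot\lambda^{-1}$-order via $\|\IK\EPSRA-\FH\|_\rho^2 \|\TKLL^{-1/2}\|^2$, giving the $C_2\lambda^{2\zeta}/(n\lambda)$ term.

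Then I would assemble: Pinelis--Bernstein gives, with probability at least $1-\delta$, a bound of the shape $\frac{2B_\infty\log(2/\delta)}{n} + \sqrt{\frac{2B_2\log(2/\delta)}{n}}$ where $B_\infty$ is the almost-sure bound and $B_2$ the variance bound; substituting $B_\infty \lesssim (M+R\kappa^{(2\zeta-1)_+})\lambda^{-(1/2)\vee(1-\zeta)}$ and $B_2 \lesssim R^2\kappa^2\lambda^{2\zeta}/\lambda + (B^2+Q^2)c_\gamma\lambda^{-\gamma}$ yields exactly the stated estimate with the given constants $C_1,C_2,C_3$. I expect the main obstacle to be the careful pointwise/almost-sure estimate of $|\la\EPSRA, x\ra_{\HK} - \FH(x)|$ and of $|\la\EPSRA,x\ra_{\HK}|$ in terms of $\lambda$: getting the sharp exponent $(1/2)\vee(1-\zeta)$ requires splitting into the regimes $\zeta\le 1/2$ and $\zeta\ge 1/2$ and using \eqref{eq:popSeqNorm} with the appropriate choice of $a$ (the switch in that lemma at $a = 1-\zeta$ is what produces the $\kappa^{(2\zeta-1)_+}$ factor), and then combining this with the isometry \eqref{isometry} and \eqref{eq:rho2hk} to transfer between $\HK$- and $\rho$-norms. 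Everything else is a routine bookkeeping of constants, which is why the statement cites \cite{lin2018optimal,lin2017optimal} for the full computation.
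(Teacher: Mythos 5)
The paper does not reprove this lemma (it is imported from \cite{lin2018optimal}), and your reconstruction is exactly the cited argument: write the bracketed quantity as a centered i.i.d.\ sum in $\HK$ (using \eqref{frFH} to pass between $\FR$ and $\FH$), apply the Pinelis--Bernstein inequality (the paper's Lemma~\ref{lem:Bernstein}), and bound the Bernstein parameters via \eqref{eq:popSeqNorm} with $a=1/2$, \eqref{eq:trueBias}, \eqref{eq:FHFR}, \eqref{noiseExp}, and Assumption~\ref{as:eigenvalues}. The only wording to tighten is the ``almost-sure bound'': since $y$ is unbounded and $\FH-\FR$ need not be bounded, one verifies the moment condition \eqref{bernsteinCondition} for every $l\ge 2$ by conditioning on $x$ --- only $|\la \EPSRA, x\ra_{\HK}|\le \tau R\,\kappa^{(2\zeta-1)_+}\lambda^{(\zeta-1/2)\wedge 0}$ and $\|\TKLL^{-1/2}x\|_{\HK}\le\kappa\lambda^{-1/2}$ are used pointwise, while $\FH$ and $\FR$ enter solely through the weighted second-moment estimate --- which is what your sketch effectively does.
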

With the above probabilistic estimates and the analytics result, Proposition \ref{pro:anaRes}, we are now ready prove  results for projected-regularized algorithms.
\begin{proof}[Proof of Theorem \ref{thm:projerr}]
	We use Proposition \ref{pro:anaRes} to prove the result. We thus need to estimate $\DZF,$ $\DZS$, $\DZT$ and $\DZN$. 
	Following from Lemmas \ref{lem:operDifRes}, \ref{lem:statEstiOper}, \ref{lem:effDifOper} and \ref{lem:samAppErr}, with $n^{-1} \leq \lambda \leq 1,$  we know that with probability at least $1-\delta,$ 
	\be\label{eq: DZFbound}
	\DZF \lesssim t_{\theta,n}\log {3 \over \delta},
	\ee
	$$\DZS \lesssim  \left({1 \over n \lambda^{{1\over 2} \vee (1-\zeta)}} + 
	\lambda^{\zeta} + 
	{1 \over \sqrt{n \lambda^{\gamma}} }\right)
	\log{3\over \delta},
	$$
	\be\label{eq:DZTbound}
	\DZT \lesssim {1\over \sqrt{n}} \log {3\over \delta},
	\ee
	$$
	\DZN \lesssim {1 \over \sqrt{n\lambda^{\gamma}}} \log {3\over \delta}.
	$$
	The results thus follow by introducing the above estimates into \eqref{eq:totErrBI1} or \eqref{eq:totErrBII1}, combining with a direct calculation and $1/n \leq \lambda \leq 1$.
\end{proof}

\subsection{Proof for Sketched-regularized Algorithms}
In order to use Corollary \ref{cor:1}  for sketched-regularized algorithms, we need to estimate the projection error. The basic idea is to approximate the projection error in terms of its `empirical' version, $\|(I -\proj) \TX^{1\over 2}\|^2$.
The estimate for $\|(I -\proj) \TX^{1\over 2}\|^2$
is quite lengthy and it is divided into several steps.
%
\begin{lemma}\label{lem:OperDiffProd}
	Let $0<\delta <1$ and $\theta \in[0,1].$
	Given a fix $\bx \subseteq \HK^n$, assume that for $\lambda >0$,
	\be\label{eq:empEffDim}
	\mcNx : = 	\tr ((\TX+\lambda)^{-1}\TX) \leq b_{\gamma} \lambda^{-\gamma}
	\ee
	holds for some $b_{\gamma}>0$, $\gamma \in[0,1]$.
	Then there exists a subset $U_{\bx} $ of $\mR^{m\times n}$ with measure at least $1-\delta$, such that for all $\skt \in U_{\bx}$, 
	$$
	\|(I - \proj)\TX^{1\over 2}\|^2 \leq  6\lambda,
	$$
	provided that
	\be\label{eq:subsamLev}
	m \geq 144c_0' \log^{\beta}n \lambda^{-\gamma}\log{3 \over \delta} \left( 1 + 12 b_{\gamma}   \right) \log 2 .
	\ee	
\end{lemma}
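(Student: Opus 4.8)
# Proof Proposal for Lemma \ref{lem:OperDiffProd}

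\textbf{Setup and strategy.} The plan is to reduce the projection error $\|(I-\proj)\TX^{1/2}\|^2$ for the sketched subspace $\HKS = \overline{range\{\SX^*\skt^*\}}$ to a statement about how well the sketch $\skt$ preserves the action of $\TX$ on a suitable finite-dimensional subspace. First I would recall the standard variational characterization: since $\proj$ is the orthogonal projection onto $\overline{range\{\SX^*\skt^*\}}$, for any $\omega\in\HK$ and any $\bb\in\mR^m$ we have $\|(I-\proj)\omega\|_{\HK} \leq \|\omega - \SX^*\skt^*\bb\|_{\HK}$. Hence it suffices to exhibit, for each unit-norm target of the form $\TX^{1/2}v$, a good approximant in $range\{\SX^*\skt^*\}$. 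The natural candidate exploits that $\TX = \SX^*\SX$, so $\TX^{1/2}v = \SX^*(\SX\TX^{-1/2}v)$ lies in $range\{\SX^*\}$; the sketch $\skt$ then needs to approximately preserve the relevant vectors in $\mR^n$.

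\textbf{Key steps.} (1) Write the projection error as an operator on $\HK$ and use $\TX = \SX^*\SX$ together with $\proj \succeq \SX^*\skt^*(\skt\SX\SX^*\skt^* + \lambda I)^{-1}\skt\SX$ type lower bounds — concretely, bound $\|(I-\proj)\TX^{1/2}\|^2$ by the "sketched ridge leverage" residual $\lambda \|\TXL^{-1/2}(\TX - \SX^*\skt^*(\skt\SX\SX^*\skt^*)^{\dagger}\skt\SX \cdot)\|$-type quantity; the cleanest route is to note $\|(I-\proj)\TX^{1/2}\|^2 \leq \lambda\|(\skt\bK\skt^* )^{\dagger/2}\skt\bK^{1/2}\|^2 $-style expressions are not quite right, so instead I would directly show $\|(I-\proj)\TX^{1/2}\|^2 \le \lambda + \|\TX^{1/2}(I-\proj)\TX^{1/2}\|$ and handle the middle operator. (2) Transfer to the Gram side: with $\bK = \SX\SX^*$, the eigen-structure of $\TX$ on $range\{\SX^*\}$ is identified with that of $\bK$ on $\mR^n$, so the condition reduces to controlling $\|(I - \skt^*(\skt\bK\skt^*+\lambda I)^{-1}\skt\bK)\bK^{1/2}\|$ — i.e., how well the sketched system $\skt\bK$ captures the top spectrum of $\bK(\bK+\lambda I)^{-1}$. (3) Apply the concentration property \eqref{eq:isoPro}: take $E$ to be an $\epsilon$-net of the unit ball of a well-chosen subspace (the range of $(\bK+\lambda I)^{-1/2}\bK^{1/2}$ restricted appropriately, of dimension controlled by the empirical effective dimension $\mcNx \le b_\gamma\lambda^{-\gamma}$), and union-bound over it; the net cardinality is $\exp(O(\mcNx))$, which is why the sample complexity \eqref{eq:subsamLev} carries the factor $\lambda^{-\gamma}(1 + 12 b_\gamma)$. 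Choosing $t$ to be a small constant (say $t = 1/2$) in \eqref{eq:isoPro} and matching $t^2 m / (c_0'\log^\beta n)$ against $\log(|E|/\delta)$ yields the stated threshold $m \geq 144 c_0' \log^\beta n\, \lambda^{-\gamma}\log(3/\delta)(1 + 12 b_\gamma)\log 2$. (4) On the event that $\skt$ is an approximate isometry on the net (hence, by a standard net-to-subspace argument, on the whole subspace up to a factor $2$), conclude that the sketched ridge-regularized approximation reconstructs $\TX^{1/2}$ up to an error whose square is at most $6\lambda$ — the constant $6$ absorbing the net slack factor, the ridge parameter contribution, and the $(1\pm t)$ distortion.

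\textbf{Main obstacle.} The delicate part is step (3): choosing the right finite set $E\subseteq\mR^n$ so that (a) approximate isometry on $E$ genuinely implies the operator-norm bound on $\|(I-\proj)\TX^{1/2}\|^2$ after the net-to-subspace passage, and (b) $|E|$ is $\exp(O(\mcNx))$ rather than $\exp(O(n))$. This is exactly the place where the capacity assumption \eqref{eq:empEffDim} must be used quantitatively: one splits $\bK(\bK+\lambda I)^{-1}$ into a "head" (eigenvalues $\gtrsim\lambda$, dimension $O(\mcNx)$) and a "tail" whose contribution is already $O(\lambda)$ without any help from the sketch, nets only the head, and controls the tail deterministically. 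Getting the interaction between the sketch distortion on the head and the deterministic tail bound to close with the clean constant $6\lambda$ requires care but is routine once the decomposition is set up. I would also need the elementary fact that $\SX^*\skt^*\skt\SX \preceq (1+t)\TX$ on the relevant subspace, which follows from \eqref{eq:isoPro} applied to vectors $\SX\omega$, to ensure the sketched operator does not overshoot.
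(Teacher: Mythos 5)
Your proposal follows essentially the same route as the paper's proof: split the spectrum of $\TX$ at level $\lambda$ (singular values of $\SX$ at $\sqrt{\lambda}$), bound the tail contribution deterministically by $\lambda$ after squaring, lower-bound the projection on the head by the ridge resolvent $W^*(WW^*+\lambda I)^{-1}W$ with $W=\skt S_1$ so that a Neumann-series argument reduces everything to $\|I-U_1^*\skt^*\skt U_1\|\le 1/2$, and verify this via a covering/union-bound application of \eqref{eq:isoPro} over the head subspace, whose dimension is at most $2b_\gamma\lambda^{-\gamma}$ by the empirical effective-dimension bound. The differences are cosmetic (you phrase the head on the Gram-matrix side, and the detour inequality in your step (1) is redundant since $\|(I-\proj)\TX^{1/2}\|^2=\|\TX^{1/2}(I-\proj)\TX^{1/2}\|$ exactly), so your outline matches the paper's argument.
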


Under the condition \eqref{eq:empEffDim}, Lemma \ref{lem:OperDiffProd} provides an upper bound for $\|(I - \proj)\TX^{1\over 2}\|$, which will be used to control the projection error using the following lemma.

\begin{lemma}\label{lem:proje2te}
	Let $\proj$ be a projection operator in a Hilbert space $\HK$, and $A$, $B$ be two semidefinite positive  operators on $\HK.$
	For any $0 \leq s,t \leq {1\over 2}$, we have 
	$$
	\|A^{s} (I - \proj) A^{t}\| \leq \|A - B\|^{s+t} + \|B^{1\over 2}(I - \proj)B^{1\over 2}\|^{s+t}.
	$$
\end{lemma}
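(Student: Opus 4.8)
The plan is to reduce the claim to a one-dimensional inequality about the map $x \mapsto x^{s+t}$ on $[0,\infty)$, together with the fact that a projection operator $I-\proj$ has norm at most $1$. First I would observe that for any self-adjoint positive semidefinite operator $A$ and any exponents $s,t\in[0,\tfrac12]$ with $s+t\le 1$, one has $A^s(I-\proj)A^t = A^s(I-\proj)(I-\proj)A^t$, so by submultiplicativity of the operator norm
\[
\|A^{s}(I-\proj)A^{t}\| \le \|A^{s}(I-\proj)\|\,\|(I-\proj)A^{t}\|.
\]
Taking adjoints and using $\|C\| = \|C^*\|$, this is bounded by $\|(I-\proj)A^{s}\|\,\|(I-\proj)A^{t}\|$. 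The real content is therefore a bound of the form $\|(I-\proj)A^{r}\| \le \big(\|A-B\| + \|B^{1/2}(I-\proj)B^{1/2}\|\big)^{r}$ for $r\in[0,\tfrac12]$, after which one multiplies the $r=s$ and $r=t$ estimates and uses $u^s u^t = u^{s+t}$; actually it is cleaner to prove directly that $\|(I-\proj)A^{1/2}\|^2 = \|A^{1/2}(I-\proj)A^{1/2}\| \le \|A-B\| + \|B^{1/2}(I-\proj)B^{1/2}\|$ and then invoke operator monotonicity of $x\mapsto x^{r}$ for $r\le 1$ to pass from the $A^{1/2}$ bound to the $A^{t}$ bound via $\|(I-\proj)A^{t}\| = \|A^{t}(I-\proj)A^{t}\|^{1/2}$ and the inequality $A^{t}(I-\proj)A^{t} \preceq \big(A^{1/2}(I-\proj)A^{1/2}\big)^{2t}\cdot(\text{something})$ — this last step needs care, see below.

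Concretely, the core estimate I would carry out is: for the projection $P:=\proj$,
\[
A^{1/2}(I-P)A^{1/2} \;=\; (I-P)A(I-P) \;=\; (I-P)(A-B)(I-P) + (I-P)B(I-P),
\]
and since $(I-P)(A-B)(I-P) \preceq \|A-B\|\,(I-P) \preceq \|A-B\|\,I$ while $(I-P)B(I-P) = (I-P)B^{1/2}\cdot B^{1/2}(I-P) \preceq \|B^{1/2}(I-P)B^{1/2}\|\,I$ (using $\|(I-P)B^{1/2}\|^2 = \|B^{1/2}(I-P)B^{1/2}\|$), we get $\|A^{1/2}(I-P)A^{1/2}\| \le \|A-B\| + \|B^{1/2}(I-P)B^{1/2}\|$. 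To upgrade from the $\tfrac12$-power to a general power $t\le\tfrac12$, I would use the operator inequality $\|(I-P)A^{t}\| \le \|(I-P)A^{1/2}\|^{2t}\,\|A\|^{?}$ — wait, this is the delicate point: one needs $\|C^{t}v\| \le \|Cv\|^{\,2t}$-type control, which follows from the fact that for a positive operator $C$ and a unit vector $v$, $\|C^{t}v\|^2 = \langle C^{2t}v,v\rangle \le \langle Cv,v\rangle^{2t} = \|C^{1/2}v\|^{4t}$ by Jensen/power-mean applied to the spectral measure of $C$ at $v$ (valid since $2t\le 1$), hence $\|(I-P)A^{t}\| = \|A^{t}(I-P)A^{t}\|^{1/2} \le \sup_{\|v\|=1}\|A^{t}(I-P)v\| = \sup \|( (I-P)A(I-P))^{?}\|$ — let me just phrase it as: $\|(I-P)A^{t}\|^2 = \|A^{t}(I-P)A^{t}\| = \sup_{\|v\|=1}\langle (I-P)A^{t}v, A^{t}(I-P)v\rangle$, bounded via the spectral inequality $A^{t}(I-P)A^{t} \preceq \big(A^{1/2}(I-P)A^{1/2}\big)^{2t}$ for $2t\le1$ — no, operator monotonicity does not directly give that either.

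The honest summary: the \emph{main obstacle} is exactly this passage from the half-power estimate to arbitrary powers $s,t\le\tfrac12$, i.e.\ establishing $\|(I-P)A^{t}\|\le \|(I-P)A^{1/2}\|^{2t}$ (equivalently $\|A^{t}(I-P)A^{t}\| \le \|A^{1/2}(I-P)A^{1/2}\|^{2t}$) for a projection $P$. I would handle it by the Heinz--Kato inequality (or a Löwner/Cordes-type argument): for positive $A$, projection $P$, and $0\le\alpha\le1$, $\|A^{\alpha/2}(I-P)A^{\alpha/2}\| \le \|A^{1/2}(I-P)A^{1/2}\|^{\alpha}\,\|I-P\|^{1-\alpha} = \|A^{1/2}(I-P)A^{1/2}\|^{\alpha}$, which is a standard consequence of the operator convexity of $x\mapsto x^{\alpha}$ applied to the compression; taking $\alpha = 2t$ gives $\|(I-P)A^{t}\|^2 \le \|A^{1/2}(I-P)A^{1/2}\|^{2t}$. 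Combining with the core estimate, $\|(I-P)A^{t}\| \le \big(\|A-B\| + \|B^{1/2}(I-P)B^{1/2}\|\big)^{t}$, and likewise with $t$ replaced by $s$; multiplying and using $\|A^{s}(I-P)A^{t}\| \le \|(I-P)A^{s}\|\,\|(I-P)A^{t}\|$ together with $u^{s}u^{t}=u^{s+t}$ finishes the proof. I would double-check the edge cases $s=0$ or $t=0$ (trivial, since $\|I-P\|\le1$) and $s+t=1$ (the core estimate itself).
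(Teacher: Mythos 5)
Your proposal is correct and is essentially the paper's own argument: split via $(I-\proj)^2=I-\proj$, reduce to the half-power quantity $\|(I-\proj)A(I-\proj)\|=\|A^{1/2}(I-\proj)A^{1/2}\|$, decompose $A=(A-B)+B$, and conclude with the subadditivity $(a+b)^{s+t}\le a^{s+t}+b^{s+t}$ (state this explicitly at the end, since your multiplication step only yields the power of the sum rather than the sum of powers). The fractional-power passage you flagged as the main obstacle is exactly what the paper settles in one line using Lemma \ref{lem:operProd} ($\|A^{s}B^{s}\|\le\|AB\|^{s}$) together with $(I-\proj)^{2s}=I-\proj$; your Heinz--Kato/Hansen route is a valid equivalent, resting on operator \emph{concavity} (not convexity) of $x\mapsto x^{\alpha}$ for $\alpha\in[0,1]$, i.e.\ the paper's Lemma \ref{lem:hansen_ineq} applied with the contraction $I-\proj$.
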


 The left-hand side of \eqref{eq:empEffDim} is called empirical effective dimension. It can be estimated as follows.

\begin{lemma}\label{lem:empEffdim}
	Under Assumption \ref{as:eigenvalues},
	let $\lambda = n^{-\theta}$ for some $\theta\in[0,1]$ and $0<\delta <1$. With confidence $1-\delta$, 
	\be\label{eq:empEffdim}
	\begin{split}
	&\tr ((\TX+\lambda)^{-1}\TX)  \\
	& \leq 3 (4\kappa^2 + 2\kappa\sqrt{c_{\gamma}} + c_{\gamma}) \log {4 \over \delta} a_{n,\delta/2,\gamma}(\theta)  \lambda^{-\gamma},
	\end{split}
	\ee
	where $a_{n,\delta/2,\gamma}(\theta)$ is given as in Lemma \ref{lem:operDifRes}.
\end{lemma}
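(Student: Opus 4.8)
The plan is to reduce the bound on the empirical effective dimension $\tr((\TX+\lambda)^{-1}\TX)$ to the population effective dimension $\tr((\TK+\lambda)^{-1}\TK)$, which Assumption \ref{as:eigenvalues} bounds by $c_\gamma\lambda^{-\gamma}$, at the cost of the operator-perturbation factors already controlled in Lemma \ref{lem:operDifRes}. First I would write, for any vector in $\HK$ and using the cyclicity of the trace together with the factorization $\TX = \TXL^{1/2}\TXL^{-1/2}\TX\TXL^{-1/2}\TXL^{1/2}$,
\be
\tr(\TXL^{-1}\TX) = \tr\!\left(\TXL^{-1/2}\TX\TXL^{-1/2}\right) \le \|\TXL^{-1/2}\TKL^{1/2}\|^2 \,\tr\!\left(\TKL^{-1/2}\TX\TKL^{-1/2}\right),
\ee
where the inequality uses $\TKL^{-1/2}\TX\TKL^{-1/2}\succeq 0$ and the standard fact $\tr(C^*DC)\le\|C\|^2\tr(D)$ for $D\succeq0$ (with $C = \TKL^{1/2}\TXL^{-1/2}$, after inserting $I = \TKL^{-1/2}\TKL^{1/2}$ around $\TX$ and regrouping). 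Then I would bound $\tr(\TKL^{-1/2}\TX\TKL^{-1/2})$ by adding and subtracting $\TK$:
\be
\tr\!\left(\TKL^{-1/2}\TX\TKL^{-1/2}\right) \le \tr\!\left(\TKL^{-1/2}\TK\TKL^{-1/2}\right) + \tr\!\left(\TKL^{-1/2}(\TX-\TK)\TKL^{-1/2}\right).
\ee
The first term on the right is exactly $\tr(\TKL^{-1}\TK)\le c_\gamma\lambda^{-\gamma}$ by Assumption \ref{as:eigenvalues}; bounding it by $2\kappa^2 c_\gamma \lambda^{-\gamma}$ loosely if helpful.

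Next I would control the remainder $\tr(\TKL^{-1/2}(\TX-\TK)\TKL^{-1/2})$. Since $\TKL^{-1/2}(\TX-\TK)\TKL^{-1/2}$ need not be positive semidefinite, I would bound its trace by $\sqrt{\mcNx}$ times a Hilbert–Schmidt norm, or more simply estimate $|\tr(\TKL^{-1/2}(\TX-\TK)\TKL^{-1/2})| \le \|\TKL^{-1/2}(\TX-\TK)\TKL^{-1/4}\|_{HS}\cdot\|\TKL^{-1/4}\|$-type factors; cleanly, writing $\TKL^{-1/2}(\TX-\TK)\TKL^{-1/2} = [\TKL^{-1/2}(\TX-\TK)^{1/2}][(\TX-\TK)^{1/2}\TKL^{-1/2}]$ is not valid in sign, so I would instead use $|\tr(AB)|\le\|A\|\,\tr(B)$ with $A = \TKL^{-1/2}(\TK-\TX)$ and $B = \TKL^{-1/2}$ — but $\TKL^{-1/2}$ is not trace class, so the correct split is $|\tr(\TKL^{-1/2}(\TX-\TK)\TKL^{-1/2})| \le \|\TKL^{-1/2}(\TX-\TK)\|\cdot\tr(\TKL^{-1})$ which again fails. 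The workable route is: $\tr(\TKL^{-1/2}(\TX-\TK)\TKL^{-1/2}) \le \|\TKL^{-1/2}(\TX-\TK)\TKL^{-1/2}\|_{HS}\cdot\sqrt{\mathrm{rank}}$ only in finite dimensions, so instead I would bound the trace of the remainder by $\tr(\TKL^{-1})^{1/2}\|(\TX-\TK)\|_{HS}\tr(\TKL^{-1})^{1/2}$ — no. The genuinely correct and standard estimate is $|\tr(\TKL^{-1/2}(\TX-\TK)\TKL^{-1/2})| = |\tr((\TX-\TK)\TKL^{-1})| \le \|\TX-\TK\|_{HS}\,\|\TKL^{-1}\|_{HS}$, and $\|\TKL^{-1}\|_{HS}^2 = \sum_i(\sigma_i+\lambda)^{-2} \le \lambda^{-1}\tr(\TKL^{-1}\TK)/\lambda^{?}$... this bookkeeping is exactly the main obstacle: getting the remainder term to scale like $\lambda^{-\gamma}$ (times the fluctuation factor $a_{n,\delta/2,\gamma}(\theta)$ and a $\log$ factor) rather than a worse power. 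I expect the cleanest fix is to route everything through Lemma \ref{lem:effDifOper}, which already gives $\|\TKL^{-1/2}(\TK-\TX)\|_{HS} \lesssim (\kappa/(n\sqrt\lambda) + \sqrt{c_\gamma/(n\lambda^\gamma)})\log(2/\delta)$, combined with $\tr(\TKL^{-1/2}(\TX-\TK)\TKL^{-1/2}) \le \|\TKL^{-1/2}(\TK-\TX)\|_{HS}\cdot\|\TKL^{-1/2}\TK^{1/2}\|_{HS}\cdot(\dots)$, and then absorb the resulting $n^{-1/2}$-decaying quantity into the main $c_\gamma\lambda^{-\gamma}$ term using $\lambda\ge1/n$.

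With those two ingredients in hand — the operator norm factor $\|\TXL^{-1/2}\TKL^{1/2}\|^2 \le \tfrac43 a_{n,\delta/2,\gamma}(\theta)$ (from Lemma \ref{lem:operDifRes} with confidence $1-\delta/2$, using $1\vee n^{\theta-1} = 1$ since $\theta\in[0,1]$), and the bracketed term $\tr(\TKL^{-1/2}\TX\TKL^{-1/2}) \lesssim (\kappa^2 + \kappa\sqrt{c_\gamma} + c_\gamma)\log(4/\delta)\lambda^{-\gamma}$ (from Assumption \ref{as:eigenvalues} for the leading piece plus the fluctuation bound above, on another $1-\delta/2$ event) — I would take a union bound over the two events to get total confidence $1-\delta$, multiply the two factors, and collect constants into the stated $3(4\kappa^2 + 2\kappa\sqrt{c_\gamma} + c_\gamma)\log(4/\delta)\,a_{n,\delta/2,\gamma}(\theta)\,\lambda^{-\gamma}$. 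The main obstacle, as flagged, is purely the bookkeeping that forces the remainder term (the one coming from $\TX-\TK$) to decay like $n^{-1/2}\lambda^{-\gamma}$ or faster so that it can be folded into the $c_\gamma\lambda^{-\gamma}$ term without degrading the exponent of $\lambda$; everything else is a routine chain of trace and operator-norm inequalities plus a union bound.
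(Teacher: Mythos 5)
Your overall architecture coincides with the paper's: reduce $\tr(\TXL^{-1}\TX)$ to $\tr(\TKL^{-1/2}\TX\TKL^{-1/2})$ at the cost of the factor $\|\TKL^{1/2}\TXL^{-1/2}\|^2$ controlled by Lemma \ref{lem:operDifRes}, split off $\mcN(\lambda)\le c_\gamma\lambda^{-\gamma}$, and take a union bound. (Minor point: the relevant factor is $\|\TXL^{-1/2}\TKL^{1/2}\|^2=\|\TKL^{1/2}\TXL^{-1/2}\|^2\le 3\,a_{n,\delta/2,\gamma}(\theta)$, the first bound of Lemma \ref{lem:operDifRes}, not the ${4\over3}$-bound you quote; the $3$ is what appears in the stated constant.) The genuine gap is the one you yourself flag as "the main obstacle": the fluctuation term $\tr\big(\TKL^{-1/2}(\TX-\TK)\TKL^{-1/2}\big)$ is never actually bounded. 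None of the routes you sketch is valid: the Hilbert--Schmidt Cauchy--Schwarz pairing forces a factor $\|\TKL^{-1/2}\|_{HS}$, which is infinite in infinite dimensions; the trace-norm/operator-norm pairing gives only $O(\lambda^{-1})$; and your final suggestion, $\|\TKL^{-1/2}(\TK-\TX)\|_{HS}\cdot\|\TKL^{-1/2}\TK^{1/2}\|_{HS}\cdot(\dots)$, is not an inequality one can write down, since it would require inserting the unbounded operator $\TK^{-1/2}$ between $\TX-\TK$ and $\TKL^{-1/2}$; the "$(\dots)$" placeholder is exactly where the proof is missing. Lemma \ref{lem:effDifOper} by itself cannot close this step.

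The paper closes it with no operator or Hilbert--Schmidt norms at all, by exploiting positivity and the empirical structure of $\TX$: since $\TX={1\over n}\sum_j \la\cdot,x_j\ra_{\HK}x_j$, the sandwiched trace is an i.i.d.\ scalar average, $\tr(\TKL^{-1/2}\TX\TKL^{-1/2})={1\over n}\sum_j\xi_j$ with $\xi_j=\|\TKL^{-1/2}x_j\|_{\HK}^2$, so that $\tr\big(\TKL^{-1/2}(\TX-\TK)\TKL^{-1/2}\big)={1\over n}\sum_j(\xi_j-\mE[\xi_j])$. One has $0\le\xi_j\le\kappa^2/\lambda$ and $\mE[\xi_j^2]\le(\kappa^2/\lambda)\,\mcN(\lambda)$, so the Bernstein inequality for sums of (here scalar) random variables, Lemma \ref{lem:Bernstein}, gives with probability $1-\delta/2$ a fluctuation at most $2\big(2\kappa^2/(n\lambda)+\sqrt{\kappa^2\mcN(\lambda)/(n\lambda)}\big)\log(4/\delta)$. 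The bookkeeping you worried about is then immediate: $n\lambda\ge1$ (from $\theta\le1$) and $\lambda\le1$ give $2\kappa^2/(n\lambda)\le 2\kappa^2\lambda^{-\gamma}$ and $\sqrt{\kappa^2\mcN(\lambda)/(n\lambda)}\le\kappa\sqrt{c_\gamma}\,\lambda^{-\gamma}$, so the bracket is at most $(4\kappa^2+2\kappa\sqrt{c_\gamma}+c_\gamma)\log(4/\delta)\,\lambda^{-\gamma}$, and multiplying by $3\,a_{n,\delta/2,\gamma}(\theta)$ after the union bound yields exactly \eqref{eq:empEffdim}. Without this (or an equivalent) scalar concentration ingredient, your proposal does not yield the lemma.
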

The above lemma improves Proposition 1 of \cite{rudi2015less}. It does not require the extra assumption that the sample size is large enough, and our proof is simpler.

Now we are ready to estimate the projection error and give the proof for sketched-regularized algorithms.
\begin{proof}[Proof of Corollary \ref{cor:3}]
	Let $\lambda' = n^{-\theta'},$ with
	\[
	\theta' = \begin{cases}
	1, & \mbox{if } 2\zeta+\gamma \leq 1,\\
	{\zeta-a \over (1-a)(2\zeta+\gamma)}, & \mbox{if } \zeta\geq 1, \\
	{1\over 2\zeta+\gamma}, & \mbox{otherwise}
	\end{cases}
	\]
	Following from Corollary \ref{cor:1}, 
	Lemmas \ref{lem:operDifRes} and \ref{lem:empEffdim}, we know that there exists a subset $\Omega$ of $Z^n$ with measure at least $1-3\delta$, such that for all $\bz \in \Omega$, 
	\eqref{eq:terrP1} (or \eqref{eq:terrP2}, or \eqref{eq:terrP3}), 
	\eqref{eq:empEffdim} (with $\theta$ and $\lambda$ replaced by $\theta'$ and $\lambda'$ in 	\eqref{eq:empEffdim}, respectively), and 
	\be\label{eq:DZFboundB}
	\|\TK_{\bx \eta}^{-1/2} \TK_{\eta}^{1\over 2}\|^2 \lesssim  {\log{3\over \delta}}
	\begin{cases}
	n^{-\theta'} & \mbox{for } \theta'<1, \\
	{(1 \vee \log n^{\gamma}) n^{-1}} &  \mbox{for } \theta' = 1.
	\end{cases} 
	\end{equation}
	Here, $\eta = n^{-\theta'}$ if $\theta' <1$ or $\eta = n^{-1}$ otherwise.
\\		
	For any $\bz \in \Omega$, using Lemma \ref{lem:OperDiffProd} with 
	\[
	\begin{split}
	b_{\gamma} = & 3 (4\kappa^2 + 2\kappa\sqrt{c_{\gamma}} + c_{\gamma}) \log {4 \over \delta} a_{n,\delta/2,\gamma}(\theta') \\ \lesssim &
	(1\vee [(1-\theta')^{-1}\wedge \log n^{\gamma}] + \log{3\over \delta}) \log {3\over \delta},
	\end{split}
	\]
	we know that there exists a subset $U_{\bz}$ of $\mR^{m \times n}$ with measure at least $1-\delta,$ such that for all $\skt \in U_{\bz}$,
	\be\label{eq:t4}
	\|(I - \proj)\TX^{1\over 2}\|^2 \lesssim {1\over n^{\theta'}},
	\ee
	provided
	$
	m \gtrsim n^{\theta \gamma}	\log^{\beta} n  \log{3 \over \delta} b_{\gamma} ,
	$
	which is guaranteed by Condition \eqref{eq:mnum}.
Note that,
	\[
	\begin{split}
	\|(I -\st) \TK^{1\over 2}\|^2 \leq & \|(I -\st) \TK_{\bx\eta}^{1\over 2}\|^2 \|\TK_{\bx \eta}^{-{1\over 2}} \TK_{\eta}^{1\over 2}\|^2 \\
	 \leq &  \left( \|(I -\st) \TK_{\bx}^{1\over 2}\|^2 + \eta \right) \|\TK_{\bx \eta}^{-{1\over 2}} \TK_{\eta}^{1\over 2}\|^2.
	 \end{split}
	\]
	Introducing with \eqref{eq:DZFboundB} and \eqref{eq:t4}, combining with \eqref{eq:terrP1} (or \eqref{eq:terrP2}, or \eqref{eq:terrP3}), 
	and by a simple calculation, one can prove the desired results.
\end{proof}
{\it The proof of Corollaries \ref{cor:nyReg} and \ref{cor:ALSnyReg}} will be given in the appendix due to space limitation.
\section{Conclusion}
  In this paper, we prove optimal statistical results 
  with respect to variants of norms for sketched or Nystr\"{o}m regularized algorithms.
  Our contributions are mainly on theoretical aspects. 
  First,  our results for sketched-regularized algorithms generalize previous results \cite{yang2015randomized} from the fixed design setting to the random design setting.  
  Moreover, our results involve the regularity/smoothness of the target function and thus can have a faster convergence rate. 
  Second, our results cover the non-attainable cases, which have not been studied before for both Nystr\"{o}m and sketched regularized algorithms.
  Third, our results provide the first optimal, capacity-dependent rates even when $\zeta\geq 1.$ This may suggest that sketched/Nystr\"{o}m regularized algorithms have certain advantages in comparison with distributed learning algorithms \cite{zhang2015divide}, as the latter suffer a saturation effect over $\zeta=1$.  A future direction is to
  extend our analysis to learning with random features, see \cite{sriperumbudur2017approximate,lin2018generalization} and references therein.

\section*{Acknowledgements}
 This work was sponsored by the Department of the Navy, Office of Naval Research (ONR) under a grant number N62909-17-1-2111.  It has also received funding from Hasler Foundation Program: Cyber Human Systems (project number 16066), and from the European Research Council (ERC) under the European Union¡¯s Horizon 2020 research and innovation program (grant agreement n 725594-time-data).

\bibliography{projreg}
\bibliographystyle{icml2018}

\newpage
\onecolumn
\appendix
 
 \section*{Supplementary:  Optimal Rates of Sketched-regularized Algorithms for Least-squares Regression over Hilbert Spaces}
In this appendix, we first prove the lemmas stated in  Section \ref{sec:proof} and Corollaries \ref{cor:nyReg} and \ref{cor:ALSnyReg}. We then review how the regression setting considered in this paper covers non-parametric regression with kernel methods.

\section{Proofs for Section \ref{sec:proof}}

For notational simplicity, we denote
\be\label{eq:residual}\RL(u) = 1 - \GL(u)u,\ee
and 
$$
\mcN(\lambda) = \tr(\TK (\TK + \lambda)^{-1}).
$$   
To proceed the proof, we need some basic operator inequalities. 

\begin{lemma} \cite{fujii1993norm}
	\label{lem:operProd}
	Let $A$ and $B$ be two positive bounded linear operators on a separable Hilbert space. Then
	\bea
	\|A^s B^s\| \leq \|AB\|^s, \quad\mbox{when } 0\leq s\leq 1.
	\eea
\end{lemma}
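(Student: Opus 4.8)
\emph{Proof proposal.} The plan is to derive this (the Cordes inequality) from the L\"owner--Heinz inequality --- operator monotonicity of $t\mapsto t^s$ on $[0,\infty[$ for every $s\in[0,1]$ --- by a scaling-and-conjugation argument. First I would reduce to the case where $A$ and $B$ are invertible and $\|AB\|=1$: replacing $A,B$ by $A+\epsilon I$ and $B+\epsilon I$ with $\epsilon>0$ and letting $\epsilon\downarrow 0$ only at the end is harmless, because $(A+\epsilon I)^s\to A^s$ and $(B+\epsilon I)^s\to B^s$ in operator norm (continuity of the functional calculus $t\mapsto t^s$ on the compact spectra involved) while operator multiplication and the norm are continuous; for invertible $A,B$ the product $AB$ is invertible, so $\|AB\|>0$, and dividing $A$ by $\|AB\|$ multiplies both sides of the claimed inequality by $\|AB\|^{-s}$, reducing to $\|AB\|=1$.

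With $A,B$ invertible and $\|AB\|=1$, I would note that $A^\ast=A$ and $B^\ast=B$ give $(AB)^\ast(AB)=BA^2B$, hence $\|BA^2B\|=\|AB\|^2=1$ and therefore $BA^2B\preceq I$; conjugating by the bounded positive operator $B^{-1}$ yields $A^2\preceq B^{-2}$. Applying the L\"owner--Heinz inequality with exponent $s\in[0,1]$ upgrades this to $A^{2s}\preceq B^{-2s}$, and conjugating by $B^s$ gives $B^sA^{2s}B^s\preceq I$. Since $A\succeq 0$ we have $A^{2s}=(A^s)^2$, so
\[
\|A^sB^s\|^2=\|(A^sB^s)^\ast(A^sB^s)\|=\|B^sA^{2s}B^s\|\le 1=\|AB\|^{2s};
\]
taking square roots and undoing the scaling of the first step gives $\|A^sB^s\|\le\|AB\|^s$.

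The argument is short and rests entirely on the L\"owner--Heinz inequality, which I would simply quote. The only part requiring a little care --- and the closest thing to an obstacle --- is the limiting argument in the first step, specifically the norm-continuity of $t\mapsto t^s$ applied to $B+\epsilon I$ as $\epsilon\downarrow 0$; this follows from the continuous functional calculus on the compact interval $[0,\|B\|+1]$ (approximate $t^s$ uniformly there by polynomials, which are norm-continuous in the operator).
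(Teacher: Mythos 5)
Your argument is correct: the reduction to invertible operators with $\|AB\|=1$, the step $BA^2B=(AB)^\ast(AB)\preceq I$, conjugation to get $A^2\preceq B^{-2}$, the L\"owner--Heinz upgrade to $A^{2s}\preceq B^{-2s}$, and the $C^\ast$-identity $\|A^sB^s\|^2=\|B^sA^{2s}B^s\|$ together give exactly the Cordes inequality, and the $\epsilon$-regularization also quietly handles the degenerate case $AB=0$. The paper itself gives no proof, only the citation to Fujii et al.\ (1993), where the result is established precisely through its equivalence with the Heinz/L\"owner--Heinz inequality, so your route is essentially the standard one behind that reference.
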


\begin{lemma}\label{lem:sss}
	Let $H_1,H_2$ be two separable Hilbert spaces and  $\mcS: H_1 \to H_2$ a compact operator. Then for any function $f:[0,\|\mcS\|] \to [0,\infty[$,
	$$
	f(\mcS\mcS^*)\mcS = \mcS f(\mcS^*\mcS).
	$$
\end{lemma}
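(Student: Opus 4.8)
The plan is to reduce everything to the singular value decomposition of the compact operator $\mcS$ and then verify the identity term by term on an eigenbasis. First I would write $\mcS = \sum_i \sigma_i \la \cdot, u_i\ra_{H_1} v_i$, where $\{\sigma_i\}_i$ are the strictly positive singular values, $\{u_i\}_i$ is an orthonormal system in $H_1$ spanning $(\ker\mcS)^{\perp}$, $\{v_i\}_i$ is an orthonormal system in $H_2$ spanning $\overline{\mathrm{range}(\mcS)}$, and $\mcS u_i = \sigma_i v_i$, $\mcS^* v_i = \sigma_i u_i$. Completing $\{u_i\}_i$ to an orthonormal basis of $H_1$ by an orthonormal basis of $\ker\mcS$, and $\{v_i\}_i$ to an orthonormal basis of $H_2$ by an orthonormal basis of $\ker\mcS^*$, one obtains the eigendecompositions $\mcS^*\mcS = \sum_i \sigma_i^2\, u_i\otimes u_i$ and $\mcS\mcS^* = \sum_i \sigma_i^2\, v_i\otimes v_i$, both compact, self-adjoint and positive, the remaining basis vectors being eigenvectors for the eigenvalue $0$. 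Hence, by the spectral-calculus convention recalled in the footnote of Algorithm \ref{alg:dSpe}, $f(\mcS^*\mcS) = \sum_i f(\sigma_i^2)\, u_i\otimes u_i + f(0) P_{\ker\mcS}$ and $f(\mcS\mcS^*) = \sum_i f(\sigma_i^2)\, v_i\otimes v_i + f(0) P_{\ker\mcS^*}$.

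Then I would evaluate both sides on an arbitrary $x\in H_1$. On the one hand $f(\mcS^*\mcS)x = \sum_i f(\sigma_i^2)\la x,u_i\ra_{H_1} u_i + f(0) P_{\ker\mcS}x$, and applying $\mcS$, using $\mcS u_i = \sigma_i v_i$ together with $\mcS P_{\ker\mcS} = 0$, gives $\mcS f(\mcS^*\mcS)x = \sum_i \sigma_i f(\sigma_i^2)\la x,u_i\ra_{H_1} v_i$. On the other hand $\mcS x = \sum_i \sigma_i \la x,u_i\ra_{H_1} v_i$, and since $f(\mcS\mcS^*)v_i = f(\sigma_i^2)v_i$, we get $f(\mcS\mcS^*)\mcS x = \sum_i \sigma_i f(\sigma_i^2)\la x,u_i\ra_{H_1} v_i$. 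The two expressions coincide for every $x$, which is exactly the claimed operator identity.

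The only point requiring care, rather than a genuine obstacle, is the bookkeeping around the eigenvalue $0$ and the possibly infinite-dimensional kernels $\ker\mcS$ and $\ker\mcS^*$: one must note that $0$ belongs to the domain of $f$, that the projections $P_{\ker\mcS}$ and $P_{\ker\mcS^*}$ absorb the $f(0)$ part of the spectral calculus, and that these contributions are annihilated by $\mcS$ on the left and never produced by $\mcS$ on the right, so they disappear from both sides. Convergence of all the series above is inherited from the spectral decompositions of $\mcS^*\mcS$ and $\mcS\mcS^*$, and $f$ is only ever evaluated at the points $\{\sigma_i^2\}_i\cup\{0\}$, which lie in its domain; so no analytic subtlety beyond the standard compact-operator spectral theorem is involved.
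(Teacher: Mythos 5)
Your proof is correct and follows exactly the route the paper indicates: the paper's own proof is a one-line remark that the identity follows from the singular value decomposition of a compact operator, and your argument is simply that proof written out in full, including the careful (and appropriate) handling of the zero eigenvalue and the kernels. The only cosmetic remark is that the eigenvalues of $\mcS^*\mcS$ are the $\sigma_i^2 \leq \|\mcS\|^2$, so the lemma's stated domain $[0,\|\mcS\|]$ should really read $[0,\|\mcS\|^2]$ (or $[0,\|\mcS^*\mcS\|]$) — an imprecision in the paper's statement, not in your argument.
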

\begin{proof}
	The result can be proved using singular value decomposition of a compact operator.
\end{proof}

\begin{lemma}\label{lem:operDiff}
	Let $A$ and $B$ be two non-negative bounded linear operators on a separable Hilbert space with $\max(\|A\|,\|B\|) \leq \kappa^2$ for some non-negative $\kappa^2.$
	Then for any $\zeta>0,$
	\be
	\|A^\zeta - B^\zeta\| \leq C_{\zeta,\kappa} \|A -B\|^{\zeta \wedge 1},
	\ee
	where
	\be
	C_{\zeta,\kappa} = \begin{cases}
		1   & \mbox{when } \zeta \leq 1,\\
	2	\zeta \kappa^{2\zeta - 2} &\mbox{when } \zeta >1.
	\end{cases}
	\ee
\end{lemma}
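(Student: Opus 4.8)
The plan is to split on whether $\zeta\le 1$ or $\zeta>1$. For $\zeta\in(0,1]$ the map $t\mapsto t^{\zeta}$ is operator monotone on $[0,\infty)$ by the L\"{o}wner--Heinz inequality, and here no dependence on $\kappa$ is needed: writing $\delta=\|A-B\|$, self-adjointness of $A-B$ gives $A\preceq B+\delta I$ and $B\preceq A+\delta I$; operator monotonicity yields $A^{\zeta}\preceq (B+\delta I)^{\zeta}$, and since $B$ commutes with $\delta I$ the scalar subadditivity $(x+\delta)^{\zeta}\le x^{\zeta}+\delta^{\zeta}$ gives $(B+\delta I)^{\zeta}\preceq B^{\zeta}+\delta^{\zeta}I$. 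Hence $A^{\zeta}-B^{\zeta}\preceq\delta^{\zeta}I$, and symmetrically $B^{\zeta}-A^{\zeta}\preceq\delta^{\zeta}I$, so $\|A^{\zeta}-B^{\zeta}\|\le\delta^{\zeta}=\|A-B\|^{\zeta}$, i.e. $C_{\zeta,\kappa}=1$.

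For $\zeta>1$ I would first reduce to the range $\zeta\in(1,2)$ via the telescoping identity $A^{\zeta}-B^{\zeta}=A^{\zeta-1}(A-B)+(A^{\zeta-1}-B^{\zeta-1})B$, valid for $\zeta\ge 1$ by the functional calculus. Each application peels one unit off the exponent, contributing a term of norm at most $\kappa^{2(\zeta-1)}\|A-B\|$ and multiplying the remaining difference by $\|B\|\le\kappa^{2}$; after $\lfloor\zeta\rfloor-1$ iterations one is left with $\|A^{1+(\zeta-\lfloor\zeta\rfloor)}-B^{1+(\zeta-\lfloor\zeta\rfloor)}\|$, whose exponent lies in $[1,2)$. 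An induction on $\lceil\zeta\rceil$ then propagates the constant, giving $C_{\zeta,\kappa}\le(2\zeta-1)\kappa^{2\zeta-2}\le 2\zeta\kappa^{2\zeta-2}$ once the base case $\zeta\in(1,2)$ is in hand (for integer $\zeta$ the same identity applied directly to $A-B$ gives the sharper $\zeta\kappa^{2\zeta-2}$).

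The base case $\zeta\in(1,2)$ is the crux and the main obstacle: for a fractional exponent $s\in(0,1)$ only the H\"{o}lder-type bound $\|A^{s}-B^{s}\|\le\|A-B\|^{s}$ is available, which is too weak to be combined with the polynomial factors through the product identity, so one must leave elementary algebra behind. To recover the linear bound I would use the integral representation $t^{\zeta}=\frac{\sin(\zeta-1)\pi}{\pi}\int_{0}^{\infty}\big(\tfrac{t}{\lambda}-\tfrac{t}{t+\lambda}\big)\lambda^{\zeta-1}\,d\lambda$, valid for $\zeta\in(1,2)$, so that $A^{\zeta}-B^{\zeta}$ equals this prefactor times $\int_{0}^{\infty}\big[\tfrac1\lambda(A-B)-\lambda(A+\lambda)^{-1}(A-B)(B+\lambda)^{-1}\big]\lambda^{\zeta-1}\,d\lambda$ (using $\tfrac{t}{t+\lambda}=1-\tfrac{\lambda}{t+\lambda}$ together with the resolvent identity). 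Pulling $(A+\lambda)^{-1}(\cdot)(B+\lambda)^{-1}$ out of the bracket and expanding reduces it to $\tfrac1\lambda A(A-B)B+A(A-B)+(A-B)B$, which I would bound termwise using $\|(A+\lambda)^{-1}A\|\le\kappa^{2}/(\kappa^{2}+\lambda)$ and $\|(A+\lambda)^{-1}\|\le 1/\lambda$ (and the $B$-analogues), obtaining $\|\text{integrand}\|\le C\|A-B\|\,\kappa^{2}/(\lambda(\kappa^{2}+\lambda))$; the remaining $\lambda$-integral is a Beta integral whose value is $\kappa^{2(\zeta-1)}$ times an $O(1)$ constant, the $\sin(\zeta-1)\pi/\pi$ prefactor cancelling against a reflection-formula factor. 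This gives $\|A^{\zeta}-B^{\zeta}\|\le C\kappa^{2(\zeta-1)}\|A-B\|$, and carrying the explicit constants through yields $C_{\zeta,\kappa}=2\zeta\kappa^{2\zeta-2}$. (Equivalently one could run the base case through operator convexity of $t^{\zeta}$ on $[0,\infty)$ for $1\le\zeta\le 2$, but the integral route most directly produces the linear estimate with an explicit constant.)
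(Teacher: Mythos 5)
Your argument is correct in substance, but it takes a genuinely more self-contained route than the paper: for $\zeta\le 1$ the paper gives exactly the justification you spell out (operator monotonicity of $u^{\zeta}$ via L\"{o}wner--Heinz, yielding $-\|A-B\|^{\zeta}I\preceq A^{\zeta}-B^{\zeta}\preceq \|A-B\|^{\zeta}I$), while for $\zeta>1$ the paper offers no argument at all and simply defers to \cite{dicker2016kernel}. Your telescoping identity $A^{\zeta}-B^{\zeta}=A^{\zeta-1}(A-B)+(A^{\zeta-1}-B^{\zeta-1})B$ together with the integral representation $t^{\zeta}=\tfrac{\sin((\zeta-1)\pi)}{\pi}\int_{0}^{\infty}\tfrac{t^{2}}{t+\lambda}\,\lambda^{\zeta-2}d\lambda$ for the fractional base case is a legitimate, complete replacement for that citation, and you correctly diagnose that the H\"{o}lder bound on $A^{\zeta-1}-B^{\zeta-1}$ is too weak to close the induction on its own. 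The only blemish is quantitative: your three-term estimate of the integrand gives $3\kappa^{2}\|A-B\|/(\lambda(\kappa^{2}+\lambda))$, hence the constant $3\kappa^{2(\zeta-1)}$ on $(1,2)$, which exceeds the stated $2\zeta\kappa^{2\zeta-2}$ when $\zeta<3/2$. Regrouping the bracket as $A(A+\lambda)^{-1}(A-B)(B+\lambda)^{-1}+\lambda^{-1}(A-B)B(B+\lambda)^{-1}$, each summand of norm at most $\kappa^{2}\|A-B\|/(\lambda(\kappa^{2}+\lambda))$, recovers the constant $2\le 2\zeta$, after which the induction closes with total constant $\lfloor\zeta\rfloor+1\le 2\zeta$. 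Since the lemma is only ever invoked inside $\lesssim$ bounds where $C_{\zeta,\kappa}$ is absorbed, this discrepancy is immaterial to the paper, but it is worth the one-line fix if you want the constant exactly as stated.
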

\begin{proof}
	The proof is based on the fact that $u^{\zeta}$ is operator monotone if $0<\zeta\leq 1$. While for
	$\zeta \geq 1$,  the proof can be found in, e.g., \cite{dicker2016kernel}.
\end{proof}

\begin{lemma}\label{lem:hansen_ineq}
	Let $X$ and $A$ be bounded linear operators on a separable Hilbert space. Suppose that
	$X \succeq  0$ and $\|A\| \leq 1$. Then for any $s\in[0,1],$
	$$
	X^* A^s X \leq (X^* A X)^s.
	$$
\end{lemma}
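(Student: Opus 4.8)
The plan is to prove Lemma~\ref{lem:hansen_ineq} as an instance of Hansen's operator inequality; here $A\succeq 0$ (so that $A^{s}$ is meaningful) and one uses $\|X\|\le 1$. The cases $s\in\{0,1\}$ are trivial, so fix $0<s<1$. The one analytic ingredient I would record first is the classical integral representation $t^{s}=c_{s}\int_{0}^{\infty}\frac{t}{t+u}\,u^{s-1}\,du$, $c_{s}=\sin(\pi s)/\pi$, valid for $t\ge 0$; applied through the spectral theorem it yields the norm-convergent operator integral
\[
A^{s}=c_{s}\int_{0}^{\infty}(A+uI)^{-1}A\,u^{s-1}\,du
\]
(convergence at $0$ and $\infty$ from $\|(A+uI)^{-1}A\|\le\min(1,\|A\|/u)$ together with $0<s<1$), and likewise for $(X^{*}AX)^{s}$.

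The core step is a single, $u$-wise resolvent inequality:
\[
X^{*}(A+uI)^{-1}X+u^{-1}\bigl(I-X^{*}X\bigr)\ \succeq\ \bigl(X^{*}AX+uI\bigr)^{-1},\qquad u>0.
\]
To prove it I would introduce the isometric dilation $V=\bigl(\begin{smallmatrix}X\\ (I-X^{*}X)^{1/2}\end{smallmatrix}\bigr):\HK\to\HK\oplus\HK$, which is well defined because $\|X\|\le 1$ and satisfies $V^{*}V=I$. Setting $T=\mathrm{diag}(A+uI,\,uI)$, a one-line computation gives $V^{*}TV=X^{*}AX+uI$ and $V^{*}T^{-1}V=X^{*}(A+uI)^{-1}X+u^{-1}(I-X^{*}X)$, so the resolvent inequality is precisely $V^{*}T^{-1}V\succeq(V^{*}TV)^{-1}$. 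This holds for every isometry $V$ and every positive, boundedly invertible $T$: the block operator $\bigl(\begin{smallmatrix}T & I\\ I & T^{-1}\end{smallmatrix}\bigr)$ is positive semidefinite (its Schur complement relative to $T$ is $0$), conjugation by $\mathrm{diag}(V,V)$ preserves positivity and gives $\bigl(\begin{smallmatrix}V^{*}TV & I\\ I & V^{*}T^{-1}V\end{smallmatrix}\bigr)\succeq 0$, and since $V^{*}TV$ is boundedly invertible its Schur complement yields $V^{*}T^{-1}V-(V^{*}TV)^{-1}\succeq 0$.

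Finally I would assemble the power inequality from the resolvent one. Multiplying the $u$-wise bound by $u$ and using $\frac{t}{t+u}=1-\frac{u}{t+u}$ turns it into the term-wise operator inequality $X^{*}(A+uI)^{-1}A\,X\preceq(X^{*}AX)(X^{*}AX+uI)^{-1}$ for every $u>0$; both sides, weighted by the positive factor $c_{s}u^{s-1}$, are norm-integrable on $(0,\infty)$ by the estimate above, and a pointwise operator inequality between integrable operator-valued functions is preserved under integration (pair with a fixed vector). Integrating then gives $X^{*}A^{s}X\preceq(X^{*}AX)^{s}$, which is the claim.

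I do not expect a genuine obstacle: once the integral representation is in hand, the only points needing care are the (routine) verification of the two dilation identities, the integrability estimate, and the passage from a $u$-wise inequality to the integrated one. If one is willing to quote more, the proof collapses to two lines: $t\mapsto t^{s}$ is operator concave with value $0$ at $0$ (L\"{o}wner's theorem), so the Hansen / Davis--Choi--Jensen inequality applied to the unital positive map $Y\mapsto V^{*}YV$ gives $(X^{*}AX)^{s}=f\!\left(V^{*}\,\mathrm{diag}(A,0)\,V\right)\succeq V^{*}f(\mathrm{diag}(A,0))\,V=X^{*}A^{s}X$, using $f(0)=0$.
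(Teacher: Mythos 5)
Your proof is correct, and it is genuinely more self-contained than the paper's: the paper disposes of this lemma by citing Hansen's operator inequality \cite{hansen1980operator} together with the operator monotonicity of $u\mapsto u^{s}$ on $[0,\infty)$ for $s\in[0,1]$ --- which is precisely the ``two-line'' version you sketch at the end (Hansen/Davis--Choi--Jensen applied to the unital positive map $Y\mapsto V^{*}YV$ built from your isometric dilation). What you do differently is reprove that inequality for the power function from scratch: the integral representation $t^{s}=c_{s}\int_{0}^{\infty}\frac{t}{t+u}u^{s-1}\,du$, the dilation $V$ with $V^{*}V=I$, positivity of the block operator $\bigl(\begin{smallmatrix}T & I\\ I & T^{-1}\end{smallmatrix}\bigr)$ and a Schur-complement argument giving $V^{*}T^{-1}V\succeq(V^{*}TV)^{-1}$, and finally integration of the resulting $u$-wise inequality; the dilation identities, the integrability estimate $\|(A+uI)^{-1}A\|\le\min(1,\|A\|/u)$, and the passage from a pointwise to an integrated operator inequality are all as routine as you claim. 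Your route buys an elementary, citation-free argument (no L\"{o}wner theory) at the cost of length, while the paper buys brevity by quoting the literature. You were also right to read the hypotheses as $A\succeq 0$ and $\|X\|\le 1$: as literally stated ($X\succeq 0$, $\|A\|\le 1$) the lemma does not even make $A^{s}$ well defined, and Hansen's inequality, which the paper's proof invokes, is exactly the corrected statement you prove.
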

\begin{proof}
	Following from  \cite{hansen1980operator} and the fact that the function $u^s$ with $s\in[0,1]$ is operator monotone.
\end{proof}

\subsection{Proof of Proposition \ref{pro:anaRes}}
	Adding and subtracting with the same term, and using the triangle inequality, we have
	\[
	\begin{split}
	\|\LK^{-a} (\IK \ESRA  - \FH) \|_{\rho} 
	\leq    \| \LK^{-a} \IK (\ESRA -  \EPSRA) \|_{\rho} + \| \LK^{-a}   (\IK \EPSRA - \FH) \|_{\rho}. 
	\end{split}
	\]
	Applying Part 1) of Lemma \ref{lem:popFun} to bound the last term, with $0\leq a \leq \zeta,$ 
	\begin{align*}
	\|\LK^{-a} (\IK \ESRA  - \FH) \|_{\rho} 
	\leq&   \| \LK^{-a} \IK(\ESRA -  \EPSRA)\|_{\rho} + R\lambda^{\zeta-a} \\ 
	\leq&  \| \LK^{-a} \IK \TK^{a-{1\over 2}}\| \|\TK^{{1\over 2}-a}(\ESRA -  \EPSRA) \|_{\HK} + R\lambda^{\zeta-a}  . 
	\end{align*}
	Using the spectral theorem for compact operators, $\LK = \IK \IK^*$, and $\TK = \IK^* \IK,$
	we have 
	$$\|\LK^{-a} \IK \TK^{a- {1\over 2}} \| \leq 1,$$
	and thus 
	$$
	\|\LK^{-a} (\IK \ESRA  - \FH) \|_{\rho} 
	\leq  \|\TK^{{1\over 2}-a}(\ESRA -  \EPSRA) \|_{\HK} + R\lambda^{\zeta-a} . 
	$$
	Adding and subtracting with the same term, and using the triangle inequality,
	\begin{align*}
	\|\LK^{-a} (\IK \ESRA  - \FH) \|_{\rho} 
	\leq   \| \TK^{{1\over 2}-a} (\ESRA - \proj \EPSRA) \|_{\HK} + \| \TK^{ {1\over 2}-a}(I - \proj) \EPSRA \|_{\HK} + R\lambda^{\zeta-a} . 
	\end{align*}
	Since $\st$ is an orthogonal projected operator and $a \in[0, {1\over 2}]$, we have
	\begin{align*}
	&\| \TK^{ {1\over 2}-a}(I - \proj) \EPSRA \|_{\HK} \\
	=  &\| \TK^{ {1\over 2}(1-2a)}(I - \proj)^{1-2a} (I-P) \EPSRA \|_{\HK} \nonumber\\ 
	\leq &\| \TK^{ {1\over 2}(1-2a)}(I - \proj)^{1-2a}\| \|(I-P)\TK^{1\over 2}\| \|\TK^{-{1\over 2}} \EPSRA \|_{\HK}  \nonumber \\
	\leq &\| \TK^{ {1\over 2}}(I - \proj)\|^{1-2a} 
	\|(I - \proj) \TK^{ {1\over 2}}\| \tau R \kappa^{2(\zeta-1)_+} \lambda^{(\zeta-1)_-} \\
	= &\DZI^{1-a} \tau R \kappa^{2(\zeta-1)_+} \lambda^{(\zeta-1)_-} , 
	\end{align*}
	(where  for the last second inequality, we used Lemma \ref{lem:operProd} and Part 2) of Lemma \ref{lem:popFun}), and we subsequently get that 
	\begin{align*}
	\|\LK^{-a} (\IK \ESRA  - \FH) \|_{\rho} 
	\leq   \| \TK^{{1\over 2}-a} (\ESRA - \proj \EPSRA) \|_{\HK}  
 + \tau R  \kappa^{2(\zeta-1)_+} \lambda^{(\zeta-1)_-}  \DZI^{1-a}  + R\lambda^{\zeta-a} . 
	\end{align*}
	Since for all $\omega \in \HK,$ and $a \in [0,{1\over 2}],$
	\begin{align*}
	\|\TK^{{1\over 2}-a}\omega \|_{\HK} \leq& \|\TKL^{{1\over 2}-a} \TXL^{a-{1\over 2}} \| \|\TXL^{{1\over 2}-a}\omega \|_{\HK}  \\
	\leq& \lambda^{-a}  \|\TKL^{{1\over 2}-a} \TXL^{a-{1\over 2}} \| \|\TXL^{{1\over 2}}\omega \|_{\HK} \\
	\leq& \lambda^{-a}  \|\TKL^{1\over 2} \TXL^{-{1\over 2}} \|^{1-2a} \|\TXL^{{1\over 2}}\omega \|_{\HK} \\
	\leq & \lambda^{-a} \DZF^{{1\over 2}-a} \|\TXL^{{1\over 2}}\omega \|_{\HK}
	\end{align*}
	(where we used
	 Lemma \ref{lem:operProd} for the last second inequality),
	we get 
	\begin{align}
\|\LK^{-a} (\IK \ESRA  - \FH) \|_{\rho}
	\leq  \lambda^{-a} \DZF^{{1\over 2}-a}  \|\TXL^{{1\over 2}}(\ESRA -  \st\EPSRA)\|_{\HK}
	 + \tau R   \kappa^{2(\zeta-1)_+} \lambda^{(\zeta-1)_-}  \DZI^{1-a}  + R\lambda^{\zeta-a} .\label{eq:totErr1}
	\end{align}
	In what follows, we estimate $\|\TXL^{{1\over 2}}(\ESRA -  \st\EPSRA)\|_{\HK}$.
	
	Introducing with \eqref{eq:ESRA},  with $\st^2 =\st,$
	\[
	\begin{split}
	\|\TXL^{{1\over 2}}(\ESRA -  \st\EPSRA)\|_{\HK}  = \|\TXL^{{1\over 2}}\st (\GL(\st \TX \st) \st \SX^* \bby  -  \st \EPSRA)\|_{\HK}.
	\end{split}
	\]
	Since for any $\omega\in\HK$, 
	\begin{align*}
	\|\TXL^{{1\over 2}}\st \omega\|_{\HK}^2 = \la \st \TXL\st \omega, \omega  \ra_{\HK} 
	\leq  \la (\st \TX \st + \lambda) \omega, \omega  \ra_{\HK} = \|(\st \TX \st + \lambda)^{{1\over 2}}\omega\|_{\HK}^2,
	\end{align*}
	and we thus get 
	$$
	\|\TXL^{{1\over 2}}(\ESRA -  \st\EPSRA)\|_{\HK} \leq	\|\aol^{{1\over 2}}(\GL(\ao) \st \SX^* \bby  -  \st\EPSRA)\|_{\HK},
	$$
	where we denote
	\be\label{eq:ao}
	\ao = \st \TX \st,\quad \aol = \ao + \lambda.
	\ee
	Subtracting and adding with the same term, and applying the triangle inequality, with the notation $\RL$ given by  \eqref{eq:residual} and $\st^2 = \st$,  we have
	\begin{align}
	\|\TXL^{{1\over 2}}(\ESRA -  \st\EPSRA)\|_{\HK} \leq \|\underbrace{\aol^{1\over 2}\GL(\ao)\st (\SX^* \NOutputs - \TX \st \EPSRA )}_{\bf Term.A}\|_{\HK} + \| \underbrace{ \aol^{1\over 2} \RL(\ao) \st\EPSRA}_{\bf  Term.B} \|_{\HK}. \label{eq:1ErrTM}
	\end{align}
	We will estimate the above two terms of the right-hand side.\\
	{\bf Estimating $\|\TA\|_{\HK}$:} \\
	Note that	
	\begin{align*}
	&(\aol^{{1\over 2}} \GL(\ao) \st \TXL^{1\over 2}) (\aol^{{1\over 2}} \GL(\ao) \st \TXL^{1\over 2})^* \\& = \aol^{{1\over 2}} \GL(\ao) (\ao +  \lambda \st^2) \GL(\ao) \aol^{{1\over 2}} \\
	& \preceq \big[\aol \GL(\ao)\big]^2, 
	\end{align*}
	where we used $\st^2 = \st \preceq I$ for the last inequality. Thus, combing with $\|A\| = \|A^*A\|^{1\over 2}$,  
	$$
	\|	\aol^{1\over 2} \GL(\ao) \st \TXL^{1\over 2}\| \leq \| \aol \GL(\ao)\|.
	$$
	Using the spectral theorem, with $\|\ao\| \leq \|\TX\| \leq  \kappa^2 $ (implied by \eqref{eq:TXbound}), and then applying \eqref{eq:GLproper1}, 
	$$
	\|	\aol^{1\over 2} \GL(\ao) \st \TXL^{1\over 2}\| \leq \sup_{u \in [0,\kappa^2]}|(u+\lambda ) \GL(u)| \leq \tau.
	$$
	Using the above inequality, and by a simple calculation,
		\begin{align*}
	\|\TA\|_{\HK} \leq & \|	\aol^{1\over 2} \GL(\ao) \st \TXL^{1\over 2}\| 
	\|\TXL^{-{1\over 2}}(\SX^* \Outputs - \TX \st \EPSRA ) \| \leq  \tau \|\TXL^{-{1\over 2}}(\SX^* \NOutputs - \TX \st \EPSRA ) \|.
	\end{align*}
		Adding and subtracting with the same terms, and using the triangle inequality, 
	\begin{align*}
	 \|\TA\|_{\HK}  
	\leq&  \tau \|\TXL^{-{1\over 2}}(\SX^* \NOutputs - \TX \EPSRA)\|_{\HK}+  \tau \| \TXL^{-{1\over 2}}\TX( I - \st) \EPSRA ) \|_{\HK} \\ 
	\leq&\tau \|\TXL^{-{1\over 2}}\TKL^{1\over 2}\| \|\TKL^{-{1\over 2}}(\SX^* \NOutputs - \TX \EPSRA)\|_{\HK}+ \tau \|  \TXL^{-{1\over 2}}\TX( I - \st) \EPSRA ) \|_{\HK} \\
	\leq&  \tau \DZF^{1\over 2} \|\TKL^{-{1\over 2}}(\SX^* \NOutputs - \TX \EPSRA)\|_{\HK}+  \tau \|  \TXL^{-{1\over 2}}\TX( I - \st) \EPSRA ) \|_{\HK}\\
	\leq& \tau \DZF^{1\over 2} (\DZS +  \|\TKL^{-{1\over 2}}(\TK \EPSRA - \IK^* \FH)\|_{\HK})+  \tau \|  \TXL^{-{1\over 2}}\TX( I - \st) \EPSRA ) \|_{\HK}\\
	\leq&  \tau \DZF^{1\over 2} (\DZS +  \|\TKL^{-{1\over 2}}\IK^* \| \|\IK \EPSRA -  \FH\|_{\rho})+  \tau \|  \TXL^{-{1\over 2}}\TX( I - \st) \EPSRA ) \|_{\HK},
	\end{align*}	
	where we used $\TK = \IK^* \IK$ for the last inequality.
	Applying Part 1) of Lemma \ref{lem:popFun} and $\|\TKL^{-{1\over 2}}\IK^* \| \leq 1$,
	\begin{align}\label{eq:termAInterm}
	\|\TA\|_{\HK}
	\leq  \tau \DZF^{1\over 2} (\DZS +   R \lambda^{\zeta})+  \tau \|  \TXL^{-{1\over 2}}\TX( I - \st) \EPSRA ) \|_{\HK}.
	\end{align}
	In what follows, we estimate $\|  \TXL^{-{1\over 2}}\TX( I - \st) \EPSRA \|_{\HK}$, considering two different cases.\\
	{\it Case $\zeta\leq 1$.}\\
	We have
	\begin{align*}
	\|  \TXL^{-{1\over 2}}\TX( I - \st) \EPSRA \|_{\HK} \leq \| \TXL^{-{1\over 2}} \TX\TXL^{-1\over 2}\| \|\TXL^{1\over 2} \TKL^{-{1\over 2}}\| \|\TKL^{1\over 2} ( I - \st) \EPSRA\|_{\HK} \leq \DZF^{1\over 2} \|\TKL^{1\over 2} ( I - \st) \EPSRA\|_{\HK} .
	\end{align*}
	Since $\st$ is a projection operator, 
	$(I-\st)^2 = I -\st$, and we thus have
	\begin{align*}
	\|  \TXL^{-{1\over 2}}\TX( I - \st) \EPSRA  \|_{\HK} \leq \DZF^{1\over 2} \|\TKL^{1\over 2} ( I - \st)\| \|(I-\st)\TK^{1\over 2}\|  \|\TK^{-{1\over 2}} \EPSRA\|_{\HK} \leq  
\tau	\DZF^{1\over 2} \|\TKL^{1\over 2} ( I - \st)\| \DZI^{1\over 2}  R\lambda^{\zeta-1},
	\end{align*}
	where for the last inequality, we used Part 2) of Lemma \ref{lem:popFun}.
	Note that for any $\omega \in \HK$ with $\|\omega\|_{\HK}=1,$
	$$
	\|\TKL^{1\over 2} ( I - \st)\omega\|_{\HK}^2 = \la \TKL(I - \st) \omega, (I - \st)\omega \ra_{\HK} =  \|\TK^{1\over 2}(I - \st) \omega\|_{\HK}^2 + \lambda \|(I-\st)\omega\|_{\HK}^2 \leq \|\TK^{1\over 2} (I - \st)\|^2 + \lambda \leq \DZI + \lambda.
	$$
It thus follows that 
\be\label{eq:cc}
\|\TKL^{1\over 2} ( I - \st)\|_{\HK}\leq (\DZI + \lambda)^{1\over2},\ee
and thus
\begin{align*}
\|  \TXL^{-{1\over 2}}\TX( I - \st) \EPSRA  \|_{\HK}  \leq  
  \DZF^{1\over 2} (\DZI + \lambda)\tau R\lambda^{\zeta-1}.
\end{align*}
Introducing the above into \eqref{eq:termAInterm}, we know that $\TA$ can be estimated as ($\zeta\leq 1$)
	\begin{align}\label{eq:termAI}
\|\TA\|_{\HK}
\leq  \tau \DZF^{1\over 2} \left( \DZS +   (\tau+1)R \lambda^{\zeta} +  \tau R\lambda^{\zeta-1} \DZI\right).
\end{align}	
	{\it Case $\zeta\geq 1$.}\\
We first have
	\begin{align*}
 \|\TXL^{-{1\over 2}}\TX( I - \st) \EPSRA \|_{\HK}  
\leq&  \DZF^{1\over 2}  \| \TKL^{-{1\over 2}}\TX( I - \st) \EPSRA ) \|_{\HK} \\ 
\leq&  \DZF^{1\over 2} \left(  \| \TKL^{-{1\over 2}}(\TX-\TK)( I - \st) \EPSRA \|_{\HK} + \| \TKL^{-{1\over 2}}\TK ( I - \st) \EPSRA \|_{\HK} \right)\\
\leq&  \DZF^{1\over 2}  \left(  \DZN \|( I - \st) \EPSRA  \|_{\HK} + \| \TK^{1\over 2} ( I - \st) \EPSRA \|_{\HK} \right).
\end{align*}	
Since $\st$ is a projection operator, 
$(I-\st)^2 = I -\st$,  we thus have
\begin{align*}
\|\TXL^{-{1\over 2}}\TX( I - \st) \EPSRA \|_{\HK}
\leq &  \DZF^{1\over 2} \left( \DZN \|I - \st\| \|\TK^{1\over 2}\| \|\TK^{-{1\over 2}} \EPSRA  \|_{\HK} + \| \TK^{1\over 2} ( I - \st)\| \| ( I - \st) \TK^{1\over 2}\| \|\TK^{-{1\over 2}}\EPSRA \|_{\HK}  \right) \\
\leq & 
\DZF^{1\over 2} \left( \kappa\DZN  + \DZI   \right) \|\TK^{-{1\over 2}} \EPSRA  \|_{\HK} ,
\end{align*}
where we used \eqref{eq:TKBound} for the last inequality. 
Applying Part 2) of Lemma \ref{lem:popFun},  we get
\begin{align*}
\|\TXL^{-{1\over 2}}\TX( I - \st) \EPSRA \|_{\HK}
\leq & 
\DZF^{1\over 2} \left( \kappa\DZN  + \DZI   \right) \tau \kappa^{2(\zeta-1)}R.
\end{align*}
Introducing the above into \eqref{eq:termAInterm}, we get for $\zeta\geq 1,$
\begin{align}\label{eq:termAII}
\|\TA\|_{\HK}
\leq & 
\tau \DZF^{1\over 2} \left( \DZS +   R \lambda^{\zeta} +\left( \kappa\DZN  + \DZI   \right) \tau \kappa^{2(\zeta-1)}R \right).
\end{align}
	{\bf Estimating $\|\TB\|_{\HK}$: }\\ 
	We estimate $\|\TB\|_{\HK}$, considering two different cases.\\
	{\it Case I: $\zeta \leq 1.$} \\
	We first have
	\begin{align*}
	\aol^{1\over 2} \RL(\ao) \st \TXL^{1\over 2}(\aol^{1\over 2} \RL(\ao) \st \TXL^{1\over 2})^* 
	= &  \aol^{1\over 2} \RL(\ao) (\ao + \lambda \st^2)\RL(\ao) \aol^{1\over 2}\\ 
	\preceq& \big(\RL(\ao) \aol \big)^2,
	\end{align*}
	where we used $\st^2 = \st \preceq I$ for the last inequality.  Thus, according to $\|A\| = \|AA^*\|^{1\over 2},$
	$$
	\|\aol^{1\over 2} \RL(\ao) \st \TXL^{1\over 2}\| \leq \|\RL(\ao) \aol\|.
	$$
	Using the spectral theorem and \eqref{eq:GLproper4}, and noting that $\|\ao\| \leq \|\st\|^2 \|\TX\| \leq \kappa^2$ by \eqref{eq:TXbound}, we get
	$$
	\|\aol^{1\over 2} \RL(\ao) \st \TXL^{1\over 2}\| \leq \sup_{u \in [0,\kappa^2]} |\RL(u)(u+\lambda)| \leq \lambda.
	$$
	Using the above inequality and by a direct calculation,
	$$
	\|\TB\|_{\HK} \leq \|\aol^{1\over 2} \RL(\ao) \st \TXL^{1\over 2}\| \|\TXL^{-1\over 2} \TKL^{1\over 2}\| \|\TK^{-{1\over 2}}\EPSRA\|_{\HK} \leq \lambda \DZF^{1\over 2}\|\TK^{-{1\over 2}}\EPSRA\|_{\HK} .
	$$
	Applying Part 2) of Lemma \ref{lem:popFun}, we get
	\be\label{eq:TBbI}
	\|\TB\|_{\HK} \leq \tau R \lambda^{\zeta} \DZF^{1\over 2} .
	\ee
	Applying the above and \eqref{eq:termAI} into \eqref{eq:1ErrTM}, we know that for  any $\zeta\in [0,1],$
	$$
	\|\TXL^{1\over 2}(\ESRA -  \st\EPSRA)\|_{\HK} \leq \tau \DZF^{1\over 2} \left( \DZS +   (2\tau + 1)R \lambda^{\zeta} + \tau R\DZI \lambda^{\zeta-1}  \right).
	$$
	Using the above into \eqref{eq:totErr1}, we can prove the first desired result. 
	\\{\it Case II: $\zeta \geq 1$}\\
	We denote
	\be
	\aoB = \TX^{1\over 2} \st \TX^{1\over 2}, \quad \aoBl = \aoB + \lambda.
	\ee
	Noting that $\ao = \st \TX \st = \st \TX^{1\over 2} (\st \TX^{1\over 2})^*,$ thus following from Lemma \ref{lem:sss} (with $f(u)= (u+\lambda)^{1\over 2} \RL(u)$) and $\st^2 =\st$,
	\begin{align*}
	\|\aol^{1\over 2} \RL(\ao) \st \TX^{\zeta-{1\over 2}} \| = \|\aol^{1\over 2} \RL(\ao) (\st \TX^{1\over 2}) \TX^{\zeta-1} \|  = \| (\st \TX^{1\over 2}) \aoBl^{1\over 2} \RL(\aoB)  \TX^{\zeta-1} \|. 
	\end{align*}
	Adding and subtracting with the same term, using the triangle inequality, 
	\begin{align*}
	\|\aol^{1\over 2}\RL(\ao) \st \TX^{\zeta-{1\over 2}} \| \leq &  \| \st \TX^{1\over 2} \aoBl^{1\over 2} \RL(\aoB)  \aoB^{\zeta-1} \| + \| \st \TX^{1\over 2} \aoBl^{1\over 2} \RL(\aoB)  (\TX^{\zeta-1}-\aoB^{\zeta-1}) \|\\
	\leq & \| \st \TX^{1\over 2}  \aoBl^{1\over 2}\RL(\aoB)  \aoB^{\zeta-1} \| + \| \st \TX^{1\over 2} \aoBl^{1\over 2} \RL(\aoB) \| \| \TX^{\zeta-1}-\aoB^{\zeta-1}\|.
	\end{align*}
	Using Lemma \ref{lem:operDiff}, with \eqref{eq:TXbound} and $\|\aoB\| \leq \|\TX\| \leq \kappa^2,$
	\begin{align*}
	\|\aol^{1\over 2} \RL(\ao) \st \TX^{\zeta-{1\over 2}} \|
	\leq  \| \st \TX^{1\over 2}  \aoBl^{1\over 2} \RL(\aoB)  \aoB^{\zeta-1} \| + \| \st \TX^{1\over 2} \aoBl^{1\over 2} \RL(\aoB) \| \kappa^{2(\zeta-2)_+} \|\TX - \aoB\|^{(\zeta-1)\wedge 1}.
	\end{align*}
	Using $\|A\| = \|A^* A\|^{1\over 2}, \st^2 = \st$,
	the spectral theorem, and \eqref{eq:GLproper4}, for any $s \in [1,\tau],$
	\begin{align*}
	\| \st \TX^{1\over 2} \aoBl^{1\over 2}  \RL(\aoB)  \aoB^{s-1} \| = & \| \aoB^{s-1} \RL(\aoB) \aoBl \aoB  \RL(\aoB)  \aoB^{s-1} \|^{1\over 2}  \\
	\leq& \sup_{u \in [0,\kappa^2]} |\RL(u) u^{s - {1\over 2}} (u+\lambda)^{1\over 2}| \leq \lambda^{s},
	\end{align*}
	and thus we get
	\begin{align*}
	\|\aol^{{1\over 2}-a} \RL(\ao) \st \TX^{\zeta-{1\over 2}} \|
	\leq  \lambda^{\zeta} + \lambda \kappa^{2(\zeta-2)_+} \|\TX - \aoB\|^{(\zeta-1)\wedge 1}.
	\end{align*}
	Using Lemma \ref{lem:proje2te}, $(I -\st)^2 = I -\st$ and $\|A^* A\|=\|A\|^2$, we have 
	$$
	\|\TX - \aoB\| = \|\TX^{1\over 2}(I - \st) \TX^{1\over2}\| \leq\|\TX - \TK\| + \|\TK^{1\over 2}(I - \st) \TK^{{1\over 2}}\| \leq  \DZT + \DZI,
	$$
	and we thus get
	\begin{align}\label{eq:4}
	\|\aol^{1\over 2} \RL(\ao) \st \TX^{\zeta-{1\over 2}} \|
	\leq  \lambda^{\zeta} + \lambda \kappa^{2(\zeta-2)_+}(\DZT + \DZI)^{(\zeta-1)\wedge 1}.
	\end{align}
	Now we are ready to estimate $\|\TB\|_{\HK}.$ 
	By some direct calculations and Part 2) of Lemma \ref{lem:popFun}, 
	\begin{align*}
	\|\TB\|_{\HK} \leq \|\aol^{1\over 2} \RL(\ao) \st \TK^{\zeta-{1\over 2}} \| \|\TK^{{1\over 2} - \zeta}\EPSRA\|_{\HK} \leq \|\aol^{1\over 2} \RL(\ao) \st \TK^{\zeta-{1\over 2}} \| \tau R.
	\end{align*}
	Adding and subtracting with the same term, and using the triangle inequality,
	\begin{align*}
	\|\TB\|_{\HK} 
	\leq  \tau  R \left( \|\aol^{1\over 2} \RL(\ao) \st \TX^{\zeta-{1\over 2}} \| +  \|\aol^{1\over 2} \RL(\ao)\|  \| \TK^{\zeta-{1\over 2}} - \TX^{\zeta-{1\over 2}}  \|\right).
	\end{align*}
	Using the spectral theorem, with $\|\ao\|\leq \|\TX\| \leq \kappa^2$ by \eqref{eq:TXbound} and \eqref{eq:GLproper4}, 
	$$
	\|\aol^{1\over 2} \RL(\ao)\|  = \sup_{u \in ]0,\kappa^2] } |\RL(u) (u+\lambda)^{1\over 2}| \leq \lambda^{1\over 2},
	$$
	and we thus get 
	\begin{align*}
	\|\TB\|_{\HK} 
	\leq   \tau R \left( \|\aol^{1\over 2} \RL(\ao) \st \TX^{\zeta-{1\over 2}} \| +  \lambda^{1\over 2} \| \TK^{\zeta-{1\over 2}} - \TX^{\zeta-{1\over 2}}  \|\right).
	\end{align*}
	Applying Lemma \ref{lem:operDiff}, with \eqref{eq:TKBound} and \eqref{eq:TXbound},
	\begin{align*}
	\|\TB\|_{\HK} 
	\leq   \tau R \left( \|\aol^{1\over 2} \RL(\ao) \st \TX^{\zeta-{1\over 2}} \| +  \lambda^{1\over 2}  \kappa^{(2\zeta-3)_+} \DZT^{(\zeta-{1\over 2})\wedge 1}\right).
	\end{align*}
	Introducing with \eqref{eq:4}, 
	\begin{align*}
	\|\TB\|_{\HK} 
	\leq   \tau R \left( \lambda^{\zeta} + \kappa^{2(\zeta-2)_+} \lambda (\DZT + \DZI)^{(\zeta-1)\wedge 1} +   \kappa^{(2\zeta-3)_+} \lambda^{{1\over 2}}  \DZT^{(\zeta-{1\over 2})\wedge 1}\right) .
	\end{align*}
	Introducing the above inequality and \eqref{eq:termAII} into \eqref{eq:1ErrTM}, noting that $\DZF\geq 1$ and $\kappa^2 \geq 1$, we know that for any $\zeta \geq 1,$
	\begin{align*}
		\|\TXL^{1\over 2}(\ESRA -  \st\EPSRA)\|_{\HK}  \leq 
\tau \DZF^{1\over 2} \left( \DZS +   2R \lambda^{\zeta}+ \kappa^{2(\zeta-1)} R ( \kappa \tau \DZN   + \tau \DZI + \lambda (\DZT + \DZI)^{(\zeta-1)\wedge 1} + \lambda^{{1\over 2}}  \DZT^{(\zeta-{1\over 2})\wedge 1} )  \right).
	\end{align*}
	Using the above into \eqref{eq:totErr1}, and by a simple calculation, we can prove the second desired result.

\subsection{Proof of Lemma \ref{lem:OperDiffProd}}

	Let $\SX = U \Sigma V^* $ be the singular value decomposition of
	$\SX$, where $V:  \mR^{r} \to \HK,$ $U \in \mR^{n \times r}$ and $\Sigma = \mbox{diag}(\sigma_1,\sigma_2,\cdots, \sigma_r)$ with $V^* V = I_{r}$, $U^*U = I_r$ and $\sigma_1 \geq \sigma_2,\cdots, \sigma_r>0.$
	In fact, we can write $V = [v_1, \cdots, v_r]$ with
	$$
	V \ba = \sum_{i=1}^r \ba(i)v_i, \quad \forall \ba \in \mR^r,
	$$
	with $v_i \in \HK$ such that $\la v_i, v_j \ra_{\HK} = 0$ if $i \neq j$ and $\la v_i, v_i \ra_{\HK} =1 $. Similarly, we write $U = [u_1,\cdots, u_r]$, and 
	$$
	\SX = \sum_{i=1}^r \sigma_i \la v_i, \cdot \ra_{\HK}  u_i = \sum_{i=1}^r \sigma_i u_i \otimes v_i.
	$$
	For any $\mu\geq 0$, we decompose $\SX$ as $\mcS_{1,\mu} + \mcS_{2,\mu}$ with 
	$$
	\mcS_{1,\mu} = \sum_{\sigma_i >\mu }\sigma_i u_i \otimes v_i, \quad  \mcS_{2,\mu} = \sum_{\sigma_i \leq \mu  }\sigma_i u_i \otimes v_i,
	$$
	and we will drop $\mu$ to
	write $\mcS_{j,\mu}$ as $\mcS_j$ when it is clear in the text.
	Denote $d$ the cardinality of $\{\sigma_i: \sigma_i>\mu\}$.
	Correspondingly,
	\be\label{eq:S1}
	\mcS_1 = U_1 \Sigma_1 V_1^*, \quad \mcS_2 = U_2 \Sigma_2 V_2^*, 
	\ee
	where $V_1 = [v_1, \cdots, v_{d}]$, $V_2 = [v_{d+1} ,\cdots,v_r],$ $U_1 = [u_1,\cdots, u_d]$,
	$
	U_2 = [u_{d+1},\cdots,u_r],
	$ 
	$
	\Sigma_1 = \mbox{diag}(\sigma_1,\cdots, \sigma_d),
	$
	and $\Sigma_2 = \mbox{diag}(\sigma_{d+1},\cdots,d_r).$
	As the range of $\proj$ is $range(\SX^* \skt^*)$, we can let 
	$$
	\proj = \proj_1 + \proj_2,
	$$
	where $\proj_1$ and $\proj_2$ are projection operators on $range(S_1^* \skt^*)$ and  $range(S_2^* \skt^*)$, respectively.

	As $$\TX = \SX^* \SX =  (U \Sigma V^*)^* U \Sigma V^* = V \Sigma^2 V^* ,$$ we have
	$$
	\|(I - \proj) \TX^{1\over 2}\| =  \|(I - \proj) V \Sigma V^*\| = \|(I - \proj_1 - \proj_2) \sum_{i=1}^2V_i \Sigma_i V_i^*\|.
	$$ 
	As $\proj_1$  is a projection operator on $range(S_1^* \skt^*) (\subseteq range(V_1))$ and  $range(S_1^* \skt^*) (\subseteq range(V_2))$, and $V_1^* V_2 = \bf{0},$ we know that
	$\proj_i V_j = 0$ when $i\neq j$.
	Thus, it follows that
	\begin{align*}
	\|(I - \proj) \TX^{1\over 2}\| = & \|\sum_{i=1}^2(I - \proj_i) (V_i \Sigma_i V_i^*)\| \\
	\leq& \sum_{i=1}^2\|(I - \proj_i) (V_i \Sigma_i V_i^*)\| \\
	\leq& \|(I - \proj_1) (V_1 \Sigma_1 V_1^*)\| + \|I - \proj_2\| \|V_2\|  \|\Sigma_2\|  \|V_2^*\|.
	\end{align*}
	As $\Sigma_2 = diag(\sigma_{d+1},\cdots, \sigma_r)$ with $\sigma_r\leq ,\cdots, \sigma_{d+1} \leq \mu,$ we get
	\be\label{eq:s5}
	\|(I - \proj) \TX^{1\over 2}\|  \leq  \|(I - \proj_1) (V_1 \Sigma_1 V_1^*)\| + \mu.
	\ee
	As $\proj_1$ is the projection operator on $range(S_1^* \skt^*)$, letting $W = \skt S_1$ and for any $\lambda>0$,
	$$\proj_1 = W^* (W W^*)^{\dagger} W \succeq  W^* (W W^* + \lambda I)^{-1} W =  W^* W (W^*W + \lambda I)^{-1},
	$$
	and thus
	$$
	I - \proj_1 \preceq I -   W^* W (W^*W + \lambda I)^{-1} = \lambda (W^*W + \lambda I)^{-1}.
	$$
	It thus follows that
	$$
	T_1^{1\over 2} (I - \proj_1) T_1^{1\over 2} \preceq \lambda T_1^{1\over 2} (W^*W + \lambda I)^{-1} T_1^{1\over 2},
	$$
	where for notational simplicity, we write 
	\be \label{eq:T1}T_1 =  (V_1 \Sigma_1 V_1^*)^2.\ee
	Combing with 
	$$\| (I - \proj) T_1^{1\over 2}\|^2 = \| T_1^{1\over 2}(I - \proj)^2 T_1^{1\over 2}\| =
	\| T_1^{1\over 2} (I - \proj) T_1^{1\over 2}\| ,
	$$
	we know that
	$$
	\| (I - \proj) T_1^{1\over 2}\|^2  \preceq \lambda \| T_1^{1\over 2} (W^*W + \lambda I)^{-1}T_1^{1\over 2} \| \leq  \lambda \| T_{1\lambda}^{1\over 2} (W^*W + \lambda I)^{-1} T_{1\lambda}^{1\over 2}\|.
	$$
	As 
	$$
	T_{1\lambda}^{1\over 2} (W^*W + \lambda I)^{-1}T_{1\lambda}^{1\over 2} = \left( T_{1\lambda}^{-{1\over 2}} (W^*W + \lambda I)T_{1\lambda}^{- {1\over 2}} \right)^{-1} =
	\left(I -  T_{1\lambda}^{-{1\over 2}}(T_1 - W^*W )T_{1\lambda}^{-{1\over 2}} \right)^{-1}, 
	$$
	and if 
	\be\label{eq:s4}
	\|T_{1\lambda}^{-{1\over 2}}(T_1 - W^*W )T_{1\lambda}^{-{1\over 2}}\|  \leq c<1,
	\ee
	then according to Neumann series,
	\be\label{eq:s8}
	\| (I - \proj) T_1^{1\over 2}\|^2  \preceq \lambda \| T_{1\lambda}^{-{1\over 2}} (W^*W + \lambda I)^{-1}T_{1\lambda}^{-{1\over 2}} \| \leq (1 - c)^{-1}\lambda.
	\ee
	If we	choose $\mu = \sqrt{\lambda},$ and introduce the above with $c={1\over 2}$  into \eqref{eq:s5}, one can get 
	\be
	\|(I - \proj) \TX^{1\over 2}\|^2  \leq  (\sqrt{2} + 1)^2\lambda \leq 6\lambda,
	\ee
	which leads to the desired bound.
	
	In what follows, we show that	\eqref{eq:s4}  with $c={1\over 2}$  holds with high probability under the constraint \eqref{eq:subsamLev}. Recall \eqref{eq:T1} and that $W = \skt S_1$ with $S_1$ given by \eqref{eq:S1}.
	Thus, 
	$
	T_1 =V_1 \Sigma_1 V_1^* V_1 \Sigma_1 V_1^* = V_1 \Sigma_1^2 V_1^*,
	$
	and 
	$$
	W^* W =  S_1^* \skt^* \skt S_1 = V_1 \Sigma_1 U_1^* \skt^* \skt U_1 \Sigma_1 V_1^*.
	$$
	Therefore, with $V_1^* V_1 = I,$
	\begin{align}
	T_{1\lambda}^{-{1\over 2}}(T_1 - W^*W )T_{1\lambda}^{-{1\over 2}} = &
	V_1 (\Sigma_{1}^2 + \lambda I)^{-1/2} V_1^*
	V_1 \Sigma_1 ( I - U_1^* \skt^* \skt U_1 )\Sigma_1 V_1^*
	V_1 (\Sigma_{1}^2 + \lambda I)^{-1/2} V_1^* \nonumber \\
	=& V_1 (\Sigma_{1}^2 + \lambda I)^{-1/2} \Sigma_1 ( I - U_1^* \skt^* \skt U_1 )\Sigma_1 (\Sigma_{1}^2 + \lambda I)^{-1/2} V_1^*. \label{eq:s7}
	\end{align}
	It follows that
	$$
	\| T_{1\lambda}^{-{1\over 2}}(T_1 - W^*W )T_{1\lambda}^{-{1\over 2}} \| \leq 
	\| V_1\|  \|(\Sigma_{1}^2 + \lambda I)^{-1/2} \Sigma_1\|^2 \|  I - U_1^* \skt^* \skt U_1  \| \| V_1^*\| 
	\leq  \|  I - U_1^* \skt^* \skt U_1  \|.
	$$
	Using $U_1^* U_1 = I$,
	\begin{align*}
	\|  I - U_1^* \skt^* \skt U_1  \| =  &  \|   U_1^*(I - \skt^* \skt) U_1  \|  \\
	= & \max_{\ba \in \mR^d, \|\ba\|_2=1} | \la U_1^*(I - \skt^* \skt) U_1 \ba, \ba  \ra_2 | \\
	= & \max_{\ba \in \mR^d, \|\ba\|_2=1} | \|U_1 \ba\|_2^2  - \| \skt U_1\ba \|_2^2 |.
	\end{align*}
	Based on a standard covering argument \cite{baraniuk2008simple}, we know that 
	$$\max_{\ba \in \mR^d, \|\ba\|_2=1}| \|U_1 \ba\|_2^2  - \| \skt U_1\ba \|_2^2 |\leq {1 \over 2} $$ with probability at least 
	$$
	1 - 2 (72)^d \exp\left(- {m \over 12^2 c_0' \log^{\beta}n} \right) \geq 1-\delta, 
	$$
	provided that
	\be\label{eq:s6}
	m \geq 144 c_0' \log^{\beta} n\left( \log {2 \over \delta}+ 6d\log 2\right).
	\ee
	Note that by \eqref{eq:empEffDim}
	$$
	b_{\gamma}\lambda^{-\gamma} \geq \tr(\TX \TXL^{-1}) = \sum_{i} {\sigma_i^2 \over \sigma_i^2 + \lambda} \geq \sum_{\sigma_i^2 >\lambda} {\sigma_i^2 \over \sigma_i^2 + \lambda}  \geq {d \over 2}.
	$$
	Thus, a stronger condition for \eqref{eq:s6} is
	\eqref{eq:subsamLev}.
	The proof is complete.

	\subsection{Proof of Lemma \ref{lem:proje2te}}
		Since $\proj$ is a projection operator, $(I-\st)^2 = I -\st$. Then
		\begin{align*}
		\| A^{s}( I - \st )A^{t}\| 	= \| A^{s}( I - \st )(I-\st)A^{t}\|  \leq \| A^{s}( I - \st )\| \|(I-\st)A^{t}\| .
		\end{align*}
		Moreover, by Lemma \ref{lem:operProd}, 
		$$
		\| A^{s}( I - \st )\| =  \| A^{{1\over 2} 2s}( I - \st )^{2s}\| \leq \| A^{{1\over 2}}( I - \st )\|^{2s}.
		$$
		Similarly, $\|(I-\st)A^{t}\| \leq \| ( I - \st )A^{{1\over 2}}\|^{2t}$. Thus,
		\begin{align*}
		\| A^{s}( I - \st )A^{t}\|  \leq \| A^{1\over 2}( I - \st )\|^{2s} \|(I-\st)A^{1\over 2}\|^{2t} =  \|(I-\st)A^{1\over 2}\|^{2(t+s)} .
		\end{align*}
		Using $\|D\|^2 = \|D^*D\|,$
		\begin{align*}
		\| A^{s}( I - \st )A^{t}\|  \leq   \|(I-\st)A (I -\st)\|^{t+s} .
		\end{align*}
		Adding and subtracting with the same term, using the triangle inequality, and noting that $\|I -\st\|\leq 1$ and $s+t\leq 1,$
		\begin{align*}
		\| A^{s}( I - \st )A^{t}\|  \leq&   \|(I-\st)A (I -\st)\|^{t+s} \\
		\leq &\left(\|(I-\st)(A-B) (I -\st)\| + \|(I-\st)B (I -\st)\| \right)^{t+s} \\
		\leq& 
		\|A-B\|^{s+t} + \|(I-\st)B (I -\st)\|^{s+t} ,
		\end{align*}
		which leads to the desired result using $\|D^*D\| = \|DD^*\|$.

\subsection{Proof of Lemma \ref{lem:empEffdim}}
To prove the result, we need the following concentration inequality.
\begin{lemma}
	\label{lem:Bernstein}
	Let $w_1,\cdots,w_m$ be i.i.d random variables in a separable Hilbert space with norm $\|\cdot\|$. Suppose that
	there are two positive constants $B$ and $\sigma^2$ such that
	\be\label{bernsteinCondition}
	\mE [\|w_1 - \mE[w_1]\|^l] \leq {1 \over 2} l! B^{l-2} \sigma^2, \quad \forall l \geq 2.
	\ee
	Then for any $0< \delta <1/2$, the following holds with probability at least $1-\delta$,
	$$ \left\| {1 \over m} \sum_{k=1}^m w_m - \mE[w_1] \right\| \leq 2\left( {B \over m} + {\sigma \over \sqrt{ m }} \right) \log {2 \over \delta} .$$
	In particular, \eref{bernsteinCondition} holds if
	\be\label{bernsteinConditionB}
	\|w_1\| \leq B/2 \ \mbox{ a.s.}, \quad \mbox{and } \quad \mE [\|w_1\|^2] \leq \sigma^2.
	\ee
\end{lemma}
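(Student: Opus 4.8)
The plan is to obtain the stated high-probability bound from the classical Bernstein-type exponential tail inequality for sums of independent, zero-mean, Hilbert-space-valued random variables, and then to invert that tail. First I would pass to centered summands: put $\xi_k = \frac1m(w_k - \mE[w_1])$ for $k\in[m]$, so that $\frac1m\sum_{k=1}^m w_k - \mE[w_1] = \sum_{k=1}^m\xi_k =: S_m$, the $\xi_k$ are i.i.d.\ with $\mE[\xi_k]=0$, and \eref{bernsteinCondition} rescales to $\mE\|\xi_k\|^l \le \frac12 l!\,(B/m)^{l-2}(\sigma/m)^2$ for all integers $l\ge2$. Summing over $k$, the ``variance proxy'' of $S_m$ is $v := \sum_{k=1}^m (\sigma/m)^2 = \sigma^2/m$ and its ``scale'' is $c := B/m$.

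The core ingredient is the exponential tail bound $\mP(\|S_m\|\ge t)\le 2\exp\!\big(-t^2/(2(v+ct))\big)$ for all $t>0$, valid for independent centered Hilbert-valued summands with $\sum_k\mE\|\xi_k\|^l \le \frac12 l!\,c^{l-2}v$; this is a now-standard fact (see, e.g., \cite{caponnetto2007optimal} and references therein), and I would either cite it or prove it by the Chernoff route, bounding $\mE\exp(s\|S_m\|)$ by induction on $m$. Given this tail, I set its right-hand side equal to $\delta$: this amounts to $t^2 - 2c\log(2/\delta)\,t - 2v\log(2/\delta) = 0$, whose positive root $t_\star = c\log(2/\delta) + \sqrt{c^2\log^2(2/\delta)+2v\log(2/\delta)}$ satisfies $t_\star \le 2c\log(2/\delta) + \sqrt{2v\log(2/\delta)}$ by subadditivity of $\sqrt{\cdot}$. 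Since $\delta<1/2$ forces $\log(2/\delta)>\log 4>1$, hence $\sqrt{2\log(2/\delta)}\le 2\log(2/\delta)$, I obtain $t_\star \le 2\log(2/\delta)\,(c+\sqrt v) = 2\log(2/\delta)\,(B/m + \sigma/\sqrt m)$; so $\|S_m\| \le 2\log(2/\delta)\,(B/m + \sigma/\sqrt m)$ with probability at least $1-\delta$, which is the assertion.

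It remains to check the ``in particular'' clause. If $\|w_1\|\le B/2$ almost surely and $\mE\|w_1\|^2\le\sigma^2$, then $\|w_1-\mE[w_1]\| \le \|w_1\| + \mE\|w_1\| \le B$ almost surely, and $\mE\|w_1-\mE[w_1]\|^2 = \mE\|w_1\|^2 - \|\mE[w_1]\|^2 \le \sigma^2$; therefore $\mE\|w_1-\mE[w_1]\|^l \le B^{l-2}\,\mE\|w_1-\mE[w_1]\|^2 \le B^{l-2}\sigma^2 \le \frac12 l!\,B^{l-2}\sigma^2$ for all $l\ge2$, using $l!\ge2$, which is exactly \eref{bernsteinCondition}. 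The main obstacle is the Hilbert-valued exponential tail inequality itself, in particular the step linearizing the norm $\|S_m\|$ while retaining the cancellation from $\mE[\xi_k]=0$ (the naive bound on $\mE\exp(s\|\xi_k\|)$ carries a spurious first-order term $s\,\mE\|\xi_k\|$), which is handled by Pinelis' $\cosh$-smoothing/martingale argument; the rescaling, the inversion of the quadratic, and the moment verification are routine.
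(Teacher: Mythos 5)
Your proposal is correct, and it follows essentially the same route as the paper: the paper does not prove this lemma itself but cites Pinelis' inequality and refers to \cite{smale2007learning,caponnetto2007optimal} for the detailed argument, and your derivation (rescale to centered summands, apply the Pinelis--Sakhanenko exponential tail $\mP(\|S_m\|\geq t)\leq 2\exp(-t^2/(2(v+ct)))$, invert the quadratic using $\log(2/\delta)\geq 1$, and verify the moment condition for the bounded case) is exactly the standard proof found in those references. The minor points you flag are handled correctly: $\|\mE[w_1]\|\leq \mE\|w_1\|\leq B/2$ by Jensen, and $\mE\|w_1-\mE[w_1]\|^2=\mE\|w_1\|^2-\|\mE[w_1]\|^2\leq\sigma^2$ in a Hilbert space, so the ``in particular'' clause goes through.
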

The above lemma is a reformulation of the concentration inequality for sums of Hilbert-space-valued random variables from \cite{pinelis1986remarks}. We refer to  \cite{smale2007learning,caponnetto2007optimal} for the detailed proof.

\begin{proof}[Proof of Lemma \ref{lem:empEffdim}]
	We first use Lemma \ref{lem:Bernstein} to estimate $\tr(\TKL^{-{1\over 2}} (\TX - \TK) \TKL^{-{1\over 2}}).$ Note that
	\bea
	\tr(\TKL^{-{1\over2}}\TX \TKL^{-{1\over 2}}) = {1\over n} \sum_{j=1}^n \|\TKL^{-{1\over 2}} x_j\|_{\HK}^2 = {1\over n} \sum_{j=1}^n \xi_j,
	\eea
	where we
	let $\xi_j = \|\TKL^{-{1\over 2}} x_j\|_{\HK}^2$ for all $j \in [n].$ Besides, it is easy to see that 
	$$
	\tr(\TKL^{-{1\over 2}} (\TX - \TK) \TKL^{-{1\over 2}}) = {1\over n}\sum_{j=1}^n (\xi_j - \mE[\xi_j]).
	$$
	Using Assumption \eqref{boundedKernel},
	$$
	\xi_1 \leq {1\over \lambda} \|x_1\|_{\HK}^2 \leq {\kappa^2 \over \lambda},
	$$
	and
	$$
	\mE[\|\xi_1\|^2] \leq {\kappa^2 \over \lambda} \mE\|\TKL^{-{1\over 2}} x_1\|_{\HK}^2\leq {\kappa^2 \mcN(\lambda) \over \lambda}.
	$$
	Applying Lemma \ref{lem:Bernstein}, we get that there exists a subset $V_1$ of $Z^n$ with measure at least $1-\delta$, such that for all $\bz \in V_1$,
	$$
	\tr(\TKL^{-{1\over2}}(\TX - \TK) \TKL^{-{1\over 2}}) \leq 2\left( {2\kappa^2 \over n\lambda} + \sqrt{\kappa^2 \mcN(\lambda) \over  n\lambda} \right) \log {2 \over \delta} .
	$$
	Combining with Lemma \ref{lem:operDifRes}, taking the union bounds, rescaling $\delta$, and noting that
	\begin{align*}
	\tr(\TXL^{-1}\TX) = & \tr(\TXL^{-{1\over2 }}\TKL^{1\over 2}\TKL^{-{1\over 2}} \TX \TKL^{-{1\over 2}}\TKL^{{1\over 2}}\TXL^{-{1\over 2}}) \\ 
	\leq& \|\TKL^{{1\over 2}}\TXL^{-{1\over 2}}\|^2
	\tr(\TKL^{-{1\over 2}} \TX \TKL^{-{1\over 2}}) \\
	= & \|\TKL^{{1\over 2}}\TXL^{-{1\over 2}}\|^2 \left( \tr(\TKL^{-{1\over 2}} (\TX - \TK) \TKL^{-{1\over 2}} ) + \mcN(\lambda) \right).
	\end{align*}
	we get that there exists a subset $V$ of $Z^n$ with measure at least $1-\delta$, such that for all $\bz \in V$, 
	\bea
	\tr ((\TX+\lambda)^{-1}\TX) \leq 3 a_{n,\delta/2,\gamma}(\theta) \left(2\left( {2\kappa^2 \over n\lambda} + \sqrt{\kappa^2 \mcN(\lambda) \over  n\lambda} \right) \log {4 \over \delta}   +\mcN(\lambda) \right),
	\eea
	which leads to the desired result using $\lambda \leq 1$, $n\lambda\geq 1$ and Assumption \ref{as:eigenvalues}.
\end{proof}

\subsection{Proof for Corollary \ref{cor:nyReg}}
\begin{proof}
	Using a similar argument as that for \eqref{eq:s8}, with $W = \SXS,$ where $\tilde{\bx} = \{x_1,\cdots,x_m\}$, we get for any $\eta>0,$
	$$
	\|(I - \proj)\TK^{1\over 2}\|^2 \leq  \eta \|(\TXS+\eta)^{-1/2}(\TK+\eta)^{1/2}\|^2.
	$$
	Letting $\eta = {1\over m},$ and using Lemma \ref{lem:operDifRes},
	we get that with probability at least $1-\delta,$
	$$
	\|(I - \proj)\TK^{1\over 2}\|^2 \lesssim  {1 \over m} {\log {3m^{\gamma}\over \delta} }.
	$$
	Combining with Corollary \ref{cor:2}, one can prove the desired result.
\end{proof} 

\subsection{Proof of Corollary \ref{cor:ALSnyReg}} 

We first note that in an $L$-ALS Nystr\"{o}m subsampling regime, $\HKS$ can be rewritten as $\HKS =  \overline{range\{\SX^* \skt^\top\}},$ where each row ${1 \over \sqrt{m}}\ba_j^{\top}$ of $\skt$  is i.i.d. drawn according to
$$
\mP \left(\ba = {1\over \sqrt{q_i}} \mathbf{e}_i \right) = q_i, 
$$
Here $\{ \mathbf{e}_i : i \in[n] \}$ is the standard basis of $\mR^n$ and 
$$q_i: = q_{i}(\lambda)  =  {\hat{l}_{i}(\lambda)  \over \sum_j \hat{l}_{j}(\lambda) }.$$

We first introduce the following basic probabilistic estimate.

\begin{lemma}
	\label{lem:concentrSelfAdjoint}
	Let $\mcX_1, \cdots, \mcX_m$ be a sequence of independently and identically distributed self-adjoint Hilbert-Schmidt operators on a separable Hilbert space.
	Assume that $\mE [\mcX_1] = 0,$ and $\|\mcX_1\| \leq B$ almost surely for some $B>0$. Let $\mathcal{V}$ be a positive trace-class operator such that $\mE[\mcX_1^2] \preccurlyeq \mathcal{V}.$
	Then with probability at least $1-\delta,$ ($\delta \in ]0,1[$), there holds
	\bea
	\left\| {1 \over m} \sum_{i=1}^m \mcX_i \right\| \leq {2B \beta \over 3m} + \sqrt{2\|\mathcal{V}\|\beta \over m }, \qquad \beta = \log {4 \tr \mathcal{V} \over \|\mathcal{V}\|\delta}.
	\eea
\end{lemma}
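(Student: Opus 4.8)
The plan is to recognize Lemma~\ref{lem:concentrSelfAdjoint} as the non-commutative (operator) Bernstein inequality for Hilbert--Schmidt operators, in the dimension-free form in which the ambient dimension is replaced by the intrinsic dimension $\tr\mathcal{V}/\|\mathcal{V}\|$. The quickest route is to invoke it directly from the literature --- e.g.\ Minsker's self-adjoint operator Bernstein inequality, or Tropp's intrinsic-dimension matrix Bernstein inequality --- and merely verify the hypotheses and match constants. For a self-contained argument I would proceed in four steps.

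First, I would reduce the two-sided bound on $\|\tfrac1m\sum_i\mcX_i\|$ to a one-sided bound on the top eigenvalue: writing $S_m=\sum_{i=1}^m\mcX_i$, self-adjointness gives $\|S_m\|=\max(\lambda_{\max}(S_m),\lambda_{\max}(-S_m))$, and the hypotheses $\mE[\mcX_1]=0$, $\|\mcX_1\|\le B$, $\mE[\mcX_1^2]\preccurlyeq\mathcal{V}$ are all invariant under $\mcX_i\mapsto-\mcX_i$; a union bound over the two signs then costs only an extra factor $2$ inside the logarithm. Next, I would apply the operator Laplace-transform (Chernoff) method, $\mP(\lambda_{\max}(S_m)\ge s)\le e^{-\eta s}\,\mE[\tr\,\phi(\eta S_m)]$ for $\eta>0$; here the care specific to infinite dimensions is that the naive choice $\phi=\exp$ together with $\lambda_{\max}(e^{A})\le\tr e^{A}$ is useless since $\tr e^{\eta S_m}$ can diverge, and the remedy is Minsker's refinement, which uses the compactness (discrete spectrum) of the Hilbert--Schmidt operator $S_m$ to produce a prefactor equal to the \emph{finite} quantity $\tr\mathcal{V}/\|\mathcal{V}\|$ in place of a dimension.

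Third, I would bound the per-summand moment generating function: by subadditivity of the matrix cumulant generating function (Lieb concavity / the Ahlswede--Winter--Tropp argument) the problem reduces to $m$ copies of a single summand, and the scalar inequality $e^{x}\le 1+x+\tfrac{x^2/2}{1-|x|/3}$ for $|x|<3$, together with $\mE[\mcX_1]=0$, $\|\mcX_1\|\le B$, $\mE[\mcX_1^2]\preccurlyeq\mathcal{V}$, gives $\mE\,e^{\eta\mcX_1}\preccurlyeq\exp\!\big(\tfrac{\eta^2\mathcal{V}/2}{1-\eta B/3}\big)$ in the operator order, hence a cumulant bound $\tfrac{m\eta^2\|\mathcal{V}\|/2}{1-\eta B/3}$ valid for $0<\eta<3/B$. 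Finally, combining these with $s=mt$ yields
\[
\mP\!\left(\lambda_{\max}(S_m)\ge m t\right)\ \le\ \frac{\tr\mathcal{V}}{\|\mathcal{V}\|}\,\exp\!\left(-\eta m t+\frac{m\eta^2\|\mathcal{V}\|/2}{1-\eta B/3}\right),
\]
and the standard Bernstein choice $\eta=t/(\|\mathcal{V}\|+Bt/3)$ turns the exponent into $-\tfrac{mt^2/2}{\|\mathcal{V}\|+Bt/3}$; equating the resulting tail (with the Step-1 factor $2$) to $\delta$, solving the quadratic $\tfrac m2 t^2-\tfrac{\beta B}{3}t-\beta\|\mathcal{V}\|=0$ with $\beta=\log\tfrac{4\tr\mathcal{V}}{\|\mathcal{V}\|\delta}$, and using $\sqrt{a+b}\le\sqrt a+\sqrt b$, gives $t\le\tfrac{2B\beta}{3m}+\sqrt{\tfrac{2\|\mathcal{V}\|\beta}{m}}$, which is exactly the asserted bound.

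The hard part is the second step: obtaining a genuinely dimension-free Chernoff bound in an infinite-dimensional Hilbert space, where $\tr e^{\eta S_m}$ need not be finite. The fix is the intrinsic-dimension device that replaces $\dim$ by $\tr\mathcal{V}/\|\mathcal{V}\|$ (this is precisely the content imported from Minsker/Tropp); once it is in hand, everything else is the routine scalar Bernstein optimization, and the only bookkeeping is tracking the numerical constants so as to land on the stated $\beta$.
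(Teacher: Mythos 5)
Your proposal matches the paper's treatment: the paper does not prove this lemma either, but notes it is the operator Bernstein inequality first established for matrices in \cite{hsu2014random,tropp2012user} and extended to operators by \cite{minsker2011some}, deferring to \cite{rudi2015less,dicker2017kernel} for the proof, which is exactly the Minsker/Tropp intrinsic-dimension route you invoke (your additional Chernoff-method sketch, with the factor $4$ absorbing the two-sided union bound, is consistent with that literature). So the approach is essentially the same, and your verification of the hypotheses and constants is all that the paper itself requires.
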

The above lemma was first proved in \cite{hsu2014random,tropp2012user} 
for the matrix case, and it was  later extended to the general  operator case in \cite{minsker2011some}, see also \cite{rudi2015less,bach2015equivalence,dicker2017kernel}.
We refer to \cite{rudi2015less,dicker2017kernel} for the proof.

Using the above lemma, and with a similar argument as that for Lemma \ref{lem:OperDiffProd}, we can estimate the empirical version of the projection error as follows.
\begin{lemma}\label{lem:OperDiffProdALS}
	Let $0<\delta <1$ and $\theta \in[0,1].$
	Given a fix input subset $\bx \subseteq \HK^n$, assume that for $\lambda \in [0,1]$,
	\eqref{eq:empEffDim}
	holds for some $b_{\gamma}>0$, $\gamma \in[0,1]$.
	Then there exists a subset $U_{\bx} $ of $\mR^{m\times n}$ with measure at least $1-\delta$, such that for all $\skt \in U_{\bx}$, 
	\be\label{eq:o3}
	\|(I - \proj)\TX^{1\over 2}\|^2 \leq  3\lambda,
	\ee
	provided that
	\be\label{eq:subsamLevLS}
	m \geq 8 b_{\gamma} \lambda^{-\gamma} L^2 \log {8 b_{\gamma} \lambda^{-\gamma} \over  \delta}.
	\ee	
\end{lemma}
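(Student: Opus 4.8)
The plan is to follow the route of Lemma~\ref{lem:OperDiffProd}, but to replace the covering/concentration-over-finite-sets step (which is only available for the subgaussian and ROS sketches) by the operator Bernstein inequality of Lemma~\ref{lem:concentrSelfAdjoint}, which works directly on $\HK$; this is also what makes the singular-value truncation of Lemma~\ref{lem:OperDiffProd} dispensable here. Throughout, the input set $\bx$ is frozen and the only randomness is in the ALS draws, i.e.\ in the rows $\tfrac{1}{\sqrt m}\ba_j^{\top}$ of $\skt$. First I would reduce the projection error to an inverse-operator estimate. With $W:=\skt\SX$ one has $\HKS=\overline{\mathrm{range}(\SX^*\skt^*)}=\mathrm{range}(W^*)$ and $\proj=W^*(WW^*)^{\dagger}W\succeq W^*(WW^*+\lambda)^{-1}W=W^*W(W^*W+\lambda)^{-1}$ (the last step by the push-through identity and $\mathrm{range}(W)=\mathrm{range}(WW^*)$), hence $I-\proj\preceq\lambda(W^*W+\lambda)^{-1}$. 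Since $(I-\proj)^2=I-\proj$, one has $\|(I-\proj)\TX^{1/2}\|^2=\|\TX^{1/2}(I-\proj)\TX^{1/2}\|$; inserting the bound on $I-\proj$, trading $\TX$ for $\TXL$ (via $\|MM^*\|=\|M^*M\|$ and $\TX\preceq\TXL$), and using $\TXL^{1/2}(W^*W+\lambda)^{-1}\TXL^{1/2}=(I-R)^{-1}$ gives
\[
\|(I-\proj)\TX^{1/2}\|^2\;\le\;\lambda\,\big\|(I-R)^{-1}\big\|,\qquad R:=\TXL^{-1/2}(\TX-W^*W)\TXL^{-1/2}.
\]
So it suffices to show $\|R\|\le\tfrac23$ with probability at least $1-\delta$, which then yields $\|(I-\proj)\TX^{1/2}\|^2\le 3\lambda$, i.e.\ \eqref{eq:o3}.

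Next I would write $R$ as a centred i.i.d.\ sum and invoke operator Bernstein. Put $A:=\SX\TXL^{-1/2}$, so $\TXL^{-1/2}\TX\TXL^{-1/2}=A^*A$; since $W^*W=\SX^*\skt^*\skt\SX$ with $\skt^*\skt=\tfrac1m\sum_j\ba_j\ba_j^{\top}$, we get $\TXL^{-1/2}W^*W\TXL^{-1/2}=\tfrac1m\sum_j w_j\otimes w_j$ with $w_j:=A^*\ba_j$. Because $\ba_j=q_i^{-1/2}\mathbf e_i$ with probability $q_i$, $\mE[\ba_j\ba_j^{\top}]$ is the coordinate projection onto $\{i:q_i>0\}$, and the excluded coordinates have $x_i=0$ and are annihilated by $A^*$, so $\mE[w_j\otimes w_j]=A^*A$ and $R=-\tfrac1m\sum_j\mcX_j$ with $\mcX_j:=w_j\otimes w_j-A^*A$ i.i.d., mean zero, Hilbert--Schmidt. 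The crucial estimate is the exact leverage-score cancellation supplied by the ALS weights: by push-through $AA^*=\SX\TXL^{-1}\SX^*=\bK(\bK+\lambda)^{-1}$, so $\|A^*\mathbf e_i\|_{\HK}^2=(AA^*)_{ii}=l_i(\lambda)$, whence $\|w_j\|_{\HK}^2=l_i(\lambda)/q_i$ when $\ba_j$ selects index $i$; combined with $L^{-1}l_i(\lambda)\le\hat l_i(\lambda)\le L\,l_i(\lambda)$ and $\sum_k l_k(\lambda)=\tr(\bK(\bK+\lambda)^{-1})=\tr(\TX(\TX+\lambda)^{-1})=\mcNx$, this yields $\|w_j\|_{\HK}^2\le L^2\mcNx\le L^2 b_\gamma\lambda^{-\gamma}$. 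Hence $\|\mcX_j\|\le L^2 b_\gamma\lambda^{-\gamma}+\|A^*A\|=:B$ and $\mE[\mcX_j^2]\preceq\mE[\|w_j\|^2\,w_j\otimes w_j]\preceq L^2 b_\gamma\lambda^{-\gamma}\,A^*A=:\mcV$ with $\|\mcV\|\le L^2 b_\gamma\lambda^{-\gamma}$. Feeding $B$ and $\mcV$ into Lemma~\ref{lem:concentrSelfAdjoint} gives $\|R\|\le\tfrac{2B\beta}{3m}+\sqrt{2\|\mcV\|\beta/m}$ with $\beta$ of order $\log(8 b_\gamma\lambda^{-\gamma}/\delta)$, and under \eqref{eq:subsamLevLS} the two summands are at most $\tfrac12$ and $\tfrac16$, so $\|R\|\le\tfrac23$ as needed.

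I expect the crux to be this leverage-score cancellation: the whole point is that the ALS reweighting flattens the heavy rows of the ridge-whitened operator $A$ \emph{exactly} (so $\|w_j\|^2\le L^2\mcNx$, not something scaling with $n$), which is what decouples the required subsample size from the ambient dimension. A secondary subtlety is keeping the intrinsic-dimension factor $\tr\mcV/\|\mcV\|$ inside $\beta$ under control by $b_\gamma\lambda^{-\gamma}$; in degenerate regimes ($\|\TX\|\ll\lambda$) one must be a bit careful here, and the cleanest fix --- exactly as in Lemma~\ref{lem:OperDiffProd} --- is to restrict the whole argument to the block of $\SX$ whose singular values exceed $\sqrt\lambda$, which has rank at most $2\,\mcNx$ and on which $A^*A$ has norm $\ge\tfrac12$; the small-singular-value remainder contributes at most $\sqrt\lambda$ to $\|(I-\proj)\TX^{1/2}\|$, and the constants can then be arranged to land at $3\lambda$. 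The remaining work --- the reduction of the first paragraph and the constant-chasing --- is routine.
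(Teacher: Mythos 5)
Your proposal is correct and follows essentially the same route as the paper's own proof: reduce to showing $\|\TXL^{-1/2}(\TX-W^*W)\TXL^{-1/2}\|\leq 2/3$ via $I-\proj\preceq\lambda(W^*W+\lambda)^{-1}$ and a Neumann-series argument, then apply the operator Bernstein inequality (Lemma \ref{lem:concentrSelfAdjoint}) to the rank-one terms, with the ALS leverage-score cancellation giving $\|w_j\|_{\HK}^2=l_i(\lambda)/q_i\leq L^2\mcNx$ and the intrinsic-dimension factor in $\beta$ controlled by $b_\gamma\lambda^{-\gamma}$ (your $A^*A=\TXL^{-1/2}\TX\TXL^{-1/2}$ is the paper's $\bK(\bK+\lambda I)^{-1}$ up to the usual $\SX\SX^*$ versus $\SX^*\SX$ identification). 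The only minor difference is the degenerate regime $\lambda\geq\|\TX\|$: the paper dispatches it with the trivial bound $\|(I-\proj)\TX^{1/2}\|^2\leq\|\TX\|\leq\lambda$ rather than re-introducing the singular-value truncation (which, if used in the non-degenerate regime, would in fact cost an extra additive $\sqrt{\lambda}$ and spoil the constant $3$), but your main argument never needs the truncation, so this is inessential.
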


\begin{proof}
	If we choose $u= 0$ in the proof of Lemma \ref{lem:OperDiffProd}, then $\SX = \mcS_1$ and $\mcS_2 = 0$. Similarly, $\TX = T_1$.
	In this case, \eqref{eq:s7} reads as
	\begin{align*}
	\TXL^{-{1\over 2}}(\TX - W^*W )\TXL^{-{1\over 2}} 
	=& V (\Sigma^2 + \lambda I)^{-1/2} \Sigma  ( I - U^* \skt^* \skt U) \Sigma (\Sigma^2 + \lambda I)^{-1/2} V^*. 
	\end{align*}
	Thus, using $V^* V = I$, $U^*U = I$ and $U$ is of full column rank,
	\begin{align*}
	\|\TXL^{-{1\over 2}}(\TX - W^*W )\TXL^{-{1\over 2}} \|
	\leq & \|V\| \|U^*U (\Sigma^2 + \lambda I)^{-1/2} \Sigma U^* ( I -  \skt^* \skt)U\Sigma (\Sigma^2 + \lambda I)^{-1/2} U^*U \| \\
	\leq & \| U (\Sigma^2 + \lambda I)^{-1/2} \Sigma U^* ( I -  \skt^* \skt)U\Sigma (\Sigma^2 + \lambda I)^{-1/2} U^*\|. 
	\end{align*}
	Using $\bK: = \bK_{ \bx \bx} = \SX \SX^* = U \Sigma^2 U^*,$ we get
	\begin{align*}
	\|\TXL^{-{1\over 2}}(\TX - W^*W )\TXL^{-{1\over 2}} \|
	\leq & \| \left( \bK(\bK + \lambda I)^{-1}\right)^{1/2}( I -  \skt^* \skt)\left( \bK(\bK + \lambda I)^{-1}\right)^{1/2}\|. 
	\end{align*}
	Letting $\mcX_i = \left( \bK(\bK + \lambda I)^{-1}\right)^{1/2} \ba_i \ba_i^{*}  \left( \bK(\bK + \lambda I)^{-1}\right)^{1/2}$, it is easy to prove that $\mE [\ba_i \ba_i^{*}] = I,$ according to the definition of ALS Nystr\"{o}m subsampling. Then
	the above inequality can be written as
	\begin{align*}
	\|\TXL^{-{1\over 2}}(\TX - W^*W )\TXL^{-{1\over 2}} \|
	\leq & \| {1\over m} \sum_{i=1}^m (\mE[\mcX_i] - \mcX_i )\|. 
	\end{align*}
	A simple calculation shows that 
	\begin{align*}
	\| \mcX_i\| =&  \ba_i^{*}  \left( \bK(\bK + \lambda I)^{-1}\right)\ba_i  \leq \max_{j \in [n]} {\left( \bK(\bK + \lambda I)^{-1}\right)_{jj} \over q_{j} }\\
	=&
	\max_{j \in [n]}  {l_j(\lambda) \over  q_j}  =  \max_{j \in [n]}	{l_j(\lambda) \sum_k \hat{l}_{k}(\lambda)   \over  \hat{l}_{j}(\lambda) }  \leq L^2 \sum_j {l}_{j}(\lambda) = L^2 \tr(\bK \bK_\lambda^{-1}),
	\end{align*}
	and 
	$$
	\mE[\mcX_i^2] = \mE[\ba_i^{*}  \left( \bK(\bK + \lambda I)^{-1}\right)\ba_i \mcX_i] \leq L^2 \tr(\bK \bK_\lambda^{-1}) \mE[\mcX_i] = L^2 \tr(\bK \bK_\lambda^{-1}) \bK\bK_\lambda^{-1}.
	$$
	Thus, 
	$$
	\| \mE[\mcX_i] - \mcX_i \|   \leq   \mE\| \mcX_i\| +  \|\mcX_i \| \leq 2 L^2 \tr(\bK\bK_\lambda^{-1}), 
	$$
	and 
	$$
	\mE\Big[\big(\mcX_i - \mE[\mcX_i]\big)^2\Big]  \preceq \mE[\mcX_i^2]  \preceq  L^2 \tr(\bK \bK_\lambda^{-1}) \bK\bK_\lambda^{-1}. 
	$$
	Letting $\mcV = L^2 \tr(\bK \bK_\lambda^{-1}) \bK\bK_\lambda^{-1} ,$ we have 
	$$\|\mcV\| \leq L^2 \tr(\bK \bK_\lambda^{-1}) ,
	$$
	and 
	$$
	{\tr(\mcV) \over \|\mcV\|} =   {\tr(\bK \bK_\lambda^{-1}) \over \| \bK \bK_\lambda^{-1}\|} = {\tr(\bK \bK_\lambda^{-1}) \left( 1 + {\lambda \over \|\bK\|}\right)}.
	$$
	Applying Lemma \ref{lem:concentrSelfAdjoint}, noting that $\tr(\bK \bK_\lambda^{-1}) = \tr(\TX \TXL^{-1})$ and $\|\bK\| = \|\TX\|$ as $\TX = \SX^*\SX$, we get that there exists a subset $U_\bx \in \mR^{m \times n}$ with measure at least $1 -\delta$ such that for all $\skt \in U_\bx,$
	\begin{align*}
	\|\TXL^{-{1\over 2}}(\TX - W^*W )\TXL^{-{1\over 2}} \|
	\leq {4 L^2 \tr(\TX \TXL^{-1}) \beta \over 3 m} + \sqrt{2 L^2 \tr(\TX \TXL^{-1}) \beta \over m }, \quad  \beta = \log {4 \tr(\TX \TXL^{-1})(1 + \lambda /\|\TX\|) \over \delta}.
	\end{align*}
	If $\lambda \leq \|\TX\|,$ 	 using Condition \eqref{eq:empEffDim},  we have
	$$\beta \leq \log {4 b_{\gamma} \lambda^{-\gamma} (1+ \lambda/\|\TX\|) \over  \delta} \leq \log {8 b_{\gamma} \lambda^{-\gamma} \over  \delta},
	$$
	and, combining with \eqref{eq:subsamLevLS},
	$$
	{4 L^2 \tr(\TX \TXL^{-1}) \beta \over 3 m} + \sqrt{2 L^2 \tr(\TX \TXL^{-1}) \beta \over m } \leq {2\over 3}.
	$$
	Thus, 
	\bea
	\left\| \TXL^{-1/2}(\TK - M)\TXL^{-1/2}  \right\| \leq {2\over 3}, \quad \forall \skt \in U_{\bx}.
	\eea
	Following from \eqref{eq:s4} and \eqref{eq:s8}, one can prove \eqref{eq:o3} for the case $\lambda\leq \|\TX\|$. The proof for the case $\lambda \geq \|\TX\|$ is trivial:
	$$
	\|(I - \proj )\TX^{1\over2}\|^2 \leq \|I - \proj\|^2 \|\TX^{1\over2}\|^2 \leq \|\TX\| \leq \lambda.
	$$
	The proof is complete.
\end{proof}

With the above lemma, and using a similar argument as that for Corollary \ref{cor:3}, we can prove Corollary \ref{cor:ALSnyReg}.
We thus skip it.

\section{Learning with Kernel Methods}\label{app:learning}
Let the input space $\Xi$ be a closed subset of Euclidean space $\mR^d$, the output space $Y \subseteq \mR$. Let $\mu$ be an unknown but fixed Borel probability measure on $\Xi \times Y$. Assume that $\mathbf \{(\xi_i, y_i)\}_{i=1}^m$ are i.i.d. from the distribution  $\mu$. A reproducing kernel $K$ is a symmetric function $K: \Xi
\times \Xi \to \mR$ such that $(K(u_i, u_j))_{i, j=1}^\ell$ is
positive semidefinite for any finite set of points
$\{u_i\}_{i=1}^\ell$ in $\Xi$. The kernel $K$ defines a reproducing
kernel Hilbert space (RKHS) $(\mathcal{H}_K, \|\cdot\|_K)$ as the
completion of the linear span of the set $\{K_{\xi}(\cdot):=K(\xi,\cdot):
\xi\in \Xi\}$ with respect to the inner product $\la K_{\xi},
K_u\ra_{K}:=K(\xi,u).$ For any $f \in \mathcal{H}_K$, the reproducing property holds: $f(\xi) = \la K_{\xi}, f\ra_K.$

\begin{Exa}
	[Sobolev Spaces]
	Let $X=[0,1]$ and the kernel
	$$
	K(x,x') =
	\begin{cases}
	(1-y)x, & x\leq y; \\
	(1-x)y, & x \geq y.
	\end{cases}
	$$
	Then the kernel induces a Sobolev Space $\HK = \{f : X \to \mR | f \mbox{ is absolutely continuous }, f(0) = f(1) =
	0, f \in L^2(X)\}. $
\end{Exa}
In learning with kernel methods, one considers the following minimization problem
$$ \inf_{f\in \mathcal{H}_K} \int_{\Xi \times Y} (f(\xi) - y)^2 d\mu(\xi,y).$$
Since $f(\xi) = \la K_{\xi},f\ra_K $ by the reproducing property, the above can be rewritten as
$$ \inf_{f\in \mathcal{H}_K} \int_{\Xi \times Y} (\la f, K_{\xi} \ra_K - y)^2 d\mu(\xi,y).$$
Letting $X = \{K_{\xi}: \xi \in \Xi\}$ and defining another probability measure
$\rho(K_{\xi},y) = \mu(\xi,y)$, the above reduces to the learning setting in Section \ref{sec:learning}.

\end{document}